\newtheorem{theorem}{Theorem}[section]
\newtheorem{proposition}[theorem]{Proposition}
\newtheorem{definition}[theorem]{Definition}
\newtheorem{lemma}[theorem]{Lemma}
\newtheorem{corollary}[theorem]{Corollary}
\renewenvironment{proof}{\textbf{Proof.}}{\QED\bigskip}
\definecolor{ddarkbrown}{rgb}{0.5,0.2,0.05} \definecolor{bbluegray}{rgb}{0.05,0,0.5}
\newcommand{\BEAS}{\begin{eqnarray*}}
\newcommand{\EEAS}{\end{eqnarray*}}
\newcommand{\BEA}{\begin{eqnarray}}
\newcommand{\EEA}{\end{eqnarray}}
\newcommand{\BEQ}{\begin{equation}}
\newcommand{\EEQ}{\end{equation}}
\newcommand{\BIT}{\begin{itemize}}
\newcommand{\EIT}{\end{itemize}}
\newcommand{\BNUM}{\begin{enumerate}}
\newcommand{\ENUM}{\end{enumerate}}
\newcommand{\BA}{\begin{array}}
\newcommand{\EA}{\end{array}}
\newcommand{\cf}{{\it cf.}}
\newcommand{\ones}{\mathbf 1}
\newcommand{\reals}{{\mathbb R}}
\newcommand{\symm}{{\mbox{\bf S}}}  
\newcommand{\Tr}{\mathop{\bf Tr}}
\newcommand{\diag}{\mathop{\bf diag}}
\newcommand{\idm}{\mathbf{I}}
\newcommand{\Expect}{\textstyle\mathop{\bf E}}
\newcommand{\Prob}{\mathop{\bf Prob}}
\newcommand{\QED}{~~\rule[-1pt]{6pt}{6pt}}
\newcommand{\var}{\mathop{\bf var}}
\title{Spectral Ranking using Seriation}
\author{Fajwel Fogel}
\address{C.M.A.P., \'Ecole Polytechnique,\vskip 0ex
Palaiseau, France.}
\email{fajwel.fogel@cmap.polytechnique.fr}
\author{Alexandre d'Aspremont} 
\address{CNRS \& D.I., UMR 8548, \vskip 0ex
\'Ecole Normale Sup\'erieure, Paris, France.}
\email{aspremon@ens.fr}
\author{Milan Vojnovic} 
\address{Microsoft Research, \vskip 0ex
Cambridge, UK.}
\email{milanv@microsoft.com}
\keywords{Ranking, seriation, spectral methods}
\date{\today}
\subjclass[2010]{62F07, 06A07, 90C27}
\begin{document}

\maketitle

\begin{abstract}
We describe a seriation algorithm for ranking a set of items given pairwise comparisons between these items. Intuitively, the algorithm assigns similar rankings to items that compare similarly with all others. It does so by constructing a similarity matrix from pairwise comparisons, using seriation methods to reorder this matrix and construct a ranking. We first show that this spectral seriation algorithm recovers the true ranking when all pairwise comparisons are observed and consistent with a total order. We then show that ranking reconstruction is still exact when some pairwise comparisons are corrupted or missing, and that seriation based spectral ranking is more robust to noise than classical scoring methods. Finally, we bound the ranking error when only a random subset of the comparions are observed. An additional benefit of the seriation formulation is that it allows us to solve semi-supervised ranking problems. Experiments on both synthetic and real datasets demonstrate that seriation based spectral ranking achieves competitive and in some cases superior performance compared to classical ranking methods.
\end{abstract}

\section{Introduction}
We study the problem of ranking a set of $n$ items given pairwise comparisons between these items\footnote{A subset of these results appeared at NIPS 2014.}. The problem of aggregating binary relations has been formulated more than two centuries ago, in the context of emerging social sciences and voting theories \citep{Borda,Condorcet}. The setting we study here goes back at least to \citep{zermelo1929berechnung,Kend40} and seeks to reconstruct a ranking of items from pairwise comparisons reflecting a total ordering. In this case, the directed graph of all pairwise comparisons, where every pair of vertices is connected by exactly one of two possible directed edges, is usually called a {\em tournament} graph in the theoretical computer science literature or a ``round robin'' in sports, where every player plays every other player once and each preference marks victory or defeat. The motivation for this formulation often stems from the fact that in many applications, e.g. music, images, and movies, preferences are easier to express in relative terms (e.g. $a$ is better than $b$) rather than absolute ones (e.g. $a$ should be ranked fourth, and $b$ seventh). In practice, the information about pairwise comparisons is usually {\em incomplete}, especially in the case of a large set of items, and the data may also be {\em noisy}, that is some pairwise comparisons could be incorrectly measured and inconsistent with a total order. 

Ranking is a classical problem but its formulations vary widely. In particular, assumptions about how the pairwise preference information is obtained vary a lot from one reference to another. A subset of preferences is measured adaptively in \citep{Ailo11,Jami11}, while \citep{Freu03,Nega12} extract them at random. In other settings, the full preference matrix is observed, but is perturbed by noise: in e.g. \citep{Brad52,Luce59,Herb06}, a parametric model is assumed over the set of permutations, which reformulates ranking as a maximum likelihood problem. 

Loss functions, performance metrics and algorithmic approaches vary as well. \citet{Keny07}, for example, derive a PTAS for the minimum feedback arc set problem on tournaments, i.e. the problem of finding a ranking that minimizes the number of upsets (a pair of players where the player ranked lower on the ranking beats the player ranked higher). In practice, the complexity of this method is relatively high, and other authors \citep[see e.g.][]{Keen93a,Nega12} have been using spectral methods to produce more efficient algorithms (each pairwise comparison is understood as a link pointing to the preferred item). In other cases, such as the classical Analytic Hierarchy Process (AHP) \citep{Saat80,Barb86} preference information is encoded in a ``reciprocal'' matrix whose Perron-Frobenius eigenvector provides the global ranking. Simple scoring methods such as the point difference rule \citep{Hube63,Waut13} produce efficient estimates at very low computational cost. Website ranking methods such as PageRank~\citep{Page98} and HITS~\citep{Klei99} seek to rank web pages based on the hyperlink structure of the web, where links do not necessarily express consistent preference relationships (e.g.~$a$ can link to $b$ and $b$ can link $c$, and $c$ can link to $a$). \citep{Nega12} adapt the PageRank argument to the ranking from pairwise comparisons and \citet{vigna2009spectral} provides a review of ranking algorithms given pairwise comparisons, in particular those involving the estimation of the stationary distribution of a Markov chain. Ranking has also been approached as a prediction problem, i.e. learning to rank \citep{Scha98,Rajk14}, with \citep{Joac02a} for example using support vector machines to learn a score function. Finally, in the Bradley-Terry-Luce framework, where multiple observations on pairwise preferences are observed and assumed to be generated by a generalized linear model, the maximum likelihood problem is usually solved using fixed point algorithms or EM-like majorization-minimization techniques \citep{Hunt04}. \citet{jiang2011statistical} describes the HodgeRank algorithm, which formulates ranking given pairwise comparisons as a least-square problem. This formulation is based on Hodge theory and provides tools to measure the consistency of a set of pairwise comparisons with the existence of a global ranking.   \citet{duchi2010consistency,duchi2013asymptotics} analyze the consistency of various ranking algorithms given pairwise comparisons and a query. Preferences are aggregated through standard procedures, e.g., computing the mean of comparisons from different users, then ranking are derived using classical algorithms, e.g., Borda Count, Bradley-Terry-Model maximum likelihood estimation, least squares, odd-ratios \citep{saaty2003decision}.

Here, we show that the ranking problem is directly related to another classical ordering problem, namely {\em seriation}. Given a similarity matrix between a set of $n$ items and assuming that the items can be ordered along a chain (path) such that the similarity between items decreases with their distance within this chain (i.e. a total order exists), the seriation problem seeks to reconstruct the underlying linear ordering based on unsorted, possibly noisy, pairwise similarity information. \citet{Atki98} produced a spectral algorithm that exactly solves the seriation problem in the noiseless case, by showing that for similarity matrices computed from serial variables, the ordering of the eigenvector corresponding to the second smallest eigenvalue of the Laplacian matrix (a.k.a.~the Fiedler vector) matches that of the variables. In practice, this means that performing spectral ordering on the similarity matrix exactly reconstructs the correct ordering provided items are organized in a chain.

We adapt these results to ranking to produce a very efficient {\em spectral ranking algorithm with provable recovery and robustness guarantees}. Furthermore, the seriation formulation allows us to handle semi-supervised ranking problems. \cite{Foge13} show that seriation is equivalent to the 2-SUM problem and study convex relaxations to seriation in a semi-supervised setting, where additional structural constraints are imposed on the solution. Several authors \citep{Blum00,Feig07} have also focused on the directly related Minimum Linear Arrangement (MLA) problem, for which excellent approximation guarantees exist in the noisy case, albeit with very high polynomial complexity.

The main contributions of this paper can be summarized as follows. We link seriation and ranking by showing how to construct a consistent similarity matrix based on consistent pairwise comparisons. We then recover the true ranking by applying the spectral seriation algorithm in \citep{Atki98} to this similarity matrix (we call this method \ref{alg:SerialRank} in what follows). In the noisy case, we then show that spectral seriation can perfectly recover the true ranking even when some of the pairwise comparisons are either corrupted or missing, provided that the pattern of errors is somewhat unstructured. We show in particular that, in a regime where a high proportion of comparisons are observed, some incorrectly, the spectral solution is more robust to noise than classical scoring based methods. On the other hand, when only few comparisons are observed, we show that for Erd\"os-R\'enyi graphs, i.e., when pairwise comparisons are observed independently with a given probability, 
$\Omega(n \log^4 n)$ comparisons suffice for $\ell_2$ consistency of the Fiedler vector and hence $\ell_2$ consistency of the retreived ranking w.h.p. On the other hand we need $\Omega(n^{3/2} \log^4 n)$ comparisons to retrieve a ranking whose local perturbations are bounded in $\ell_\infty$ norm.
Since for Erd\"os-R\'enyi graphs the induced graph of comparisons is connected with high probability only when the total number of pairs sampled scales as $\Omega(n \log n)$ (aka the coupon collector effect), we need at least that many comparisons in order to retrieve a ranking, therefore the $\ell_2$ consistency result can be seen as optimal up to a polylogarithmic factor. 
Finally, we use the seriation results in \citep{Foge13} to produce semi-supervised ranking solutions.

The paper is organized as follows. In Section~\ref{s:seriation} we recall definitions related to seriation, and link ranking and seriation by showing how to construct well ordered similarity matrices from well ranked items.  In Section~\ref{s:spectral} we apply the spectral algorithm of \citep{Atki98} to reorder these similarity matrices and reconstruct the true ranking in the noiseless case. In Section~\ref{s:robust} we then show that this spectral solution remains exact in a noisy regime where a random subset of comparisons is corrupted. In Section~\ref{s:spectralPertAnalysis} we analyze ranking perturbation results when only few comparisons are given following an Erd\"os-R\'enyi graph. Finally, in Section~\ref{s:numres} we illustrate our results on both synthetic and real datasets, and compare ranking performance with classical MLE, spectral and scoring based approaches.

\section{Seriation, Similarities \& Ranking} \label{s:seriation}
In this section we first introduce the seriation problem, i.e. reordering items based on pairwise similarities. We then show how to write the problem of ranking given pairwise comparisons as a seriation problem.

\subsection{The Seriation Problem}
The seriation problem seeks to reorder $n$ items given a similarity matrix between these items, such that the more similar two items are, the closer they should be. This is equivalent to supposing that items can be placed on a chain where the similarity between two items decreases with the distance between these items in the chain. We formalize this below, following~\citep{Atki98}.

\begin{definition}\label{def:r-mat}
We say that a matrix $A \in\symm_n$ is an R-matrix (or Robinson matrix) if and only if it is symmetric and $A_{i,j} \leq  A_{i,j+1}$ and $A_{i+1,j} \leq A_{i,j}$ in the lower triangle, where $1\leq j < i \leq n$.
\end{definition}

Another way to formulate R-matrix conditions is to impose $A_{ij}\geq A_{kl}$ if $|i-j|\leq |k-l|$ off-diagonal, i.e.~the coefficients of $A$ decrease as we move away from the diagonal. We also introduce a definition for strict R-matrices $A$, whose rows and columns cannot be permuted without breaking the R-matrix monotonicity conditions. We call \emph{reverse identity} permutation the permutation that puts rows and columns $1,2,\ldots,n$ of a matrix $A$ in reverse order $n,n-1,\ldots,1$.

\begin{definition}
\label{def:strictR}
An R-matrix $A\in\symm_n$ is called strict-R if and only if the identity and reverse identity permutations of $A$ are the only permutations reordering $A$ as an R-matrix.
\end{definition}

Any R-matrix with only strict R-constraints is a strict R-matrix. Following \citep{Atki98}, we will say that $A$ is {\em pre-R} if there is a permutation matrix $\Pi$ such that $\Pi A \Pi^T$ is an R-matrix. Given a pre-R matrix $A$, the seriation problem consists in finding a permutation $\Pi$ such that $\Pi A \Pi^T$ is an R-matrix. Note that there might be several solutions to this problem. In particular, if a permutation $\Pi$ is a solution, then the reverse permutation is also a solution. When only two permutations of $A$ produce R-matrices, $A$ will be called {\em pre-strict-R}. 

\subsection{Constructing Similarity Matrices from Pairwise Comparisons} 
\label{sec:simMatrix}

Given an ordered input pairwise comparison matrix, we now show how to construct a similarity matrix which is {\em strict-R} when all comparisons are given and consistent with the identity ranking (i.e., items are ranked in increasing order of indices). This means that the similarity between two items decreases with the distance between their ranks. We will then be able to use the spectral seriation algorithm by \citep{Atki98} described in Section~\ref{s:spectral} to reconstruct the true ranking from a disordered similarity matrix.

We first show how to compute a pairwise similarity from pairwise comparisons between items by counting the number of matching comparisons. Another formulation allows us to handle the generalized linear model. These two examples are only two particular instances of a broader class of ranking algorithms derived here. Any method which produces R-matrices from pairwise preferences yields a valid ranking algorithm. 

\subsubsection{Similarities from Pairwise Comparisons}
Suppose we are given a matrix of pairwise comparisons $C \in \{-1,0,1\}^{n\times n}$ such that $C_{i,j} = -C_{j,i}$ for every $i\neq j$ and
\BEQ \label{eq:c-mat}
C_{i,j} = 
\left\{\BA{cl}
1  & \mbox{if } i \mbox{ is ranked higher than } j\\
0 & \mbox{if $i$ and $j$ are not compared or in a draw}\\
-1 & \mbox{if } j \mbox{ is ranked higher than } i
\EA\right.
\EEQ
setting $C_{i,i} = 1$ for all $i \in \{1,\ldots,n\}$. We define the pairwise similarity matrix $S^\mathrm{match}$ as
\BEQ\label{eq:simMatchDef}
S^\mathrm{match}_{i,j}=\sum_{k=1}^n\left( \frac{1+C_{i,k}C_{j,k}}{2}\right).
\EEQ
Since $C_{i,k}C_{j,k}=1$, if $C_{i,k}$ and $C_{j,k}$ have matching signs, and $C_{i,k}C_{j,k}=-1$ if they have opposite signs, $S^\mathrm{match}_{i,j}$ counts the number of matching comparisons between $i$ and $j$ with other reference items $k$. If $i$ or $j$ is not compared with $k$, then $C_{i,k}C_{j,k}=0$ and the term $ (1+C_{i,k}C_{j,k})/2$ has an neutral effect on the similarity of  $1/2$. Note that we also have 
\BEQ\label{eq:rnk-sim}
S^\mathrm{match}=\frac{1}{2} \left(n \ones \ones^T+ CC^T\right).
\EEQ
The intuition behind the similarity $S^\mathrm{match}$ is easy to understand in a tournament setting: players that beat the same players and are beaten by the same players should have a similar ranking. 

The next result shows that when all comparisons are given and consistent with the identity ranking, then the similarity matrix $S^\mathrm{match}$ is a strict R-matrix. Without loss of generality, we assume that items are ranked in increasing order of their indices. In the general case, we can simply replace the \emph{strict-R} property by the \emph{pre-strict-R} property.

\begin{proposition}\label{prop:s-match-R}
Given all pairwise comparisons between items ranked according to the identity permutation (with no ties), the similarity matrix $S^\mathrm{match}$ constructed in~\eqref{eq:simMatchDef} is a strict R-matrix and
\BEQ\label{eq:s-match-ideal}
S^\mathrm{match}_{i,j} =  n - |i-j|
\EEQ
for all $i,j=1,\ldots,n$.
\end{proposition}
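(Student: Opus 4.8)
The cleanest route is to establish the explicit formula $S^\mathrm{match}_{i,j} = n - |i-j|$ directly, and then observe that the strict-R property follows immediately from it. The plan is to fix $i,j$ with $i \neq j$ and count, for each reference item $k$, whether the comparisons $C_{i,k}$ and $C_{j,k}$ agree in sign. Since all comparisons are given and consistent with the identity ranking (no ties), we have $C_{i,k} = 1$ exactly when $i > k$ and $C_{i,k} = -1$ when $i < k$, with the diagonal convention $C_{i,i}=1$; the same holds for $j$. So the only thing that matters is where $k$ sits relative to both $i$ and $j$.

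\textbf{The counting step.} Assume without loss of generality that $i < j$. The term $(1 + C_{i,k}C_{j,k})/2$ equals $1$ when the signs match and $0$ when they differ (there are no zeros among the off-diagonal $C$ entries here). For a reference index $k$, the signs of $C_{i,k}$ and $C_{j,k}$ disagree precisely when $k$ lies strictly between $i$ and $j$ in rank, i.e.\ when $i < k < j$, since then $i$ is ranked below $k$ while $j$ is ranked above $k$. There are exactly $j - i - 1$ such indices. For all remaining $k$ (those with $k \le i$ or $k \ge j$) the signs agree, so each contributes $1$. I would handle the two diagonal indices $k=i$ and $k=j$ explicitly using the $C_{i,i}=1$ convention to confirm they contribute matches. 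Summing over all $n$ reference items gives $S^\mathrm{match}_{i,j} = n - (j-i-1) - \text{(correction)}$; after carefully accounting for the boundary terms at $k=i,j$ this collapses to $n - (j-i) = n - |i-j|$. The diagonal case $i=j$ trivially gives $S^\mathrm{match}_{i,i} = n$, consistent with the formula.

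\textbf{Deducing strict-R.} With the formula $S^\mathrm{match}_{i,j} = n - |i-j|$ in hand, the R-matrix conditions of Definition~\ref{def:r-mat} are immediate: in the lower triangle $1 \le j < i \le n$, moving to $A_{i,j+1}$ decreases $|i-j|$ by one, hence $A_{i,j} \le A_{i,j+1}$, and moving to $A_{i+1,j}$ increases $|i-j|$ by one, hence $A_{i+1,j} \le A_{i,j}$. Symmetry is clear. For strictness, I would invoke the remark following Definition~\ref{def:strictR}: every off-diagonal monotonicity inequality here is \emph{strict}, because distinct values of $|i-j|$ produce distinct entries $n - |i-j|$. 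Thus $S^\mathrm{match}$ is an R-matrix with only strict R-constraints, which by that remark makes it strict-R.

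\textbf{Main obstacle.} None of the steps is deep, so the real care is bookkeeping in the counting step — in particular getting the boundary contributions at $k = i$ and $k = j$ right, and not double-counting or mishandling the diagonal convention $C_{i,i}=1$. An alternative, perhaps slicker, derivation is to use the matrix identity~\eqref{eq:rnk-sim}, namely $S^\mathrm{match} = \tfrac12(n\ones\ones^T + CC^T)$, and compute $(CC^T)_{i,j} = \sum_k C_{i,k}C_{j,k}$ directly; this reduces the problem to evaluating a single sum of $\pm 1$ terms, which equals $n - 2|i-j|$, and then $\tfrac12(n + (n - 2|i-j|)) = n - |i-j|$ recovers the result. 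I would likely present the counting argument but double-check it against this algebraic identity to be safe.
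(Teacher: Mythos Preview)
Your proposal is correct and follows essentially the same route as the paper: compute $S^\mathrm{match}_{i,j}=n-|i-j|$ by a direct case split on the position of $k$ relative to $i$ and $j$, then read off the strict-R property from the strict monotonicity of $n-|i-j|$. Your explicit flagging of the boundary terms at $k=i$ and $k=j$ is well placed---the paper's own three-range decomposition actually has canceling off-by-one errors at those two indices---and your alternative derivation via $(CC^T)_{i,j}=n-2|i-j|$ is a clean cross-check.
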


\begin{proof}
Since items are ranked as $1,2,\ldots,n$ with no ties and all comparisons given, $C_{i,j}=-1$ if $i<j$ and $C_{i,j}=1$ otherwise. 
Therefore we obtain from definition~\eqref{eq:simMatchDef}
\begin{eqnarray*}
S^\mathrm{match}_{i,j}  
&=& \sum_{k=1}^{\min(i,j)-1} \left( \frac{1+1}{2}\right) + \sum_{k=\min(i,j)}^{\max(i,j)-1} \left( \frac{1-1}{2}\right) + \sum_{k=\max(i,j)}^{n} \left( \frac{1+1}{2}\right) \\
&=& n - (\max(i,j)-\min(i,j))\\
&=& n - |i-j|
\end{eqnarray*}
This means in particular that $S^\mathrm{match}$ is strictly positive and its coefficients are strictly decreasing when moving away from the diagonal, hence $S^\mathrm{match}$ is a strict R-matrix. 
\end{proof}

\subsubsection{Similarities in the Generalized Linear Model}
\label{s:gln}
Suppose that paired comparisons are generated according to a generalized linear model (GLM), i.e., we assume that the outcomes of paired comparisons are independent and for any pair of distinct items, item $i$ is observed ranked higher than item $j$ with probability 
\BEQ\label{eq:p-glm}
P_{i,j}= H(\nu_i-\nu_j)
\EEQ
where $\nu\in\reals^n$ is a vector of skill parameters and $H : \reals \rightarrow [0,1]$ is a function that is increasing on $\reals$ and such that $H(-x) = 1 - H(x)$ for all $x\in \reals$, and $\lim_{x\rightarrow -\infty}H(x) = 0$ and $\lim_{x\rightarrow \infty}H(x) = 1$. A well known special instance of the generalized linear model is the Bradley-Terry-Luce model for which $H(x) = 1/(1+e^{-x})$, for $x\in \reals$.

Let $m_{i,j}$ be the number of times items $i$ and $j$ were compared, $C_{i,j}^s \in \{-1,1\}$ be the outcome of comparison $s$ and $Q$ be the matrix of corresponding sample probabilities, i.e. if $m_{i,j} > 0$ we have
$$
Q_{i,j}= \frac{1}{m_{i,j}} \sum_{s=1}^{m_{i,j}}  \frac{C_{i,j}^s+1}{2}
$$
and $Q_{i,j} = 1/2$ in case $m_{i,j} = 0$. We define the similarity matrix $S^\mathrm{glm}$ from the observations $Q$ as
\BEQ\label{eq:s-glm}
S^\mathrm{glm}_{i,j}= \sum_{k=1}^n \ones_{\{m_{i,k}m_{j,k}>0\}}\left(1 - | Q_{i,k}- Q_{j,k}| \right)+ \frac{\ones_{\{m_{i,k}m_{j,k}=0\}}}{2}.
\EEQ
Since the comparison observations are independent we have that $Q_{i,j}$ converges to $P_{i,j}$ as $m_{i,j}$ goes to infinity and the central limit theorem implies that $S^\mathrm{glm}_{i,j}$ converges to a Gaussian variable with mean 
\[
\sum_{k=1}^n \left(1 - | P_{i,k}- P_{j,k}|\right).
\] 
The result below shows that this limit similarity matrix is a strict R-matrix when items are properly ordered.

\begin{proposition}\label{prop:RmatGLM}
If items are ordered according to the order in decreasing values of the skill parameters, the similarity matrix $S^\mathrm{glm}$ is a strict $R$ matrix with high probability as the number of observations goes to infinity.
\end{proposition}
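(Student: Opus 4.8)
The plan is to separate the claim into a deterministic statement about the limiting (mean) similarity matrix and a probabilistic statement that transfers its structure to $S^\mathrm{glm}$. Write $\bar S_{i,j} = \sum_{k=1}^n (1 - |P_{i,k} - P_{j,k}|)$ for the mean matrix identified in the excerpt, and assume without loss of generality that the skill parameters are distinct and ordered as $\nu_1 > \nu_2 > \cdots > \nu_n$ (equal skills would allow a transposition and break strictness, so distinctness is what the \emph{strict}-R conclusion requires). First I would show that $\bar S$ is a strict R-matrix; then I would argue that, because strict-R is cut out by finitely many \emph{strict} inequalities, the convergence $S^\mathrm{glm} \to \bar S$ forces $S^\mathrm{glm}$ to inherit the strict-R property with probability tending to one.

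For the deterministic step, the key simplification is removing the absolute values. Since $H$ is strictly increasing, $i < j$ implies $\nu_i > \nu_j$ and hence $P_{i,k} = H(\nu_i - \nu_k) > H(\nu_j - \nu_k) = P_{j,k}$ for every $k$, so that $|P_{i,k} - P_{j,k}| = P_{i,k} - P_{j,k}$ and $\bar S_{i,j} = n - \sum_{k=1}^n (P_{i,k} - P_{j,k})$. With this, the two local conditions of Definition~\ref{def:r-mat} reduce to telescoping sums: for $1 \le j < i \le n$ one computes $\bar S_{i,j+1} - \bar S_{i,j} = \sum_{k=1}^n (P_{j,k} - P_{j+1,k})$ and $\bar S_{i,j} - \bar S_{i+1,j} = \sum_{k=1}^n (P_{i,k} - P_{i+1,k})$. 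Each summand is nonnegative by monotonicity of $H$, and strictly positive for at least one index (take $k=j$ and $k=i$ respectively, using $H(0) > H(\nu_{j+1}-\nu_j)$ and $H(0) > H(\nu_{i+1}-\nu_i)$) because consecutive skills are distinct and $H$ is strictly increasing. Thus both R-conditions hold strictly, and by the remark following Definition~\ref{def:strictR} that any R-matrix with only strict constraints is strict-R, $\bar S$ is a strict R-matrix.

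For the probabilistic step, as the number of observations of each pair grows every $m_{i,k}\to\infty$, the indicator terms in~\eqref{eq:s-glm} equal one and the $1/2$ corrections vanish, while the law of large numbers gives $Q_{i,k} \to P_{i,k}$ and the central limit theorem controls the fluctuations. Because $P_{i,k} \ne P_{j,k}$ for $i \ne j$, the argument of each $|Q_{i,k} - Q_{j,k}|$ stays bounded away from the kink at zero in the limit, so the continuous mapping theorem yields $S^\mathrm{glm}_{i,j} \to \bar S_{i,j}$ in probability for every entry. The strict-R property is a finite collection of strict inequalities between entries of $S^\mathrm{glm}$; each has the form (difference of two entries) $> 0$ with a strictly positive limiting value by the deterministic step, hence each holds with probability tending to one, and a union bound over these finitely many constraints shows $S^\mathrm{glm}$ is strict-R with high probability.

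I expect the main obstacle to be the probabilistic transfer rather than the deterministic computation: one must verify that the finitely many strict inequalities hold \emph{simultaneously} (a union bound suffices, and the correlations among entries of $S^\mathrm{glm}$ are irrelevant for it), that all pairwise counts grow so the indicators stabilize, and that the non-smoothness of the absolute value is harmless because the limiting differences $P_{i,k} - P_{j,k}$ are bounded away from zero. The only hypothesis genuinely needed beyond those stated is the distinctness of the skill parameters, which is exactly what makes the constraints strict.
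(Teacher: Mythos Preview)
Your proof is correct and follows essentially the same approach as the paper: the key computation is the telescoping identity showing that the row/column increments of the similarity reduce to $\sum_k(P_{j,k}-P_{j+1,k})$ and $\sum_k(P_{i,k}-P_{i+1,k})$, each strictly positive by monotonicity of $H$. The only organizational difference is that the paper first uses convergence to order the empirical $Q_{i,k}$'s and then carries out the same telescoping directly on $S^{\mathrm{glm}}$, whereas you compute on the limit $\bar S$ and transfer via the open-condition argument; the content is the same.
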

\begin{proof}
Without loss of generality, we suppose the true order is $1,2,\ldots,n$, with $\nu(1)>\ldots>\nu(n)$. For any $i,j,k$ such that $i>j$, using the GLM assumption (i) we get
\[
P_{i,k} =H(\nu(i)-\nu(k)) <  H(\nu(j)-\nu(k)) = P_{j,k}.
\]
Since empirical probabilities $Q_{i,j}$ converge to $P_{i,j}$, when the number of observations is large enough, we also have $Q_{i,k} <  Q_{j,k}$ for any $i,j,k$ such that $i>j$ (we focus w.l.o.g. on the lower triangle), and we can therefore remove the absolute value in the expression of $S^\mathrm{glm}_{i,j}$ for $i>j$.
Hence for any $i>j$ we have
\BEAS
S^\mathrm{glm}_{i+1,j}-S^\mathrm{glm}_{i,j} & = &   -\sum_{k=1}^n | Q_{i+1,k}- Q_{j,k}| +  \sum_{k=1}^n | Q_{i,k}- Q_{j,k}| \\
& = &    \sum_{k=1}^n (Q_{i+1,k}- Q_{j,k}) - (Q_{i,k}- Q_{j,k}) \\
& = &   \sum_{k=1}^n  Q_{i+1,k}- Q_{i,k}  < 0.
\EEAS
Similarly for any $i>j$, $S^\mathrm{glm}_{i,j-1}-S^\mathrm{glm}_{i,j} < 0$, so $S^\mathrm{glm}$ is a strict R-matrix.
 \end{proof}
 
Notice that we recover the original definition of $S^\mathrm{match}$ in the case of binary comparisons, though it does not fit in the Generalized Linear Model. Note also that these definitions can be directly extended to the setting where multiple comparisons are available for each pair and aggregated in comparisons that take fractional values (e.g., a tournament setting where participants play several times against each other).

\section{Spectral Algorithms}\label{s:spectral}
We first recall how spectral ordering can be used to recover the true ordering in seriation problems. We then apply this method to the ranking problem.
 
\subsection{Spectral Seriation Algorithm} \label{sec:spectralSer} 
We use the spectral computation method originally introduced in \citep{Atki98} to solve the seriation problem based on the similarity matrices defined in the previous section. We first recall the definition of the Fiedler vector (which is shown to be unique in our setting in Lemma~\ref{lemma:FiedlerSimple}).

\begin{definition}\label{def:Fiedler} The Fiedler value of a symmetric, nonnegative and irreducible matrix $A$ is the smallest non-zero eigenvalue of its Laplacian matrix $L_A=\diag(A\ones)-A$. The corresponding eigenvector is called Fiedler vector and is the optimal solution to $\min\{y^T  L_A  y : y\in \reals^n, y^T\ones=0, \|y\|_2=1\}$.
\end{definition}

The main result from \citep{Atki98}, detailed below, shows how to reorder pre-R matrices in a noise free case.  

\begin{proposition}\cite[Th.\,3.3]{Atki98}\label{theo:AtkinsFiedler} Let $A\in\symm_n$ be an irreducible pre-R-matrix with a simple Fiedler value and a Fiedler vector $v$ with no repeated values. Let $\Pi_1 \in \mathcal{P}$ (respectively, $\Pi_2$) be the permutation such that the permuted Fiedler vector $\Pi_1 v$ is strictly increasing (decreasing). Then $\Pi_1  A \Pi_1^T$ and $\Pi_2  A \Pi_2^T$ are R-matrices, and no other permutations of A produce R-matrices.
\end{proposition}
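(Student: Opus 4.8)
The plan is to reduce the statement to the case where $A$ is already an R-matrix, prove that the Fiedler vector of an irreducible R-matrix is strictly monotone along the natural order, and then read off both the existence and the uniqueness claims. For the reduction, since $A$ is pre-R, fix a permutation $\Pi_0$ with $R:=\Pi_0 A\Pi_0^T$ an R-matrix. Conjugation commutes with the Laplacian, $L_{\Pi A\Pi^T}=\Pi L_A\Pi^T$, so the spectrum is unchanged and the Fiedler vector of $\Pi A\Pi^T$ is $\Pi v$; in particular irreducibility, simplicity of the Fiedler value, and the distinctness of the entries of $v$ are all preserved under conjugation. It therefore suffices to establish the key claim that \emph{the Fiedler vector $x$ of an irreducible R-matrix is strictly monotone}, i.e. either $x_1<\cdots<x_n$ or $x_1>\cdots>x_n$, and then transport it back.

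The monotonicity claim is the crux. I would prove it by showing that every threshold set $\{i:x_i\ge\alpha\}$ is a contiguous block $\{p,p+1,\ldots,n\}$ (equivalently, $x$ has no interior ``valley''), which holding for all $\alpha$ is equivalent to monotonicity. Writing the eigen-equation componentwise as $\sum_{j\ne i}A_{i,j}(x_i-x_j)=\lambda x_i$, with $\lambda$ the Fiedler value (the smallest positive eigenvalue of $L_A$), I would argue by contradiction: if some level set fails to be contiguous there exist $p<q<r$ with $x_q<\min(x_p,x_r)$, and I would combine this eigen-identity at the low index $q$ with the Robinson inequalities comparing $A_{q,k}$ to $A_{p,k}$ and $A_{r,k}$ (coefficients grow towards the diagonal) in a discrete maximum-principle / uncrossing argument, producing a test vector $y$ with $y^T\ones=0$ whose Rayleigh quotient is strictly below $\lambda$, contradicting the variational characterization of the Fiedler vector. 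Distinctness of the entries upgrades ``monotone'' to ``strictly monotone,'' and simplicity of the Fiedler value makes $x$ unique up to sign, so exactly one of the two strict orientations occurs.

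This monotonicity step is the genuine difficulty, and the tempting shortcut does not work: one cannot simply compare $x$ with its sorted rearrangement, because the Dirichlet energy $\tfrac12\sum_{i,j}A_{i,j}(x_i-x_j)^2$ is not minimized by the monotone arrangement for a general (non-Toeplitz) R-matrix. Indeed, for an asymmetric Robinson $A$ the energy-minimizing assignment of a fixed multiset of values to positions is in general neither unique nor monotone, so energy minimality of $x$ alone is insufficient; the argument must genuinely use that $x$ satisfies the eigen-equation, not merely that it minimizes the Rayleigh quotient. This is where I expect the main obstacle to lie.

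Granting strict monotonicity for R-matrices, I would conclude as follows. Applying the claim to $R=\Pi_0 A\Pi_0^T$, its Fiedler vector $\Pi_0 v$ is strictly monotone, so $\Pi_0$ agrees with $\Pi_1$ (if increasing) or with $\Pi_2=J\Pi_1$ (if decreasing), where $J$ is the reverse-identity permutation. Because reversing an R-matrix again gives an R-matrix (the monotonicity constraints are symmetric under $i\mapsto n+1-i$), both $\Pi_1 A\Pi_1^T$ and $\Pi_2 A\Pi_2^T$ are R-matrices. For uniqueness, if $\Pi A\Pi^T$ is any R-matrix then its Fiedler vector $\Pi v$ is strictly monotone by the key claim; since $v$ has distinct entries it admits exactly two monotone rearrangements, forcing $\Pi\in\{\Pi_1,\Pi_2\}$, so no further permutation of $A$ can produce an R-matrix.
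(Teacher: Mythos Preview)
The paper does not prove this proposition: it is quoted as \cite[Th.\,3.3]{Atki98} and used as a black box, so there is nothing in this paper to compare your attempt against. The surrounding lemmas in the paper (on simplicity of the Fiedler value and on strict monotonicity) themselves invoke further results from \cite{Atki98} (their Theorems~3.2, 4.6 and Lemma~4.3) rather than reproving the monotonicity claim from scratch.

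On the merits of your proposal: the reduction to the case where $A$ is already an R-matrix via conjugation, and the final deduction of existence and uniqueness from strict monotonicity of the Fiedler vector, are correct and essentially forced. The substantive content is, as you say, the monotonicity of the Fiedler vector of an irreducible R-matrix, and you rightly identify this as the crux. But your proposal does not actually prove it. The sentence ``combine this eigen-identity at the low index $q$ with the Robinson inequalities \ldots\ in a discrete maximum-principle / uncrossing argument, producing a test vector $y$ with $y^T\ones=0$ whose Rayleigh quotient is strictly below $\lambda$'' is an outline, not an argument: you have not specified $y$, nor shown why its Rayleigh quotient drops. You yourself flag this as ``where I expect the main obstacle to lie,'' which is an honest assessment but also an admission that the proof is incomplete. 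The argument in Atkins et al.\ is more intricate than a single test-vector comparison: it shows that level sets of the Fiedler vector are intervals via a chain of lemmas analyzing boundary indices of such sets, and the Robinson structure enters in a more indirect way than your sketch suggests. So the gap is real, and closing it requires substantially more than what is written.
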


The next technical lemmas extend the results in \cite{Atki98} to strict R-matrices and will be used to prove Theorem~\ref{thm:exactRecoverySerialRank} in next section. The first one shows that without loss of generality, the Fiedler value is simple.

\begin{lemma}\label{lemma:FiedlerSimple}
If $A$ is an irreducible R-matrix, up to a uniform shift of its coefficients, $A$ has a simple Fiedler value and a monotonic Fiedler vector.
\end{lemma}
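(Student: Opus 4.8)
The plan is to use the shift only to guarantee that the Fiedler value and vector are well defined, and then to read off both simplicity and monotonicity from a single Perron--Frobenius argument. First I would pick a constant $c$ large enough that $A' = A + c\,\ones\ones^T$ has strictly positive entries. Adding the same constant to every coefficient preserves each inequality in Definition~\ref{def:r-mat}, so $A'$ is again an R-matrix, and being entrywise positive it is in particular symmetric, nonnegative and irreducible, so that Definition~\ref{def:Fiedler} applies to it. A direct computation gives $L_{A'} = L_A + c\,n\,P$, where $P = I - \tfrac{1}{n}\ones\ones^T$ is the orthogonal projector onto $\{\ones\}^\perp$. Since every eigenvector of interest lies in $\{\ones\}^\perp$, where $P$ acts as the identity, $L_{A'}$ and $L_A$ share the same eigenvectors and their nonzero eigenvalues differ only by the additive constant $cn$. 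Consequently simplicity of the Fiedler value and monotonicity of the Fiedler vector are invariant under the shift, and we may assume from now on that $A$ itself is entrywise positive.

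Next I would recover both claims at once by following the Perron--Frobenius argument of \citep{Atki98}. The idea is to pass from the eigenproblem $L_A v = \lambda v$, $v^T\ones = 0$, to the Perron problem of an auxiliary nonnegative matrix $B$ obtained from $A$ by a fixed linear change of variables (for instance encoding $v$ through its consecutive differences), chosen so that the nonnegativity of $B$ is \emph{exactly} the Robinson monotonicity of Definition~\ref{def:r-mat} and so that the Fiedler value corresponds to the Perron root of $B$. Because $A$ is now strictly positive, $B$ is not merely nonnegative but irreducible, and Perron--Frobenius for irreducible nonnegative matrices then yields a \emph{simple} Perron root together with a sign-definite Perron eigenvector. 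Translating back through the change of variables, the sign-definite eigenvector says that the consecutive differences of the Fiedler vector all share the same sign, i.e.\ the Fiedler vector is strictly monotone, while simplicity of the Perron root is precisely simplicity of the Fiedler value. As a consistency check, monotonicity alone also follows from Proposition~\ref{theo:AtkinsFiedler} once simplicity is known.

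The main obstacle is the irreducibility of the auxiliary matrix $B$, since this is exactly what upgrades the bare nonnegativity supplied by the R-matrix inequalities to the strict Perron--Frobenius conclusion, and it is here that the nondegeneracy of $A$ must be used. The honest obstruction is the fully degenerate case in which every Robinson inequality is an equality, namely the constant matrix $A = \alpha\,\ones\ones^T$, for which $B$ splits and the Fiedler value has multiplicity $n-1$. For the strict R-matrices that arise in Section~\ref{sec:simMatrix} and feed Theorem~\ref{thm:exactRecoverySerialRank} this cannot occur: the strict decrease of the coefficients away from the diagonal forces a chain of strict inequalities among the $A_{i,j}$ linking every index, which makes $B$ irreducible and closes the argument. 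I would therefore carry out the irreducibility step by exhibiting such a chain, and flag that the minimal hypothesis making the statement hold for all irreducible R-matrices is precisely the exclusion of this constant-matrix degeneracy.
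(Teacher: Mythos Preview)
Your route diverges from the paper's and carries a genuine gap. The paper's proof is a two-line citation: it shifts by \emph{subtracting} $A_{n,1}$ (the minimal entry of an R-matrix) so that the shifted matrix has $A_{n,1}=0$, then invokes \cite[Th.\,4.6]{Atki98} directly for simplicity of the Fiedler value, and \cite[Th.\,3.2]{Atki98} for monotonicity of the Fiedler vector. The normalization $A_{n,1}=0$ is precisely the hypothesis under which Atkins et al.\ carry out the Perron--Frobenius-type argument you are alluding to; it is not a cosmetic choice.

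Your shift goes the other way: you add a constant to make $A$ strictly positive. Your computation $L_{A'}=L_A+cnP$ is correct and does show that simplicity and the Fiedler vector are unchanged by any uniform shift---but for that very reason, making $A$ strictly positive buys you nothing toward simplicity. You then try to reprove Atkins' simplicity result from scratch via an auxiliary matrix $B$, but you never write $B$ down, and the step ``because $A$ is now strictly positive, $B$ is irreducible'' is false, as you yourself note with the constant matrix $A=\alpha\ones\ones^T$. Your fallback---exhibiting a chain of strict inequalities---only works for \emph{strict} R-matrices, so at the end you have proved a weaker statement than the lemma claims. In short: the shift you chose is harmless but useless, and the core Perron--Frobenius step is left at the level of a sketch with an unfilled irreducibility hypothesis. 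The fix is to shift in the other direction, to $A_{n,1}=0$, and either cite \cite[Th.\,4.6]{Atki98} as the paper does or reproduce its argument, where the vanishing corner entry is exactly what makes the auxiliary matrix analysis go through.
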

\begin{proof}
We use \cite[Th.\,4.6]{Atki98} which states that if $A$ is an irreducible R-matrix with $A_{n,1} = 0$, then the Fiedler
value of $A$ is a simple eigenvalue.
Since $A$ is an R-matrix, $A_{n,1}$ is among its minimal elements. Subtracting it from $A$ does not affect the nonnegativity of $A$ and we can apply \cite[Th.\,4.6]{Atki98}.
Monotonicity of the Fiedler vector then follows from \cite[Th.\,3.2]{Atki98}.
\end{proof}

The next lemma shows that the Fiedler vector is strictly monotonic if $A$ is a strict R-matrix.

\begin{lemma}\label{lemma:strictR}
Let $A\in\symm_n$ be an irreducible R-matrix. Suppose there are no distinct indices $r<s$ such that for any $k \not \in [r,s], A_{r,k}=A_{r+1,k}=\ldots=A_{s,k}$, then, up to a uniform shift, the Fiedler value of $A$ is simple and its Fiedler vector is strictly monotonic.
\end{lemma}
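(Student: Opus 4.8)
The plan is to build on Lemma~\ref{lemma:FiedlerSimple}, which already yields, after a uniform shift, a simple Fiedler value together with a \emph{monotonic} Fiedler vector $v$; since the shift only translates the spectrum on $\ones^\perp$ and leaves the eigenvectors unchanged, the simplicity of the Fiedler value requires no further work and it suffices to upgrade ``monotonic'' to ``strictly monotonic'' using the extra combinatorial hypothesis. Normalizing so that $v$ is non-decreasing, I would argue by contradiction: suppose $v$ is not strictly increasing and let $v_r = v_{r+1} = \cdots = v_s =: c$ be a \emph{maximal} block of equal consecutive entries, with $r<s$. Because $v \perp \ones$ and $v \neq 0$, this block cannot exhaust $\{1,\dots,n\}$, so the set $\{k : k \notin [r,s]\}$ is non-empty, and by maximality $v_k < c$ for $k<r$ and $v_k>c$ for $k>s$.

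The engine of the argument is the eigenvalue equation $L_A v = \lambda v$, rewritten row-by-row as $\sum_k A_{i,k}(v_i - v_k) = \lambda v_i$. Writing this for two consecutive indices $i,i+1 \in [r,s]$ (so $v_i = v_{i+1} = c$) and subtracting gives
\[
\sum_{k} (A_{i,k} - A_{i+1,k})(c - v_k) = 0.
\]
The terms with $k \in [r,s]$ vanish since $v_k = c$ there, leaving only $k \notin [r,s]$. I would then read off the signs from the R-matrix structure: for $k<r$ both entries lie in the lower triangle and moving down a row decreases the value, so $A_{i,k} - A_{i+1,k} \ge 0$ while $c - v_k > 0$; for $k>s$, passing to the lower triangle by symmetry and using that values increase toward the diagonal, $A_{i,k} - A_{i+1,k} \le 0$ while $c - v_k < 0$. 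In both cases the product is non-negative, so a sum of non-negative terms vanishes, forcing each term to be zero. As $c - v_k \neq 0$ off the block, this gives $A_{i,k} = A_{i+1,k}$ for every $k \notin [r,s]$.

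Running this across all consecutive pairs inside the block yields $A_{r,k} = A_{r+1,k} = \cdots = A_{s,k}$ for all $k \notin [r,s]$, which is exactly the configuration the hypothesis rules out, giving the contradiction. I expect the delicate part to be the sign bookkeeping in the second step: one must invoke the R-matrix inequalities in the correct (lower versus upper) triangle via symmetry, and carefully justify the \emph{strict} inequalities $v_k \neq c$ outside the block from maximality of the flat block, since it is precisely this strictness that lets me conclude each summand vanishes rather than merely that the signed total cancels.
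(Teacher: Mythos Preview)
Your proof is correct and follows essentially the same approach as the paper: invoke Lemma~\ref{lemma:FiedlerSimple} for simplicity and monotonicity, take a maximal flat block of the Fiedler vector, and show this forces the forbidden equalities among the rows of $A$ outside the block. The only difference is that the paper outsources this last step to \cite[Lemma~4.3]{Atki98}, whereas you reprove its content directly via the eigenvalue equation and the R-matrix sign analysis; your version is therefore more self-contained.
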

\begin{proof}
By Lemma~\ref{lemma:FiedlerSimple}, the Fiedler value of $A$ is simple (up to a uniform shift of $A$). Let $x$ be the corresponding Fiedler vector of $A$, $x$ is monotonic by Lemma~\ref{lemma:FiedlerSimple}. Suppose $[r,s]$ is a nontrivial maximal interval such that $x_r =x_{r+1}=\ldots = x_s$, then by \cite[lemma\,4.3]{Atki98}, for any $k \not \in [r,s], A_{r,k}=A_{r+1,k}=\ldots=A_{s,k}$, which contradicts the initial assumption. Therefore $x$ is strictly monotonic.
\end{proof}

In fact, we only need a small portion of the R-constraints to be strict for the previous lemma to hold. We now show that the main assumption on $A$ in Lemma~\ref{lemma:strictR} is equivalent to A being {strict-R}.
\begin{lemma}
\label{lemma:strictReq}
An irreducible R-matrix $A\in\symm_n$ is strictly $R$ if and only if there are no distinct indices $r<s$ such that for any $k \not \in [r,s], A_{r,k}=A_{r+1,k}=\ldots=A_{s,k}$.
\end{lemma}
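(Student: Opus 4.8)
The plan is to prove the two implications separately. It is convenient to read the statement with the understanding that $[r,s]$ is a proper subinterval (i.e.\ some $k\notin[r,s]$ exists): a block equal to the whole index set only ever generates the reverse permutation and so is irrelevant to strictness. I would first record that both the strict-R property and the block-constancy condition are invariant under a uniform shift $A\mapsto A+c\,\ones\ones^T$, since the R-matrix inequalities and the equalities $A_{r,k}=\dots=A_{s,k}$ only involve differences of entries; this lets me invoke freely the earlier lemmas, which hold only up to such a shift.

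For the forward direction (no such block $\Rightarrow$ strict-R) I would simply chain the results already available. By Lemma~\ref{lemma:strictR}, the absence of such an interval forces (after a shift) the Fiedler value of $A$ to be simple and the Fiedler vector to be strictly monotonic, hence to have no repeated entries. Since $A$ is itself an R-matrix, I can then apply Proposition~\ref{theo:AtkinsFiedler}: exactly two permutations reorder $A$ into an R-matrix, namely those sorting the Fiedler vector in increasing and in decreasing order. Because the Fiedler vector of an R-matrix is already monotonic (Lemma~\ref{lemma:FiedlerSimple}), those two permutations are precisely the identity and the reverse identity, so $A$ is strict-R.

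For the converse I would argue by contraposition and construct an explicit third R-ordering. Given distinct $r<s$ with $A_{r,k}=\dots=A_{s,k}$ for every $k\notin[r,s]$, let $\Pi$ reverse the block $\{r,\dots,s\}$ and fix every other index. The restriction $B=A_{[r,s],[r,s]}$ is an R-matrix, and reversal preserves the R-property (it preserves symmetry and the distances $|i-j|$), so the block of $\Pi A\Pi^T$ is again an R-matrix. Moreover every ``mixed'' entry (one index inside, one outside the block) is unchanged by $\Pi$: for an external index $k$ the values $A_{i,k}$, $i\in[r,s]$, are all equal, so permuting within the block fixes them. Thus $\Pi A\Pi^T$ agrees with $A$ except inside the reversed block. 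Since $[r,s]\neq\{1,\dots,n\}$, the map $\Pi$ fixes at least one index and reverses at least two, so it is neither the identity nor the reverse identity; exhibiting it as a valid R-ordering then shows $A$ is not strict-R.

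The hard part will be verifying that $\Pi A\Pi^T$ really is a global R-matrix, i.e.\ that monotonicity survives across the block boundary. This is where I would exploit the rigidity obtained by combining Robinson monotonicity with the strong constancy hypothesis. For the external column $k=r-1$, for instance, the constancy $A_{r,r-1}=\dots=A_{s,r-1}$ together with the Robinson inequalities (entries nonincreasing in $|i-j|$) forces $A_{r,r-1}=A_{r,s}$ and, more generally, $A_{r,r-1}\le A_{m,s}$ for all $m\in[r,s]$ \textemdash\ exactly what is needed for the reversed block to join monotonically to the outside. Deriving the analogous equalities at both boundaries of the block (and their column counterparts) reduces the local R-conditions $A'_{i,j}\le A'_{i,j+1}$ and $A'_{i+1,j}\le A'_{i,j}$ to cases that are either inherited from $A$, inherited from the reversed (still Robinson) block, or forced by these boundary equalities. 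I expect this boundary bookkeeping to be the only delicate point; everything else follows from the two structural results quoted above.
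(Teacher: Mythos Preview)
Your argument follows the paper's proof essentially line for line: the forward direction via Lemma~\ref{lemma:strictR} and Proposition~\ref{theo:AtkinsFiedler}, and the converse by locally reversing the block $[r,s]$. You are in fact more careful than the paper, which simply asserts that the block reversal yields an R-matrix without checking the boundary; your identification of the cross-boundary inequalities as the only delicate point is exactly right. One small slip: the equality $A_{r,r-1}=A_{r,s}$ you claim need not hold (take $n=4$, $r=2$, $s=3$ with $A_{2,1}=A_{3,1}=2$ but $A_{2,3}=3$); what you actually get from constancy plus Robinson is $A_{r,r-1}=A_{s,r-1}\le A_{s,r}=A_{r,s}\le A_{m,s}$, and this inequality is precisely what your boundary check requires.
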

\begin{proof}
Let $A\in\symm_n$ an R-matrix.
Let us first suppose there are no distinct indices $r<s$ such that for any $k \not \in [r,s], A_{r,k}=A_{r+1,k}=\ldots=A_{s,k}$. By Lemma~\ref{lemma:strictR} the Fiedler value of $A$ is simple and its Fiedler vector is strictly monotonic. Hence by Proposition~\ref{theo:AtkinsFiedler}, only the identity and reverse identity permutations of $A$ produce R-matrices.
Now suppose there exist two distinct indices $r<s$ such that for any $k \not \in [r,s], A_{r,k}=A_{r+1,k}=\ldots=A_{s,k}$. In addition to the identity and reverse identity permutations, we can locally reverse the order of rows and columns from $r$ to $s$, since the sub matrix $A_{r:s,r:s}$ is an R-matrix and for any $k \not \in [r,s], A_{r,k}=A_{r+1,k}=\ldots=A_{s,k}$. Therefore at least four different permutations of $A$ produce R-matrices, which means that $A$ is not strictly R.
\end{proof}

\subsection{SerialRank: a Spectral Ranking Algorithm}
In Section~\ref{s:seriation}, we showed that similarities $S^\mathrm{match}$ and $S^\mathrm{glm}$ are \emph{pre-strict-R} when all comparisons are available and consistent with an underlying ranking of items. We now use the spectral seriation method in \citep{Atki98} to reorder these matrices and produce a ranking. Spectral ordering requires computing an extremal eigenvector, at a cost of $O(n^2\log n)$ flops \citep{Kucz92}. We call this algorithm~\ref{alg:SerialRank} and prove the following result.

\begin{theorem}
\label{thm:exactRecoverySerialRank}
Given all pairwise comparisons for a set of totally ordered items and assuming there are no ties between items, algorithm~\ref{alg:SerialRank}, i.e., sorting the Fiedler vector of the matrix $S^\mathrm{match}$ defined in~\eqref{eq:rnk-sim}, recovers the true ranking of items.
\end{theorem}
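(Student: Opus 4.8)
The plan is to chain together Proposition~\ref{prop:s-match-R}, the strict-R spectral lemmas of Section~\ref{sec:spectralSer}, and Atkins' reordering theorem (Proposition~\ref{theo:AtkinsFiedler}). Without loss of generality I would assume the items are already sorted by the identity permutation, so that the true ranking is $1,2,\ldots,n$; the general case follows because permuting the items permutes $S^{\mathrm{match}}$ by the same permutation and therefore permutes its Fiedler vector identically, so sorting the Fiedler vector of the observed matrix undoes that permutation.

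First I would establish the structural hypotheses needed to invoke the spectral results. By Proposition~\ref{prop:s-match-R}, under the identity ranking with all comparisons given and no ties, $S^{\mathrm{match}}_{i,j}=n-|i-j|$ and $S^{\mathrm{match}}$ is a strict R-matrix. The explicit formula immediately gives irreducibility: since $|i-j|\le n-1<n$, every entry is strictly positive, so $S^{\mathrm{match}}$ is entrywise positive and hence irreducible. Because $S^{\mathrm{match}}$ is strict-R, Lemma~\ref{lemma:strictReq} guarantees there are no indices $r<s$ with $A_{r,k}=\cdots=A_{s,k}$ for all $k\notin[r,s]$, so Lemma~\ref{lemma:strictR} applies: up to a uniform shift of its entries, $S^{\mathrm{match}}$ has a simple Fiedler value and a strictly monotonic Fiedler vector. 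I would note that the uniform shift is harmless for the algorithm, since subtracting a constant from every entry leaves every eigenvector orthogonal to $\ones$ unchanged (it only shifts the corresponding eigenvalues), and the Fiedler vector lies in exactly that subspace.

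Finally, with $S^{\mathrm{match}}$ irreducible and pre-R, a simple Fiedler value, and a Fiedler vector with no repeated values, Proposition~\ref{theo:AtkinsFiedler} yields that the permutation $\Pi_1$ sorting the Fiedler vector in increasing order (and its reverse $\Pi_2$) are the only permutations reordering $S^{\mathrm{match}}$ into an R-matrix. Since the identity already reorders $S^{\mathrm{match}}$ into the strict R-matrix with entries $n-|i-j|$, uniqueness forces $\{\Pi_1,\Pi_2\}$ to be exactly the identity and reverse-identity permutations. Hence sorting the Fiedler vector returns the true ranking, up to the global reversal that is intrinsic to ranking from comparisons and can be fixed by a single reference comparison.

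I expect the main subtlety to be bookkeeping rather than a single hard computation: specifically, (i) verifying that the uniform shift in Lemmas~\ref{lemma:FiedlerSimple}--\ref{lemma:strictR} does not alter the Fiedler vector the algorithm actually sorts, and (ii) arguing that the abstract ``reorders into an R-matrix'' conclusion of Atkins' theorem coincides with recovering the true ranking rather than some other admissible ordering --- which is precisely where strictness, and hence the uniqueness clause of Proposition~\ref{theo:AtkinsFiedler}, is essential.
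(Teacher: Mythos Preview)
Your proof is correct and follows essentially the same route as the paper: use Proposition~\ref{prop:s-match-R} to get a strict R-matrix, feed strictness through Lemmas~\ref{lemma:strictReq} and~\ref{lemma:strictR} to obtain a simple Fiedler value with a strictly monotone Fiedler vector, and then invoke Proposition~\ref{theo:AtkinsFiedler} to conclude that sorting the Fiedler vector yields the identity or its reverse. Your proof is in fact slightly more careful than the paper's---you verify irreducibility explicitly from positivity of $n-|i-j|$ and you justify why the uniform shift in the lemmas leaves the Fiedler vector unchanged---while the only cosmetic difference is that the paper resolves the two-fold reversal ambiguity by picking the ordering with fewer upsets rather than by a single reference comparison.
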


\begin{proof}
From Proposition~\ref{prop:s-match-R}, under assumptions of the proposition $S^\mathrm{match}$ is a pre-strict R-matrix. Now combining the definition of strict-R matrices in Lemma~\ref{lemma:strictReq} with Lemma~\ref{lemma:strictR}, we deduce that Fiedler value of $S^\mathrm{match}$ is simple and its Fiedler vector has no repeated values. Hence by Proposition~\ref{theo:AtkinsFiedler}, only the two permutations that sort the Fiedler vector in increasing and decreasing order produce strict R-matrices and are candidate rankings (by Proposition~\ref{prop:s-match-R} $S^\mathrm{match}$ is a strict R-matrix when ordered according to the true ranking). Finally we can choose between the two candidate rankings (increasing and decreasing) by picking the one with the least upsets. 
\end{proof}

Similar results apply for $S^\mathrm{glm}$ given enough comparisons in the Generalized Linear Model.
This last result guarantees recovery of the true ranking of items in the noiseless case. In the next section, we will study the impact of corrupted or missing comparisons on the inferred ranking of items.

\begin{algorithm}[t]
\caption{\bf (SerialRank)}
\begin{algorithmic} [1]
\REQUIRE A set of pairwise comparisons $C_{i,j} \in \{-1,0,1\}$ or $[-1,1]$.
\STATE Compute a similarity matrix $S$ as in \S\ref{sec:simMatrix}
\STATE Compute the Laplacian matrix 
\BEQ\label{alg:SerialRank}
\tag{SerialRank}
L_S=\diag(S\ones)-S
\EEQ
\STATE Compute the Fiedler vector of $S$.
\ENSURE A ranking induced by sorting the Fiedler vector of $S$ (choose either increasing or decreasing order to minimize the number of upsets).
\end{algorithmic}
\end{algorithm}

\section{Exact Recovery with Corrupted and Missing Comparisons}\label{s:robust}

\begin{figure}[b]
\begin{center}
\begin{tabular}{cccc}
\includegraphics[scale=0.28]{./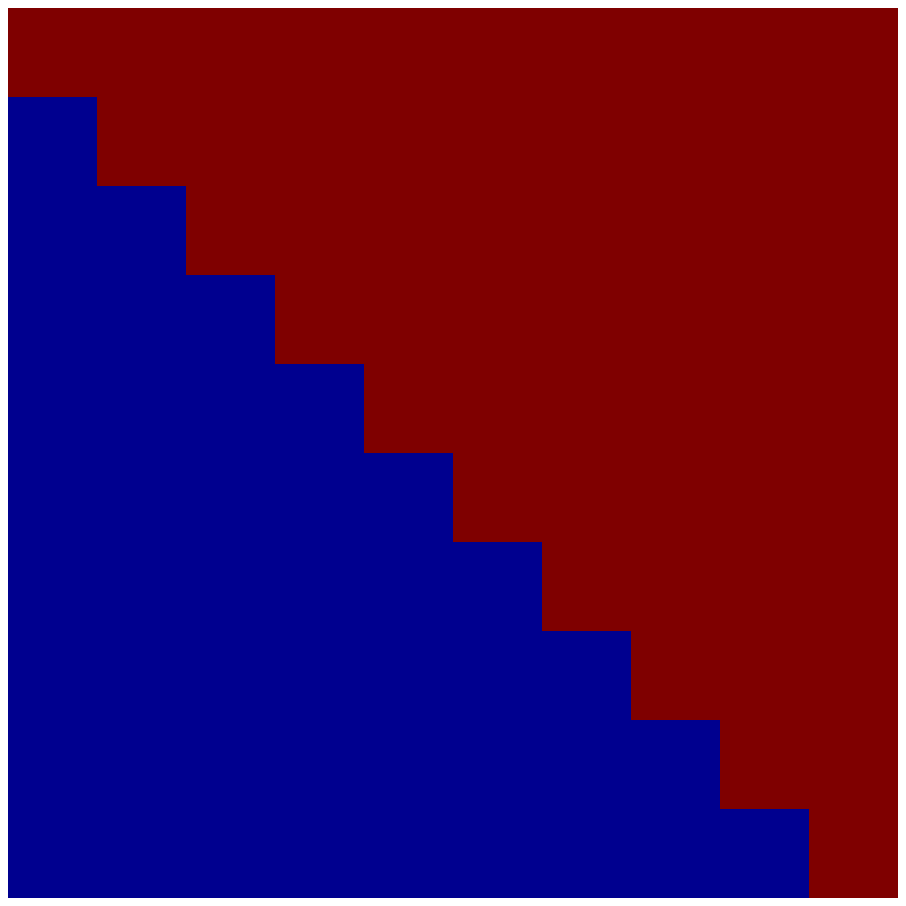}&
\includegraphics[scale=0.28]{./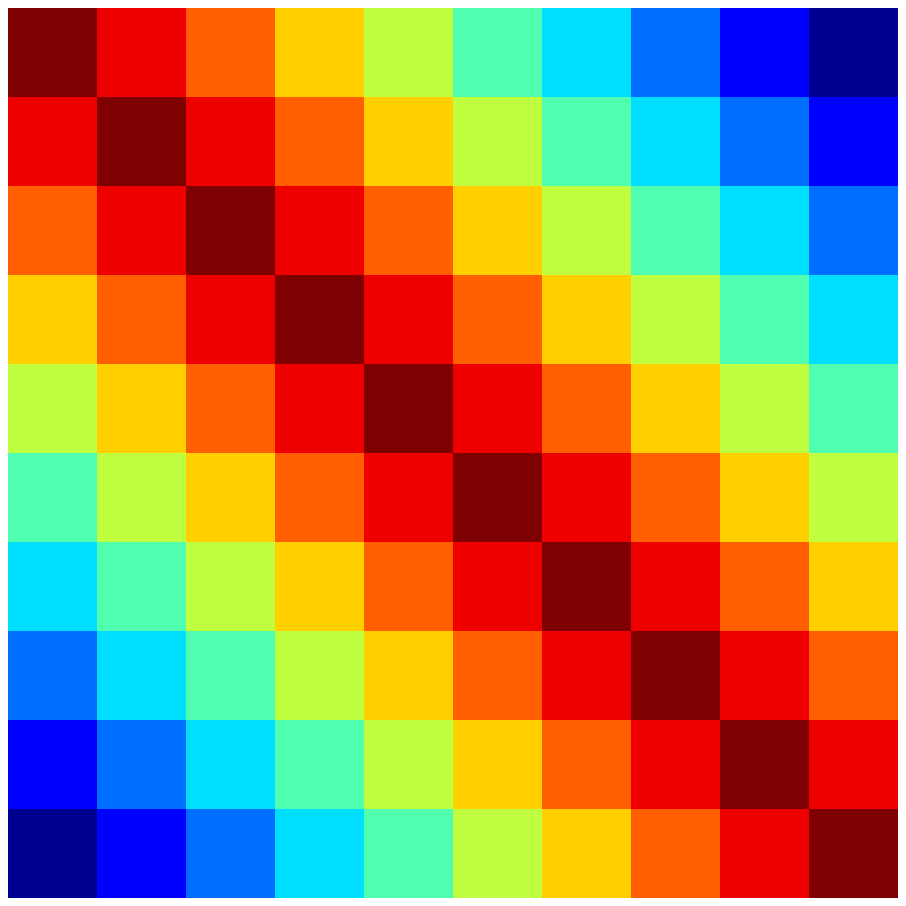}&
\includegraphics[scale=0.28]{./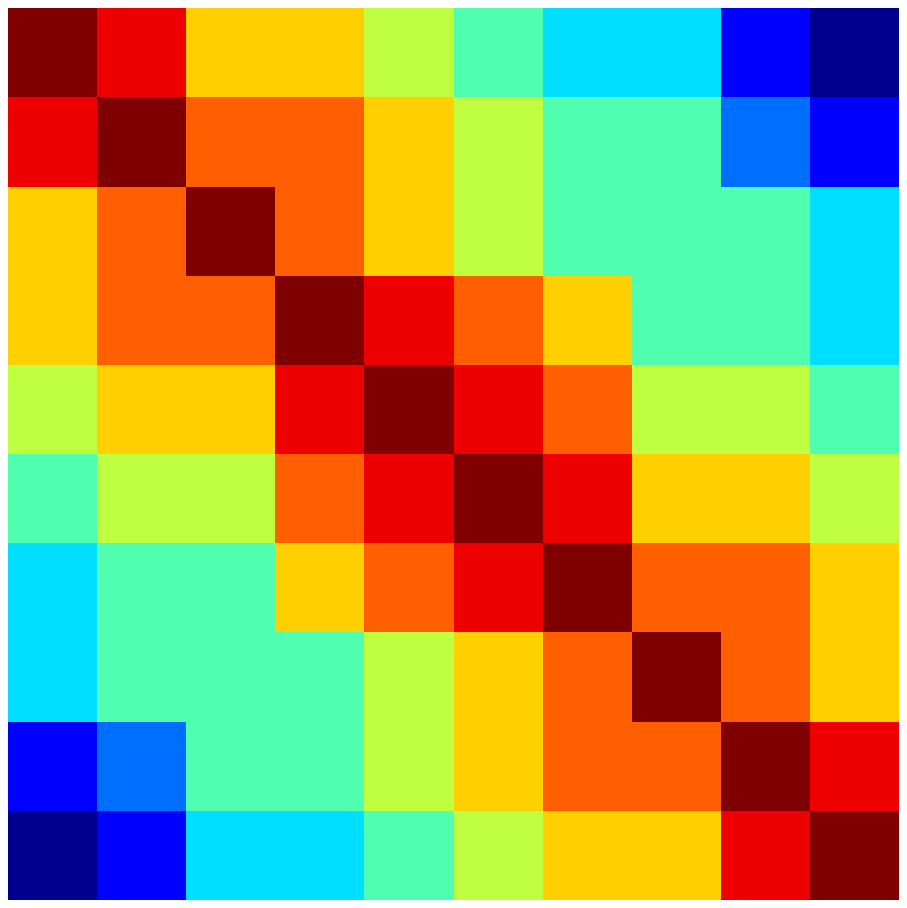}&
\includegraphics[scale=0.4]{./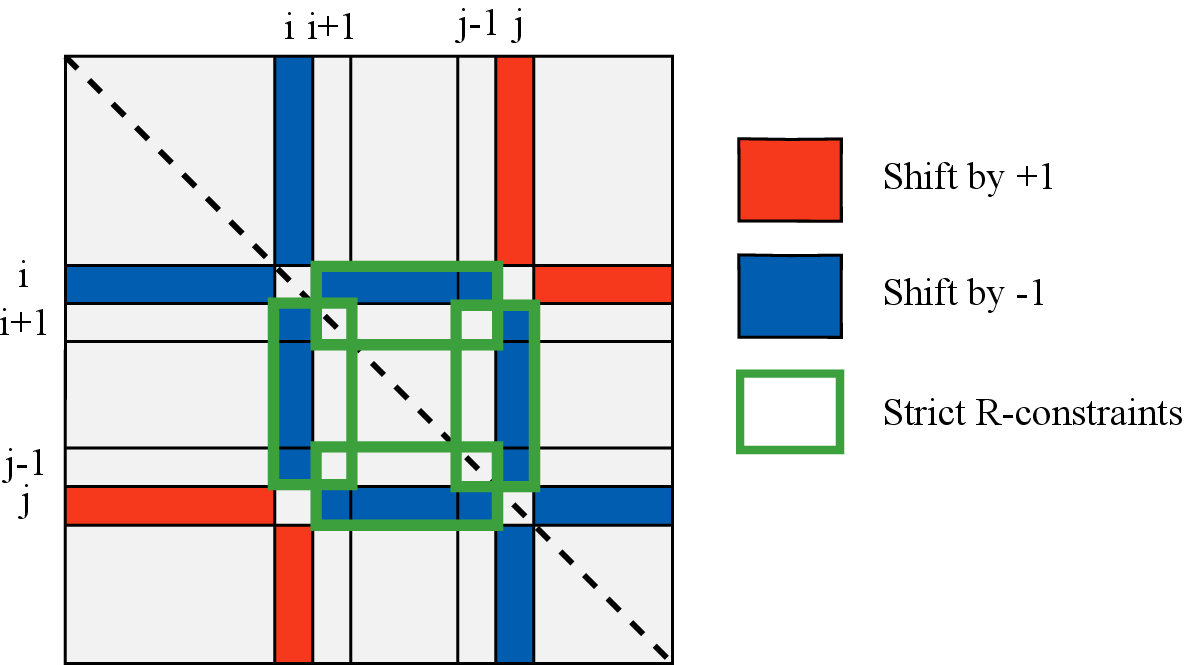}
\end{tabular}
\caption{The matrix of pairwise comparisons $C$ {\em (far left)} when the rows are ordered according to the true ranking. The corresponding similarity matrix $S^\mathrm{match}$ is a strict R-matrix {\em (center left)}. The same $S^{\mathrm{match}}$ similarity matrix with comparison (3,8) corrupted {\em (center right)}. With one corrupted comparison, $S^{\mathrm{match}}$ keeps enough strict R-constraints to recover the right permutation. In the noiseless case, the difference between all coefficients is at least one and after introducing an error, the coefficients inside the green rectangles still enforce strict R-constraints {\em(far right)}.
\label{fig:simMatrixOneError}}
\end{center}
\end{figure}

In this section we study the robustness of \ref{alg:SerialRank} using $S^\mathrm{match}$ with respect to noisy and missing pairwise comparisons. We will see that noisy comparisons cause ranking ambiguities for the point score method and that such ambiguities are to be lifted by the spectral ranking algorithm. We show in particular that the \ref{alg:SerialRank} algorithm recovers the exact ranking when the pattern of errors is random and errors are not too numerous. We first study the impact of one corrupted comparison on \ref{alg:SerialRank}, then extend the result to multiple corrupted comparisons. A similar analysis is provided for missing comparisons as Corollary~\ref{cor:impactCorMiss}. in the Appendix. Finally, Proposition~\ref{prop:proba} provides an estimate of the number of randomly corrupted entries that can be tolerated for perfect recovery of the true ranking. We begin by recalling the definition of the \emph{point score} of an item.
\begin{definition}\label{def:score}
The {\em point score} $w_i$ of an item $i$, also known as point-difference, or {\em row-sum} is defined as
$w_i = \sum_{k=1}^{n} C_{k,i}$,
which corresponds to the number of wins minus the number of losses in a tournament setting.
\end{definition}
In the following we will denote by $w$ the point score vector.


\begin{proposition}\label{prop:sup-oneError}
Given all pairwise comparisons $C_{s,t} \in \{-1,1\}$ between items ranked according to their indices, suppose the sign of one comparison $C_{i,j}$ (and its counterpart $C_{j,i}$) is switched, with $i<j$. If $j-i>2$ then $S^\mathrm{match}$ defined in~\eqref{eq:rnk-sim} remains \emph{strict-R}, whereas the point score vector $w$ has ties between items $i$ and $i+1$ and items $j$ and $j-1$.
\end{proposition}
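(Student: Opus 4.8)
The plan is to prove the two assertions separately, since they concern unrelated objects: the combinatorial structure of $S^\mathrm{match}$ and the values of the score vector $w$.

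First I would dispose of the claim about the point score, which is a direct computation. With the convention of Proposition~\ref{prop:s-match-R} (so $C_{s,t}=-1$ for $s<t$ and $C_{s,t}=+1$ for $s>t$), the noiseless scores of Definition~\ref{def:score} are affine in the index, $w_t = n+2-2t$, hence pairwise distinct. Switching the pair $(C_{i,j},C_{j,i})$ from $(-1,+1)$ to $(+1,-1)$ alters exactly one entry of column $j$ (the entry $C_{i,j}$, which raises $w_j$ by $2$) and one entry of column $i$ (the entry $C_{j,i}$, which lowers $w_i$ by $2$), and leaves every other score untouched. Since $j\neq i+1$, the perturbed scores satisfy $\tilde w_j = w_j+2 = w_{j-1}$ and $\tilde w_i = w_i-2 = w_{i+1}$, which are precisely the announced ties between items $j,j-1$ and items $i,i+1$. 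Only $j-i\geq 2$ is used here.

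For $S^\mathrm{match}$ I would first record the effect of the flip. Using \eqref{eq:rnk-sim} and writing $S^\mathrm{match}_{s,t}=n-|s-t|$ for the noiseless value, the perturbed matrix is $\tilde S = S^\mathrm{match}+\Delta$, where $\Delta$ is supported on rows and columns $i,j$ only, with $\Delta_{i,t}=\Delta_{t,i}=C_{t,j}\in\{-1,+1\}$ and $\Delta_{j,t}=\Delta_{t,j}=-C_{t,i}\in\{-1,+1\}$ for $t\notin\{i,j\}$, and crucially $\Delta_{i,j}=0$. Thus every affected entry moves by exactly one, and only in the two ``cross'' rows and columns. The first step is to verify that $\tilde S$ is still an R-matrix for the identity order: since consecutive entries of $S^\mathrm{match}$ decrease by exactly $1$ as one moves along a row away from the diagonal, a $\pm 1$ shift in columns $i,j$ can only turn a strict decrease into a tie or a double step, never into an increase. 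A short case split on the position of a row relative to $i$ and $j$ confirms that monotonicity away from the diagonal is preserved, using only $j\neq i+1$.

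The heart of the proof is strictness, for which I would invoke the block characterization of Lemma~\ref{lemma:strictReq}: $\tilde S$ is strict-R iff no indices $r<s$ have rows $r,\dots,s$ agreeing on every column outside $[r,s]$. The key observation is that for $k$ fixed outside $[r,s]$ the map $a\mapsto S^\mathrm{match}_{a,k}=n-|a-k|$ is strictly monotone in $a$, so in the noiseless matrix any two rows already disagree at every outside column; because $\Delta$ vanishes off rows/columns $i,j$, a non-special adjacent pair $a,a+1$ (with $a,a+1\notin\{i,j\}$) still disagrees at every outside column $k\notin\{i,j\}$. Hence a violating block must either have every adjacent pair meet $\{i,j\}$, or have its entire complement contained in $\{i,j\}$. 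The latter forces $i=1,j=n$ with block $[2,n-1]$, which I rule out directly; the former, because $j-i>2$ keeps $\{i-1,i\}$ separated from $\{j-1,j\}$, confines candidate blocks to short intervals around $i$ (namely $[i-1,i]$, $[i,i+1]$, $[i-1,i+1]$) or, symmetrically, around $j$. I would then eliminate each of these finitely many candidates by exhibiting one outside column at which the special row disagrees with a neighbor; for instance rows $i,i+1$ disagree at any column $k$ with $i+2\leq k\leq j-1$, a range that is nonempty exactly when $j-i>2$. The main obstacle is precisely this case analysis together with the sharpness of the hypothesis: I expect that when $j-i=2$ the two perturbed regions merge and the rows $i,i+1,i+2$ take identical values on every column outside $[i,i+2]$, so that block genuinely violates strictness and $\tilde S$ is only weakly R. Confirming that $j-i>2$ is exactly the threshold separating ``strict constraints survive'' from ``the block collapses'' is the delicate point; everything else is bookkeeping on the $\pm1$ perturbation $\Delta$.
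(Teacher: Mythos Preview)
Your approach mirrors the paper's: compute the explicit perturbation $\Delta$ of $S^{\mathrm{match}}$, verify that the R-inequalities survive (each affected entry moves by $\pm1$ while neighbouring unperturbed entries differ by exactly $1$), and then argue strictness. The difference lies in this last step. The paper asserts that after the flip the only non-strict R-constraints occur between rows $i,i+1$ and between rows $j-1,j$, and breaks those ties with the single witness $\hat S_{i,j-1}<\hat S_{i+1,j-1}$. You instead invoke the block characterization of Lemma~\ref{lemma:strictReq} and reduce to a short list of candidate blocks around $i$ and around $j$; this is more systematic, and it actually catches something the paper's argument glosses over.

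Concretely, the paper does not notice that rows $i-1$ and $i$ (and, symmetrically, $j$ and $j+1$) also coincide on a long run of columns: a direct computation gives $\hat S_{i-1,t}=\hat S_{i,t}$ for every $t\in\{i+1,\dots,j\}$, so the block $[i-1,i]$ is one of your candidates but never enters the paper's tie analysis. Your plan is to eliminate it by producing an outside column where rows $i-1$ and $i$ differ; any such witness must sit at $k<i-1$ or $k>j$. This exists unless $i=2$ and $j=n$ (and symmetrically, for $[j,j+1]$, unless $i=1$ and $j=n-1$). At those two boundary pairs there is no witness, and in fact $\hat S$ is \emph{not} strict-R: for $n=5$, $i=2$, $j=5$ one checks that rows $1$ and $2$ of $\hat S$ agree on all of columns $3,4,5$, so the transposition of items $1$ and $2$ reorders $\hat S$ as an R-matrix. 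Hence your case analysis cannot be completed at these two corners, but the obstruction is a defect of the statement itself rather than of your method---the paper's proof has the same blind spot. Away from these two boundary configurations your block-by-block argument goes through and is cleaner than the paper's direct tie-breaking.
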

\begin{proof}
We give some intuition for the result in Figure~\ref{fig:simMatrixOneError}. 
We write the true score and comparison matrix $w$ and $C$, while the observations are written $\hat w$ and $\hat C$ respectively. This means in particular that $\hat C_{i,j}=-C_{i,j}=1$ and $\hat C_{j,i}=-C_{j,i}=-1$. 
To simplify notations we denote by $S$ the similarity matrix $S^\mathrm{match}$ (respectively $\hat S$ when the similarity is computed from observations).
We first study the impact of a corrupted comparison $C_{i,j}$ for $i<j$ on the point score vector $\hat{w}$. We have
\[
\hat{w}_i = \sum_{k=1}^{n} \hat C_{k,i} = \sum_{k=1}^{n} C_{k,i} + \hat C_{j,i} - C_{j,i} = w_i - 2 = w_{i+1},
\]
similarly $\hat{w}_j=w_{j-1}$, whereas $\hat{w_k}=w_k$ for $k\neq i,j$.
Hence, the incorrect comparison induces two ties in the point score vector $w$.  Now we show that the similarity matrix defined in \eqref{eq:rnk-sim} breaks these ties, by showing that it is a strict R-matrix. Writing $\hat S$ in terms of $S$, we get for any $t\neq i,j$
\[
[\hat C\hat C^T]_{i,t}= \sum_{k\neq j} \left( \hat C_{i,k}\hat C_{t,k} \right) + \hat C_{i,j}\hat C_{t,j} 
= \sum_{k\neq j} \left( C_{i,k}C_{t,k} \right) + \hat C_{i,j}C_{t,j} = 
\left\{\BA{ll}
[CC^T]_{i,t} -2  & \mbox{if} \ t  <  j \\
\left[CC^T\right]_{i,t} +2 & \mbox{if} \ t > j.\\
\EA\right.
\]
Thus we obtain
\[ \hat S_{i,t} =
 \left\{\BA{ll}
S_{i,t} - 1  & \mbox{if} \ t  < j \\
S_{i,t} + 1 & \mbox{if} \ t > j,\\
\EA\right. \]
(remember there is a factor $1/2$ in the definition of $S$). Similarly we get for any $t \neq i,j$
\[ \hat S_{j,t} =
 \left\{\BA{ll}
S_{j,t} + 1  & \mbox{if} \ t  <  i \\
S_{j,t} - 1 & \mbox{if} \ t > i.\\
\EA\right. \]
Finally, for the single corrupted index pair $(i,j)$, we get
\[
\hat S_{i,j}= \frac{1}{2} \left( n+ \sum_{k\neq i,j} \left( \hat C_{i,k}\hat C_{j,k} \right) + \hat C_{i,i}\hat C_{j,i} + \hat C_{i,j}\hat C_{j,j} \right)
= S_{i,j}  - 1 +1 = S_{i,j}.
\]
The diagonal of $S$ is not impacted since $[\hat C\hat C^T]_{i,i}= \sum_{k=1}^n \left( \hat C_{i,k}\hat C_{i,k} \right)=n$. For all other coefficients $(s,t)$ such that $s,t \neq i,j$, we also have $\hat S_{s,t}=S_{s,t}$, which means that all rows or columns outside of $i,j$ are left unchanged. We first observe that these last equations, together with our assumption that $j-i>2$ and the fact that the elements of the exact $S$ in~\eqref{eq:s-match-ideal} differ by at least one, imply that
\[
\hat S_{s,t} \leq \hat S_{s+1,t} \quad \mbox{and} \quad \hat S_{s,t+1} \leq \hat S_{s,t},\quad \mbox{for  } s < t
\]
so $\hat S$ remains an R-matrix.
Note that this result remains true even when $j-i=2$, but we need some strict inequalities to show uniqueness of the retrieved order. Indeed, because $j-i>2$ all these $R$ constraints are strict except between elements of rows $i$ and $i+1$, and rows $j-1$ and $j$ (and similarly for columns).
These ties can be broken using the fact that
\[
\hat S_{i,j-1}=S_{i,j-1}-1<S_{i+1,j-1}-1=\hat S_{i+1,j-1}-1<\hat S_{i+1,j-1}
\] 
which means that $\hat S$ is still a strict R-matrix (see Figure \ref{fig:simMatrixOneError}) since $j-1>i+1$ by assumption.
\end{proof}

We now extend this result to multiple errors.

\begin{proposition}\label{prop:sup-kerrors}
Given all pairwise comparisons $C_{s,t} \in \{-1,1\}$ between items ranked according to their indices, suppose the signs of $m$ comparisons indexed $(i_1,j_1),\ldots,(i_m,j_m)$ are switched. If the following condition~\eqref{cond:app-distrkerrors} holds true, 
\BEQ \label{cond:app-distrkerrors} 
| s - t | > 2, \mbox{ for all }  s,t\in \{i_1,\dots,i_{m},j_1,\dots,j_{m}\} \mbox{ with } s\neq t ,
\EEQ 
then $S^\mathrm{match}$ defined in~\eqref{eq:rnk-sim} remains \emph{strict-R}, whereas the point score vector $w$ has $2m$ ties.
\end{proposition}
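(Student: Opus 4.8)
The plan is to reduce everything to the single-error analysis of Proposition~\ref{prop:sup-oneError} by exploiting that condition~\eqref{cond:app-distrkerrors} forces the $2m$ indices in $\{i_1,\dots,i_m,j_1,\dots,j_m\}$ to be distinct and pairwise more than two apart. Writing $E=\hat C-C$ for the error matrix, whose only nonzero entries are $E_{i_k,j_k}=-E_{j_k,i_k}=2$, I would first expand $\hat S - S = \tfrac12(CE^T+EC^T+EE^T)$ using~\eqref{eq:rnk-sim}. Distinctness of the indices makes $EE^T$ diagonal (each index belongs to at most one flipped pair), so it leaves every off-diagonal entry of $\hat S$ untouched, and a short computation shows the diagonal of $\hat S$ is unchanged as well. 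The off-diagonal perturbation therefore splits as an exact superposition $p(s,t):=\hat S_{s,t}-S_{s,t}=\phi_s(t)+\phi_t(s)$ of single-step ``row contributions'', where $\phi_{i_k}(t)=C_{t,j_k}$ and $\phi_{j_k}(t)=-C_{t,i_k}$ on the rows touched by an error and $\phi_s\equiv 0$ otherwise; in particular each $\phi_s$ is a step function with a single jump, of size two, located at the partner index of $s$.

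For the point score vector the argument is additive: exactly as in Proposition~\ref{prop:sup-oneError}, each flipped pair $(i_k,j_k)$ changes only $\hat w_{i_k}=w_{i_k}-2=w_{i_k+1}$ and $\hat w_{j_k}=w_{j_k}+2=w_{j_k-1}$ and leaves all other scores intact. Because the indices are separated by more than two, the $2m$ items $i_k,\,i_k+1,\,j_k-1,\,j_k$ produced across the $m$ errors are all distinct, so the errors create $2m$ genuinely distinct ties, as claimed.

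The substantial part is to show that $\hat S$ is still strict-R. By~\eqref{eq:s-match-ideal} every difference between horizontally or vertically neighbouring entries of the unperturbed $S$ equals one in absolute value, so it suffices to check that, moving one step away from the diagonal, $p$ never increases by more than one (this preserves R-monotonicity) and that no nontrivial block of equal rows appears (this yields strictness, via Lemma~\ref{lemma:strictReq}). Since each $\phi_s$ has a single jump of size two at the partner of $s$, and condition~\eqref{cond:app-distrkerrors} places that jump at least two columns away from every other error index, at any given adjacency at most one of the two summands $\phi_s,\phi_t$ can jump while the other stays constant. I would then verify, case by case---an error row against an ordinary column, and the ``crossing'' entries $(i_k,i_l),(i_k,j_l),(j_k,i_l),(j_k,j_l)$ where two distinct error indices meet---that the induced step of $p$ has magnitude at most one, reproducing the $-1,0,+1$ pattern of Proposition~\ref{prop:sup-oneError}. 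This keeps all R-constraints, with equalities confined to the same local adjacencies (rows $i_k,i_k+1$ and $j_k-1,j_k$, together with their column analogues) as in the single-error case. Strictness then follows by exhibiting, for any $r<s$, a column outside $[r,s]$ and away from the error indices at which the two rows still differ, so that Lemma~\ref{lemma:strictReq} applies.

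The main obstacle is exactly the crossing entries: these can carry a perturbation of magnitude two, so a naive bound would break an R-constraint. The point---and the reason the hypothesis asks for a gap of more than two rather than mere distinctness---is that the separation guarantees that the immediate neighbours of every such entry lie on non-error rows and columns, hence feel only a single error and only a $\pm1$ perturbation; the doubled perturbation is thus always ``spread'' across two adjacencies as $\pm1$ each and never accumulates within one step. Making this localization precise, and confirming that the few induced ties never line up into a full block of equal rows, is where the care is needed.
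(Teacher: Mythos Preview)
Your approach is essentially the paper's, just packaged through the decomposition $\hat S-S=\tfrac12(CE^T+EC^T+EE^T)$ and the row-functions $\phi_s$. The paper does the same case analysis by writing the perturbed entries $\hat S_{i,t}$, $\hat S_{j,t}$ and the ``intersection'' entries $\hat S_{s,t}$ for $s,t$ both error indices directly, obtaining the same $\pm1$ shifts on single-error rows/columns and $0,\pm2$ shifts at crossings, and then breaks the residual ties at column $k=j_l-1$ exactly as you suggest.

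One imprecision is worth flagging. Your sentence ``at any given adjacency at most one of the two summands $\phi_s,\phi_t$ can jump while the other stays constant'' is not literally true: at the step from $(i_k,j_k-1)$ to $(i_k,j_k)$ the first summand $\phi_{i_k}$ jumps by $+2$ (its unique jump, at the partner $j_k$) \emph{and} the second summand changes from $\phi_{j_k-1}(i_k)=0$ to $\phi_{j_k}(i_k)=-C_{i_k,i_k}=-1$, since $j_k$ is itself an error index. The net step of $p$ is $+1$, so the R-constraint still holds (with equality), but both summands move. This is precisely the ``self-crossing'' $(i_k,j_k)$, which you should list alongside $(i_k,i_l),(i_k,j_l),\dots$; once you do, your case analysis goes through and agrees with the paper's computation $\hat S_{i_k,j_k}=S_{i_k,j_k}$. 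The strictness argument you sketch via Lemma~\ref{lemma:strictReq} is also the paper's: for the only potentially tied consecutive rows $i_l,i_l{+}1$ (resp.\ $j_l{-}1,j_l$) the column $k=j_l-1$ (resp.\ $k=i_l+1$), which is not an error index by separation, separates them.
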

\begin{proof}
We write the true score and comparison matrix $w$ and $C$, while the observations are written $\hat w$ and $\hat C$ respectively, and without loss of generality we suppose $i_l<j_l$. This implies that $\hat C_{i_l, j_l}=-C_{i_l, j_l}=1$ and $\hat C_{j_l, i_l}=-C_{j_l, i_l}=-1$ 
for all $l$ in $\{1,\ldots,m\}$. To simplify notations, we denote by $S$ the similarity matrix $S^\mathrm{match}$ (respectively $\hat S$ when the similarity is computed from observations).

As in the proof of Proposition~\ref{prop:sup-oneError}, corrupted comparisons indexed $(i_l,j_l)$ induce shifts of $\pm 1$ on columns and rows $i_l$ and $j_l$ of the similarity matrix $S^\mathrm{match}$, while $S^\mathrm{match}_{i_l,j_l}$ values remain the same. Since there are several corrupted comparisons, we also need to check the values of $\hat S$ at the intersections of rows and columns with indices of corrupted comparisons. Formally, for any $(i,j) \in \{(i_1,j_1), \dots\,(i_{m},j_{m})\}$ and $t \not \in  \{i_1,\dots,i_{m},j_1,\dots,j_{m}\}$
\[ \hat S_{i,t} =
 \left\{\BA{ll}
S_{i,t} + 1  & \mbox{if} \ t  < j \\
S_{i,t} - 1 & \mbox{if} \ t > j,\\
\EA\right. \]
Similarly for  $t \not \in  \{i_1,\dots,i_{m},j_1,\dots,j_{m}\}$
\[ \hat S_{j,t} =
 \left\{\BA{ll}
S_{j,t} - 1  & \mbox{if} \ t  <  i \\
S_{j,t} + 1 & \mbox{if} \ t > i.\\
\EA\right. \]
Let $(s,s')$ and $(t,t') \in \{(i_1,j_1), \dots\,(i_{m},j_{m})\}$, we have
\[
\BA{llcc}
\hat S_{s,t} & =  \frac{1}{2} \left( n+ \sum_{k\neq s',t'} \left( \hat C_{s,k}\hat C_{t,k} \right) + \hat C_{s,s'}\hat C_{t,s'} + \hat C_{s,t'}\hat C_{t,t'} \right) \\
 & =  \frac{1}{2} \left( n+ \sum_{k\neq s',t'} \left( C_{s,k}C_{t,k} \right) - C_{s,s'}C_{t,s'} - C_{s,t'}C_{t,t'} \right)
\EA
\]
Without loss of generality we suppose $s<t$, and since $s<s'$ and $t<t'$, we obtain
 \[
\hat S_{s,t} = 
\left\{\BA{ll}
S_{s,t}   & \mbox{if} \ t>s' \\
S_{s,t} + 2  & \mbox{if} \ t<s'. \\
\EA\right. 
\]
Similar results apply for other intersections of rows and columns with indices of corrupted comparisons (i.e., shifts of $0$, $+2$, or $-2$). For all other coefficients $(s,t)$ such that $s,t \not \in  \{i_1,\dots,i_{m},j_1,\dots,j_{m}\}$, we have $\hat S_{s,t}=S_{s,t}$. We first observe that these last equations, together with our assumption that $j_l-i_l>2$, mean that
\[
\hat S_{s,t} \leq \hat S_{s+1,t} \quad \mbox{and} \quad \hat S_{s,t+1} \leq \hat S_{s,t},\quad \mbox{for any } s < t
\]
so $\hat S$ remains an R-matrix. Moreover, since $j_l-i_l>2$ all these $R$ constraints are strict except between elements of rows $i_l$ and $i_l+1$, and rows $j_l-1$ and $j_l$ (similar for columns). These ties can be broken using the fact that for $k=j_l-1$
\[
\hat S_{i_l,k}=S_{i_l,k}-1<S_{i_l+1,k}-1=\hat S_{i_l+1,k}-1<\hat S_{i_l+1,k}
\] 
which means that $\hat S$ is still a strict R-matrix since $k=j_l -1 >i_l+1$. Moreover, using the same argument as in the proof of Proposition~\ref{prop:sup-oneError}, corrupted comparisons induces $2 m$ ties in the point score vector $w$. 
\end{proof}

For the case of one corrupted comparison, note that the separation condition on the pair of items $(i,j)$ is necessary. When the comparison $C_{i,j}$ between two adjacent items is corrupted, no ranking method can break the resulting tie. For the case of arbitrary number of corrupted comparisons, condition~\eqref{cond:app-distrkerrors} is a sufficient condition only. We study exact ranking recovery conditions with missing comparisons in the Appendix, using similar arguments. We now estimate the number of randomly corrupted entries that can be tolerated while maintaining exact recovery of the true ranking.

\begin{proposition} \label{prop:proba}
Given a comparison matrix for a set of $n$ items with $m$ corrupted comparisons selected uniformly at random from the set of all possible item pairs. Algorithm~\ref{alg:SerialRank} guarantees that the probability of recovery $p(n,m)$ satisfies $p(n,m) \geq 1 - \delta$, provided that $m = O(\sqrt{\delta n})$. In particular, this implies that $p(n,m) = 1 - o(1)$ provided that $m = o(\sqrt{n})$.   
\end{proposition}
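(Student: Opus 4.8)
The plan is to reduce the probabilistic statement to the deterministic sufficient condition established in Proposition~\ref{prop:sup-kerrors}: if the $m$ corrupted comparisons are indexed by pairs $(i_1,j_1),\dots,(i_m,j_m)$ whose endpoints are all pairwise separated by more than $2$, i.e.\ condition~\eqref{cond:app-distrkerrors} holds, then \ref{alg:SerialRank} recovers the exact ranking. Consequently, writing $V \subseteq \{1,\dots,n\}$ for the random set of all endpoints of the corrupted comparisons, recovery is guaranteed on the event that the $2m$ endpoints are distinct and no two of them differ by at most $2$. Hence
\[
1 - p(n,m) \;\leq\; \Pr\big[\,\exists\, u,v\in V,\ u\neq v,\ |u-v|\leq 2\,\big],
\]
and the whole problem reduces to bounding this collision probability for a uniformly random $m$-subset of the $\binom{n}{2}$ possible pairs.

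First I would estimate the marginal probability that a fixed item $u$ is touched by the corruption. Since there are $n-1$ pairs incident to $u$ out of $\binom{n}{2}$, a union bound over the $m$ sampled pairs gives $\Pr[u\in V] \leq m(n-1)/\binom{n}{2} = 2m/n$. The event in the display above is the union, over the fewer than $2n$ ``close'' pairs $\{u,v\}$ with $|u-v|\in\{1,2\}$, of the event $\{u\in V\ \text{and}\ v\in V\}$. For each such close pair I would split this joint event into (i) $\{u,v\}$ itself being one of the corrupted pairs, which has probability $m/\binom{n}{2} = O(m/n^2)$, and (ii) two \emph{distinct} corrupted pairs, one incident to $u$ and the other to $v$, whose probability I bound by a union bound over the $m(m-1)$ ordered pairs of distinct sampled comparisons, each contributing $O(1/n^2)$. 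This yields $\Pr[u\in V,\ v\in V] = O(m/n^2) + O(m^2/n^2)$.

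Summing over the $O(n)$ close pairs then gives $1 - p(n,m) = O(m/n) + O(m^2/n) = O(m^2/n)$, using $m\geq 1$. Requiring the right-hand side to be at most $\delta$ forces $m^2 = O(\delta n)$, i.e.\ $m = O(\sqrt{\delta n})$, which is exactly the claim; the corollary $p(n,m) = 1-o(1)$ for $m = o(\sqrt{n})$ follows since then $m^2/n \to 0$. The only real subtlety, and the step I would be most careful about, is the without-replacement dependence: the $m$ pairs are sampled distinctly and the two endpoints within a pair are themselves distinct, so the $2m$ endpoints are not independent uniform draws. I would handle this by always bounding probabilities through union bounds over the individual sampled pairs (where the incidence probability of a fixed pair is exactly $\binom{n}{2}^{-1}$) rather than treating the endpoints as i.i.d.; this sidesteps the dependence entirely while losing at most constant factors. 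I would also note that restricting to the event that the $2m$ endpoints are genuinely distinct is harmless, since it only enlarges the failure event whose probability is already $O(m^2/n)$, so the clean hypotheses of Proposition~\ref{prop:sup-kerrors} indeed apply on the complementary ``good'' event.
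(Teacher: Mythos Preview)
Your proposal is correct and reaches the same $O(m^2/n)$ failure bound as the paper, but through a different decomposition of the failure event. The paper argues sequentially: it samples the $m$ corrupted pairs one at a time, observes that after $i$ admissible pairs have been chosen the set of now-forbidden pairs has size at most $15ni$, and multiplies the step-by-step survival probabilities to obtain $\prod_{i=1}^{m-1}\bigl(1-\tfrac{15ni}{\binom{n}{2}-(i-1)}\bigr)\approx\exp(-15m^2/n)$. You instead union-bound directly over the $O(n)$ ``close'' index pairs $\{u,v\}$ with $|u-v|\le 2$ and, for each, bound $\Pr[u\in V,\,v\in V]$ via the two-case split (same sampled pair vs.\ two distinct sampled pairs). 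Your route is arguably more elementary, since it avoids the product-to-exponential approximation and works entirely with finite union bounds; the paper's route yields an explicit constant more transparently and connects naturally to birthday-type heuristics. One small point: your union over $|u-v|\in\{1,2\}$ does not cover the event that two distinct sampled pairs share an endpoint exactly ($|u-v|=0$), which is also needed for condition~\eqref{cond:app-distrkerrors} as used in Proposition~\ref{prop:sup-kerrors}; you allude to this in your last sentence, and indeed the same case-(ii) argument with $u=v$ gives $O(m^2/n)$ for that event as well, so the omission is cosmetic rather than substantive.
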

\begin{proof} Let $\mathcal{P}$ be the set of all distinct pairs of items from the set $\{1,2,\ldots,n\}$. Let $\mathcal{X}$ be the set of all admissible sets of pairs of items, i.e. containing each $X\subseteq \mathcal{P}$ such that $X$ satisfies condition (\ref{cond:app-distrkerrors}). We consider the case of $m\geq 1$ distinct pairs of items sampled from the set $\mathcal{P}$ uniformly at random without replacement. Let $X_i$ denote the set of sampled pairs given that $i$ pairs are sampled. We seek to bound $p(n,m) = \Prob(X_m \in \mathcal{X})$.
 Given a set of pairs $X\in \mathcal{X}$, let $T(X)$ be the set of non-admissible pairs, i.e. containing $(i,j) \in \mathcal{P}\setminus X$ such that $X \cup (i,j) \notin \mathcal{X}$. We have
\begin{equation}
\Prob(X_m \in \mathcal{X}) = \sum_{x\in \mathcal{X}: |x| = m-1} \left(1-\frac{|T(x)|}{|\mathcal{P}| - (m-1)}\right) \Prob(X_{m-1} = x).
\label{equ:pxm}
\end{equation}
Note that every selected pair from $\mathcal{P}$ contributes at most $15n$ non-admissible pairs. Indeed, given a selected pair $(i,j)$, a non-admissible pair $(s,t)$ should respect one of the following conditions $|s-i|\leq 2$, $|s-j|\leq 2$, $|t-i|\leq 2$, $|t-j|\leq 2$ or $|s-t|\leq2$. Given any item $s$, there are 15 possible choice of $t$ to output a non-admissible pair $(s,t)$, resulting in at most $15n$ non-admissible pairs for the selected pair $(i,j)$.

Hence, for every $x \in \mathcal{X}$ we have
$$
|T(x)| \leq 15n|x|.
$$
Combining this with (\ref{equ:pxm}) and the fact that $|\mathcal{P}| = \binom{n}{2}$, we have
$$
\Prob(X_m \in \mathcal{X}) \geq \left(1-\frac{15n}{\binom{n}{2} - (m-1)}(m-1)\right)\Prob(X_{m-1} \in \mathcal{X}).
$$
 From this it follows
\BEAS
p(n,m) &\geq& \prod_{i=1}^{m-1}\left(1-\frac{15n}{\binom{n}{2} - (i-1)}i\right)\\
&\geq& \prod_{i=1}^{m-1}\left(1-\frac{i}{a(n,m)}\right)
\EEAS
where
$$
a(n,m)= \frac{\binom{n}{2} - (m-1)}{15n}.
$$
Notice that when $m = o(n)$ we have $\left(1-\frac{i}{a(n,m)}\right) \sim \exp (-30i/n)$ and 
$$
\prod_{i=1}^{m-1}\left(1-\frac{i}{a(n,m)}\right) 
\sim \prod_{i=1}^{m-1} \exp (-30i/n)
\sim \exp\left(-\frac{15m^2}{n}\right) \hbox{ for large } n.
$$
Hence, given $\delta > 0$, $p(n,m) \geq 1-\delta$ provided that $m = O(\sqrt{n\delta})$. If $\delta = o(1)$, the condition is $m = o(\sqrt{n})$.
\end{proof}

\section{Spectral Perturbation Analysis}
\label{s:spectralPertAnalysis}
In this section we analyze how \ref{alg:SerialRank} performs when only a small fraction of pairwise comparisons are given. We show that for Erd\"os-R\'enyi graphs, i.e., when pairwise comparisons are observed independently with a given probability, $\Omega(n \log^4 n)$ comparisons suffice for $\ell_2$ consistency of the Fiedler vector and hence $\ell_2$ consistency of the retreived ranking w.h.p. On the other hand we need $\Omega(n^{3/2} \log^4 n)$ comparisons to retrieve a ranking whose local perturbations are bounded in $\ell_\infty$ norm.
Since Erd\"os-R\'enyi graphs are connected with high probability only when the total number of pairs sampled scales as $\Omega(n \log n)$, we need at least that many comparisons in order to retrieve a ranking, therefore the $\ell_2$ consistency result can be seen as optimal up to a polylogarithmic factor. 

Our bounds are mostly related to the work of \citep{Waut13}. In its simplified version \citep[Theorem 4.2][]{Waut13} shows that when ranking items according to their point score, for any precision parameter $\mu \in (0,1)$, sampling independently with fixed probability $\Omega \left(\frac{n\log n}{\mu^2} \right)$ comparisons guarantees that the maximum displacement between the retrieved ranking and the true ranking, i.e., the $\ell_\infty$ distance to the true ranking, is bounded by  $\mu n$ with high probability for $n$ large enough. 

Sample complexity bounds have also been studied for the Rank Centrality algorithm \citep{Dwor01, Nega12}. In their analysis, \citep{ Nega12} suppose that some pairs are sampled independently with fixed probability, and then $k$ comparisons are generated for each sampled pair, under a Bradley-Terry-Luce model (BTL). When ranking items according to the stationary distribution of a transition matrix estimated from comparisons, sampling $\Omega(n \cdot \mbox{polylog}(n))$ 
pairs are enough to bound the relative $\ell_2$ norm perturbation of the stationary distribution. 
However, as pointed out by~\citep{Waut13}, repeated measurements are not practical, e.g., if comparisons are derived from the outcomes of sports games or the purchasing behavior of a customer (a customer typically wants to purchase a product only once). Moreover, \citep{ Nega12} do not provide bounds on the relative $\ell_{\infty}$ norm perturbation of the ranking.

We also refer the reader to the recent work of \citet{Rajk14}, who provide a survey of sample complexity bounds for Rank Centrality, maximum likelihood estimation, least-square ranking and an SVM based ranking, under a more flexible sampling model.
However, those bounds only give the sampling complexity for exact recovery of ranking, which is usually prohibitive when $n$ is large, and are more difficult to interpret.

Finally, we refer the interested reader to \citep{Huan08,shamir2011spectral} for sampling complexity bounds in the context of spectral clustering.

\textbf{Limitations.}
We emphasize that sampling models based on Erd\"os-R\'enyi graphs are not the most realistic, though they have been studied widely in the literature \citep[see for instance][]{feige1994computing,braverman2008noisy, Waut13}. Indeed, pairs are not likely to be sampled independently. For instance, when ranking movies, popular movies in the top ranks are more likely to be compared. Corrupted comparisons are also more likely between items that have close rankings. We hope to extend our perturbation analysis to more general models in future work.

A second limitation of our perturbation analysis comes from the setting of ordinal comparisons, i.e., binary comparisons, since in many applications, several comparisons are provided for each sampled pair. 
Nevertheless, the setting of ordinal comparisons is interesting for the analysis of \ref{alg:SerialRank}, since numerical experiments suggest that it is the setting for which \ref{alg:SerialRank} provides the best results compared to other methods. Note that in practice, we can easily get rid of this limitation (see Section~\ref{s:gln} and \ref{s:numres}).
We refer the reader to numerical experiments in Section~\ref{s:numres}, as well as a recent paper by \citet{cucuringu2015sync}, which introduces another ranking algorithm called SyncRank, and provides extensive numerical experiments on state-of-the-art ranking algorithms, including \ref{alg:SerialRank}.

\textbf{Choice of Laplacian: normalized vs. unnormalized.}
In the spectral clustering literature, several constructions for the Laplacian operators are suggested, namely the unnormalized Laplacian (used in \ref{alg:SerialRank}), the symmetric normalized Laplacian, and the non-symmetric normalized Laplacian. \citet{Von-08} show stronger consistency results for spectral clustering by using the non-symmetric normalized Laplacian. 
Here, we show that the Fiedler vector of the normalized Laplacian is an affine function of the ranking, hence sorting the Fiedler vector still guarantees exact recovery of the ranking, when all comparisons are observed and consistent with a global ranking. In contrast, we only get an asymptotic expression for the unnormalized Laplacian (\cf~section~\ref{s:appendix}). This motivated us to provide an analysis of \ref{alg:SerialRank} robustness based on the normalized Laplacian, though in practice the use of the unnormalized Laplacian is valid and seems to give better results (\cf~Figures~\ref{fig:impactNbObsNoise} and \ref{fig:impactNbObsNoiseNormalized}).

\textbf{Notations.}
Throughout this section, we only focus on the similarity $S^{\mathrm{match}}$ in \eqref{eq:rnk-sim} and write it $S$ to simplify notations. W.l.o.g. we assume in the following that the true ranking is the identity, hence $S$ is an R-matrix.
We write $\|\cdot\|_2$ the operator norm of a matrix, which corresponds to the maximal absolute eigenvalue for symmetric matrices. $\|\cdot\|_F$ denotes the Frobenius norm. We refer to the eigenvalues of the Laplacian as $\lambda_i$, with $\lambda_1=0\leq \lambda_2 \leq \ldots \leq \lambda_n$. For any quantity $x$, we denote by $\tilde x$ its perturbed analogue. We define the residual matrix $R=\tilde S - S$ and write $f$ the normalized Fiedler vector of the Laplacian matrix $L_S$. We define the degree matrix $D_S=\diag(D\ones)$ the diagonal matrix whose elements are the row-sums of matrix $S$. Whenever we use the abreviation w.h.p., this means that the inequality is true with probability greater than $1-2/n$. Finally, we will use $c>0$ for absolute constants, whose values are allowed to vary from one equation to another.

We assume that our information on preferences is both incomplete and corrupted. Specifically, pairwise comparisons are independently sampled with probability $q$ and these sampled comparisons are consistent with the underlying total ranking with probability $p$. Let us define $\tilde C=  B \circ C$ the matrix of observed comparisons, where $C$ is the true comparison matrix defined in~\eqref{eq:c-mat}, $\circ$ is the Hadamard product and $B$ is a symmetric matrix with entries
\[
B_{i,j} = 
\left\{\BA{rl}
0 & \mbox{with probability } 1-q\\
1 & \mbox{with probability } qp\\
-1 & \mbox{with probability } q(1-p).
\EA\right.
\]
In order to obtain an unbiased estimator of the comparison matrix defined in \eqref{eq:c-mat}, we normalize $\tilde C$ by its mean value $q(2p-1)$ and redefine $\tilde S$ as
\[
\tilde S=\frac{1}{q^2(2p-1)^2}\tilde C \tilde C^T + n \ones \ones^T.
\]
For ease of notations we have dropped the factor $1/2$ in \eqref{eq:rnk-sim} w.l.o.g. (positive multiplicative factors of the Laplacian do not affect its eigenvectors).

\subsection{Results}

We now state our main results. The first one bounds $\ell_2$ perturbations of the Fiedler vector $f$ with both missing and corrupted comparisons. Note that $f$ and $\tilde f$ are normalized.
{
\renewcommand{\thetheorem}{\ref{thm:boundFiedlerPerturbation}}
\begin{theorem}
For every $\mu \in (0,1)$ and $n$ large enough, if $q>\frac{\log^4 n}{\mu^2(2p-1)^4 n}$, then 
\[
\|\tilde f -f\|_2 \leq c \frac{\mu}{\sqrt{\log n}}
\]
with probability at least $1-2/n$, where $c>0$ is an absolute constant.
\end{theorem}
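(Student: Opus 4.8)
The plan is to control the Fiedler-vector perturbation through a Davis--Kahan $\sin\Theta$ bound, reducing the problem to (i) a deterministic lower bound on the relevant spectral gap of the ideal similarity matrix and (ii) an operator-norm bound on the random perturbation of the normalized Laplacian. Working with the symmetric normalized Laplacians $\mathcal{L} = I - D_S^{-1/2} S D_S^{-1/2}$ and $\tilde{\mathcal{L}} = I - \tilde D^{-1/2}\tilde S\tilde D^{-1/2}$, where $\tilde D = \diag(\tilde S\ones)$, I set $\Delta = \tilde{\mathcal{L}} - \mathcal{L}$. Since $f$ (resp. $\tilde f$) is attached to the smallest nonzero eigenvalue $\lambda_2$ of $\mathcal{L}$ (resp. $\tilde\lambda_2$ of $\tilde{\mathcal{L}}$), the Davis--Kahan theorem gives, after fixing the sign of $\tilde f$,
\[
\|\tilde f - f\|_2 \le \frac{c\,\|\Delta\|_2}{\gamma},
\]
with $\gamma = \min(\lambda_2,\lambda_3-\lambda_2)$ the separation of the Fiedler value from the rest of the spectrum of $\mathcal{L}$, valid as soon as $\|\Delta\|_2 \le \gamma/2$.

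Two deterministic ingredients feed this bound. First, I would pin down $\gamma$ using the explicit form $S_{i,j} = n - |i-j|$ from Proposition~\ref{prop:s-match-R}: this is a Toeplitz-plus-boundary matrix whose $-|i-j|$ part is a discrete Green's function of the one-dimensional Laplacian, so its eigenvectors are essentially the discrete sinusoids, the Fiedler vector is the known affine vector, and the gap can be extracted from this structure. Second, I would record that the row sums satisfy $d_i = \Theta(n^2)$ uniformly, so $D_S^{-1/2}$ has operator norm $\Theta(n^{-1})$; this is what turns a bound on $\|R\|_2$, with $R = \tilde S - S$, into a bound on the normalized perturbation via $\|D_S^{-1/2} R D_S^{-1/2}\|_2 \le \|R\|_2/d_{\min}$.

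The heart of the argument is probabilistic. Writing $\frac{1}{q(2p-1)}\tilde C = C + E$, the off-diagonal entries of $E$ are independent, mean zero, bounded by $L = O((q(2p-1))^{-1})$ and of standard deviation $\sigma = O((2p-1)^{-1}q^{-1/2})$. Expanding the normalized product,
\[
R = \tfrac{1}{q^2(2p-1)^2}\tilde C\tilde C^T - CC^T = C E^T + E C^T + E E^T,
\]
so $\|R\|_2 \le 2\|C\|_2\,\|E\|_2 + \|E\|_2^2$ with $\|C\|_2 \le \|C\|_F = O(n)$. The main obstacle is the sharp operator-norm bound on the heavy-tailed matrix $E$: a matrix Bernstein / Bandeira--van Handel inequality gives $\|E\|_2 \lesssim \sigma\sqrt n + L\sqrt{\log n}$, and in the regime $nq \gtrsim \log n$ the variance term dominates, so $\|E\|_2 \lesssim (2p-1)^{-1}\sqrt{n/q}$ with probability at least $1-2/n$; hence $\|R\|_2 \lesssim n^{3/2}/((2p-1)\sqrt q)$, the $E E^T$ term being lower order. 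In parallel I would prove degree concentration, $\tilde d_i = (1+o(1))d_i$ uniformly, via a scalar Bernstein bound on $(R\ones)_i$, so that substituting $D_S$ for $\tilde D$ in $\tilde{\mathcal{L}}$ costs only a lower-order term in $\|\Delta\|_2$.

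Finally I would assemble the pieces: combining $\|\Delta\|_2 \lesssim \|R\|_2/d_{\min} \lesssim n^{-1/2}(2p-1)^{-1}q^{-1/2}$ (plus the degree-fluctuation contribution) with the gap bound and Davis--Kahan, then imposing $q > \log^4 n/(\mu^2(2p-1)^4 n)$, should yield $\|\tilde f - f\|_2 \le c\mu/\sqrt{\log n}$ w.h.p.; the polylogarithmic and $(2p-1)$ powers in the threshold are precisely what is needed to absorb the gap lower bound and the normalization error. I expect the delicate points to be the sharp spectral-norm concentration for $E$ (controlling the heavy tail from small $q$ without losing powers of $n$) and propagating the degree fluctuations through the $D^{-1/2}$ normalization, whereas the deterministic gap computation, though technical, follows from the Green's-function structure of $S$.
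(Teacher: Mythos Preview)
Your plan is essentially the paper's own proof: Davis--Kahan on the symmetric normalized Laplacian, fed by (i) an operator-norm bound on $E=\tilde C/(q(2p-1))-C$ (the paper uses \citep{Achl07} where you propose matrix Bernstein / Bandeira--van Handel, but the upshot is the same $\|E\|_2\lesssim(2p-1)^{-1}\sqrt{n/q}$), (ii) degree concentration, and (iii) an eigengap bound. On (iii) the paper is cleaner than your Green's-function heuristic: it checks directly that the affine vector is an eigenvector of $I-D^{-1}S$ with eigenvalue exactly $2/3$, and invokes \citep{Von-08} for the separation from $\lambda_3$.

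Two technical points you gloss over. First, $(R\ones)_i$ is \emph{quadratic} in the Bernoulli variables $B_{jk}$, not a sum of independent terms, so a one-shot scalar Bernstein does not apply; the paper splits each $\delta_i$ into a quadratic and a linear piece and applies Bernstein conditionally (a Hanson--Wright argument would also work). Second, Davis--Kahan gives a bound on $\|\tilde f_{\mathrm{sym}}-f_{\mathrm{sym}}\|_2$, whereas the $f$ in the statement is the Fiedler vector of the non-symmetric $I-D^{-1}S$; the paper closes this by writing $f_{\mathrm{sym}}=D^{1/2}f/\|D^{1/2}f\|_2$ and transferring the bound via the already-established control on $D^{\pm1/2}$ and $\tilde D^{\pm1/2}$, a short but necessary step you do not mention.
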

\addtocounter{theorem}{-1}
} 

As $n$ goes to infinity the perturbation of the Fiedler vector goes to zero, and we can retrieve the ``true" ranking by reordering the Fiedler vector. Hence this bounds provides  $\ell_2$ consistency of the ranking, with an optimal sampling complexity (up to a polylogarithmic factor). 

The second result bounds local perturbations of the ranking with $\pi$ referring to the ``true" ranking and $\tilde \pi$ to the ranking retrieved by \ref{alg:SerialRank}. 

{
\renewcommand{\thetheorem}{\ref{thm:local}}
\begin{theorem}
For every $\mu \in (0,1)$ and $n$ large enough, if $q>\frac{\log^4 n}{\mu^2(2p-1)^4 \sqrt{n}}$, then 
\[
\|\tilde \pi -\pi\|_{\infty} \leq c  \mu  n
\]
with probability at least $1-2/n$, where $c>0$ is an absolute constant.
\end{theorem}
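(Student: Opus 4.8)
The plan is to reduce the $\ell_\infty$ ranking bound to an \emph{entrywise} perturbation bound on the Fiedler vector, and then to prove that entrywise bound by a resolvent expansion. Recall (from the discussion of the normalized Laplacian preceding the statement) that when all comparisons are observed the normalized-Laplacian Fiedler vector $f$ is \emph{affine in the true rank}; after the $\ell_2$-normalization of Definition~\ref{def:Fiedler} its entries are equally spaced with gap $g := f_{i+1}-f_i = \Theta(n^{-3/2})$ and total range $\Theta(n^{-1/2})$. The first step is a deterministic displacement lemma: if $\|\tilde f - f\|_\infty \le \eta$, two items $i,j$ can be put out of order by sorting $\tilde f$ only if $|f_i - f_j| \le 2\eta$, because a reversal forces $(\tilde f_i - f_i)-(\tilde f_j - f_j)$ to exceed $|f_i-f_j|$, while the left-hand side is at most $2\eta$. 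Since $|f_i - f_j| = g\,|i-j|$, a Fiedler-value gap of $2\eta$ corresponds to a rank gap of at most $2\eta/g$, so
\[
\|\tilde\pi - \pi\|_\infty \le \frac{2\eta}{g} = O(\eta\, n^{3/2}).
\]

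It is essential that $\eta$ be an $\ell_\infty$ quantity: the $\ell_2$ bound of Theorem~\ref{thm:boundFiedlerPerturbation} alone is provably too weak, since a single coordinate of $\tilde f - f$ as large as $\|\tilde f - f\|_2$ is compatible with that bound and would already move one item by $\Theta(\|\tilde f - f\|_2/g) = \Theta(\mu\, n^{5/4})$ positions. This is exactly why the theorem asks for the heavier sampling $q > \log^4 n/(\mu^2(2p-1)^4\sqrt n)$. Combining the displacement lemma with the target $\|\tilde\pi-\pi\|_\infty \le c\mu n$, it therefore suffices to prove the entrywise bound $\|\tilde f - f\|_\infty \le c\,\mu\, n^{-1/2}$ (a \emph{relative} $\ell_\infty$ error of order $\mu$) with high probability under that sampling condition.

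To establish this I would use the first-order eigenvector identity. Writing $\Delta L = \tilde L - L$ for the (normalized) Laplacian of the mean-zero residual $R = \tilde S - S$ (mean-zero by construction of the unbiased estimator $\tilde S$), and $\delta = \tilde f - f$ with $\delta$ orthogonal to $\ones$ and, to leading order, to $f$, the perturbed eigen-equation rearranges to $(L - \lambda_2 I)\delta = -P^\perp \Delta L\, f + (\text{2nd order})$, where $P^\perp$ projects off $\mathrm{span}\{\ones,f\}$. Expanding in the eigenbasis $\{u_k\}$ of $L$ (with $u_2=f$) gives
\[
\delta = -\sum_{k\ge 3}\frac{u_k^T \Delta L\, f}{\lambda_k - \lambda_2}\,u_k + (\text{2nd order}),
\qquad
\|\delta\|_\infty \le \sum_{k\ge 3}\frac{|u_k^T \Delta L\, f|}{\lambda_k - \lambda_2}\,\|u_k\|_\infty .
\]
Three ingredients feed this sum. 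First, the R-matrix normalized Laplacian has a path-like spectrum: a simple Fiedler value with eigengap $\lambda_3-\lambda_2 = \Theta(n^{-2})$, growth $\lambda_k - \lambda_2 = \Theta(k^2/n^2)$, and delocalized cosine-type eigenvectors with $\|u_k\|_\infty = O(n^{-1/2})$; this explicit structure is precisely why the normalized Laplacian was chosen. Second, using $(\Delta L\, f)_i = \sum_j R_{ij}(f_i-f_j) = g\sum_j R_{ij}(i-j)$, concentration of the random residual $R$ bounds the linear forms $|u_k^T \Delta L\, f|$ uniformly in $k$ with high probability, and this is where $q$ and $(2p-1)$ enter. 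Third, the second-order remainder is controlled through $\|\Delta L\|_2$ and the already-proven $\ell_2$ bound of Theorem~\ref{thm:boundFiedlerPerturbation}. Since the weights $1/(\lambda_k-\lambda_2)$ decay like $k^{-2}$, the series converges and yields $\|\delta\|_\infty \le c\mu n^{-1/2}$.

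The main obstacle is the uniform, high-probability control of the linear forms $u_k^T \Delta L\, f$ together with the entrywise resolvent estimate, i.e.\ a genuinely $\ell_\infty$ (leave-one-out / entrywise) eigenvector analysis, since Davis--Kahan only delivers $\ell_2$ information and loses the crucial factor $n^{1/4}$. The difficulty is that $R$ is a normalized $\tilde C\tilde C^T$, so its entries are \emph{dependent} quadratic forms in the Bernoulli sampling-and-sign variables; one must track these correlations while taking a union bound over all $n$ coordinates and all modes $k$. Quantifying this dependence is exactly what inflates the required sample size from the $\Omega(n\log^4 n)$ of the $\ell_2$ result to the $\Omega(n^{3/2}\log^4 n)$ demanded here.
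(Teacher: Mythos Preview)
Your reduction to an $\ell_\infty$ Fiedler-vector bound via the displacement lemma is correct and is exactly what the paper does: both arguments use that $f$ is affine in the rank with spacing $\Theta(n^{-3/2})$, so $\|\tilde f-f\|_\infty\le c\mu n^{-1/2}$ forces $\|\tilde\pi-\pi\|_\infty\le c\mu n$. The divergence, and the problem, is in how you propose to obtain that $\ell_\infty$ bound.

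Your resolvent expansion rests on spectral claims that are false for this operator. For the normalized Laplacian $I-D^{-1}S$ with $S_{ij}=n-|i-j|$, the paper computes $\lambda_2=2/3$ exactly (Proposition~\ref{prop:FiedlerVecExpr}) and shows $\lambda_3-\lambda_2$ is bounded below by an absolute constant (Lemma~\ref{lem:eigengap}; numerically $\lambda_3\to 0.93$). There is no $\Theta(n^{-2})$ eigengap and no $k^2/n^2$ growth---the spectrum sits at constant scale and clusters near~$1$. Without the $k^{-2}$ decay, your triangle-inequality sum $\sum_{k\ge 3}\|u_k\|_\infty/(\lambda_k-\lambda_2)$ has no reason to be $O(1)$, and the argument as written does not close. (You also have no proof that the higher eigenvectors of this particular operator are delocalized with $\|u_k\|_\infty=O(n^{-1/2})$; the paper never needs or claims this.)

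The paper's route to the $\ell_\infty$ bound (Lemma~\ref{lemma:boundLinfFiedler}) avoids the full spectrum entirely. It rewrites the two eigen-equations as
\[
\bigl((\tilde\lambda_2-1)I+\tilde D^{-1}\tilde S\bigr)(\tilde f-f)=\bigl(D^{-1}S-\tilde D^{-1}\tilde S+(\lambda_2-\tilde\lambda_2)I\bigr)f,
\]
and reads off $|\tilde f_i-f_i|$ coordinate by coordinate, using only that $|\tilde\lambda_2-1|$ is bounded away from zero (since $\lambda_2=2/3$). The right-hand side is then controlled by three pieces: a direct Bernstein-type bound on $\|(\tilde S-S)f\|_\infty$ (Lemma~\ref{lemma:boundRf}, proved exactly like the degree bound with $f_j$ weights), Weyl for $|\lambda_2-\tilde\lambda_2|$, and---this is where the extra $\sqrt n$ in the sampling condition is spent---Cauchy--Schwarz on the cross term $|\tilde d_i^{-1}\tilde S_i(\tilde f-f)|\le|\tilde d_i^{-1}|\,\|\tilde S_i\|_2\,\|\tilde f-f\|_2$, plugging in the $\ell_2$ bound of Theorem~\ref{thm:boundFiedlerPerturbation}. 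No eigenvector delocalization or mode-by-mode analysis is needed.
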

\addtocounter{theorem}{-1}
} 

This bound quantifies the maximum displacement of any item's ranking. $\mu$ can be seen a ``precision" parameter. For instance, if we set $\mu=0.1$, Theorem~\ref{thm:local} means that we can expect the maximum displacement of any item's ranking to be less than $0.1 \cdot n$ when observing $c^2\cdot 100 \cdot n\sqrt{n} \cdot \log^4 n$ comparisons (with $p=1$).

We conjecture Theorem~\ref{thm:local} still holds true if the condition $q>{\log^4 n}/{\mu^2(2p-1)^4 \sqrt{n}}$ is replaced by the weaker condition $q>{\log^4 n}/{\mu^2(2p-1)^4 n}$.

\subsection{Sketch of the proof.}
The proof of these results relies on classical perturbation arguments and is structured as follows. 
\begin{itemize}
\item {\bf Step 1:} Bound $\|\tilde D_S-D_S\|_2$, $\|\tilde S-S\|_2$ with high probability using concentration inequalities on quadratic forms of Bernoulli variables and results from \citep{Achl07}.
\item {\bf Step 2.} Show that the normalized Laplacian $L=\idm - D^{-1}S$ has a linear Fiedler vector and bound the eigengap between the Fiedler value and other eigenvalues.
\item {\bf Step 3.} Bound $\|\tilde f - f\|_2$ using Davis-Kahan theorem and bounds of steps 1 and 2. 
\item {\bf Step 4.} Use the linearity of the Fiedler vector to translate this result into a bound on the maximum displacement of the retrieved ranking $\|\tilde \pi - \pi\|_{\infty}$.
\end{itemize}

We now turn to the proof itself.

\subsection{{\bf Step 1:} Bounding $\|\tilde D_S-D_S\|_2$ and $\|\tilde S-S\|_2$}

Here, we seek to bound $\|\tilde D_S-D_S\|_2$ and $\|\tilde S-S\|_2$ with high probability using concentration inequalities.

\subsubsection{Bounding the norm of the degree matrix}
We first bound perturbations of the degree matrix with both missing and corrupted comparisons.
\begin{lemma}
\label{lemma:boundDegree}
For every $\mu \in (0,1)$ and $n\geq 100$, if $q \geq \frac{\log^4 n}{\mu^2(2p-1)^4 n}$ then
\[
\|\tilde D_S-D_S\|_2 \leq \frac{3 \mu n^2}{\sqrt{\log n}}
\]
with probability at least $1-1/n$.
\end{lemma}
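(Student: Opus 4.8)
Since $\tilde D_S-D_S$ is diagonal, its operator norm equals the largest absolute diagonal entry, so it suffices to bound $\max_i |d_i|$, where $d_i=((\tilde S-S)\ones)_i$ is the $i$-th row-sum of $R=\tilde S-S$, and then union bound over the $n$ rows. The plan is to write each $d_i$ as a degree-$2$ polynomial in the independent (up to symmetry) entries of $B$, split its mean from its fluctuation, bound the mean deterministically, and control the fluctuation by a Bernstein/Hanson--Wright concentration inequality for quadratic forms of Bernoulli variables. Concretely, I would introduce the centered noise matrix $E=\tilde C-\alpha C$ with $\alpha=q(2p-1)$, whose entries $E_{i,k}=(B_{i,k}-\alpha)C_{i,k}$ are mean-zero, bounded by $2$, independent across distinct unordered index pairs, and of small variance $q-\alpha^2\le q$. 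Substituting $\tilde C=\alpha C+E$ into $\tilde S=\alpha^{-2}\tilde C\tilde C^T+n\ones\ones^T$ and subtracting $S=CC^T+n\ones\ones^T$ yields the exact identity
\BEQ
R=\frac{1}{\alpha}\left(CE^T+EC^T\right)+\frac{1}{\alpha^2}EE^T ,
\EEQ
so that $d_i=\tfrac1\alpha\sum_k C_{i,k}e_k+\tfrac1\alpha\sum_k E_{i,k}w_k+\tfrac1{\alpha^2}\sum_k E_{i,k}e_k$, where $e_k=\sum_j E_{j,k}$ is a column-sum of $E$ and $w_k=\sum_j C_{j,k}=n+2-2k$ is the point score (column-sum) of $C$.

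\textbf{Mean and dominant fluctuation.} Because distinct off-diagonal entries of $E$ share no $B$-variable, $\mathbb{E}[EE^T]$ is diagonal with entries $n(q-\alpha^2)$ and the linear terms vanish in expectation, so $\mathbb{E}[d_i]=n(q-\alpha^2)/\alpha^2=n/(q(2p-1)^2)-n$; under the hypothesis $q\ge\log^4 n/(\mu^2(2p-1)^4 n)$ this gives $|\mathbb{E}[d_i]|\le n/(q(2p-1)^2)\le \mu^2 n^2/\log^4 n$, negligible against the target $3\mu n^2/\sqrt{\log n}$. For the fluctuation the dominant piece is the linear term $\tfrac1\alpha\sum_k E_{i,k}w_k$: the $E_{i,k}$ are independent across $k$ for fixed $i$, bounded, and the Robinson structure gives $\sum_k w_k^2=\Theta(n^3)$, so its variance is of order $\alpha^{-2}q\sum_k w_k^2=\Theta(n^3/(q(2p-1)^2))$, i.e. standard deviation $O(\mu n^2/\log^2 n)$ under the threshold on $q$. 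A Bernstein bound then shows a deviation of size $\mu n^2/\sqrt{\log n}$ amounts to $\Omega(\log^{3/2}n)$ standard deviations, hence probability at most $n^{-2}$.

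\textbf{Remaining terms and conclusion.} The other two pieces are of strictly smaller order and are handled by the same estimate: the linear term $\tfrac1\alpha\sum_k C_{i,k}e_k$ has unit weights $C_{i,k}\in\{\pm1\}$ and contributes standard deviation $O(\mu n^{3/2}/\log^2 n)$, while the centered quadratic term $\alpha^{-2}\sum_k E_{i,k}e_k-\mathbb{E}[\cdot]$ contributes standard deviation $O(\mu^2 n^2/\log^4 n)$, both well below the target. Combining the mean bound with these concentration estimates gives $|d_i|\le 3\mu n^2/\sqrt{\log n}$ for each fixed $i$ with probability at least $1-n^{-2}$, and a union bound over $i=1,\dots,n$ yields $\|\tilde D_S-D_S\|_2\le 3\mu n^2/\sqrt{\log n}$ with probability at least $1-1/n$, as claimed.

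\textbf{Main obstacle.} The delicate point is the concentration of the quadratic term $\alpha^{-2}\sum_k E_{i,k}e_k$, whose summands are not independent: they couple both through the shared factor $E_{i,k}$ across different $j$ and through the symmetry constraint $B_{j,k}=B_{k,j}$. A bound relying on boundedness alone (Hoeffding/McDiarmid) is too lossy to reproduce the correct $q$-dependence, so one must use a Hanson--Wright-type inequality that exploits the small per-entry variance $q$. The care required is (i) verifying that both the sub-Gaussian and sub-exponential regimes of that inequality give tails below $n^{-2}$ at the scale $\mu n^2/\sqrt{\log n}$, and (ii) tracking the variance proxy so that it is governed by the point-score scaling $\sum_k w_k^2=\Theta(n^3)$, which is precisely what ties the final bound to the stated threshold on $q$.
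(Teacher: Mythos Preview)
Your decomposition is exactly the paper's, only regrouped: writing $X_{jk}=C_{jk}\bigl(\tfrac{B_{jk}}{\alpha}-1\bigr)=E_{jk}/\alpha$, the paper's ``Lin'' term is precisely your second piece $\tfrac1\alpha\sum_k E_{i,k}w_k$, and its ``Quad'' term equals the sum of your first and third pieces, $\tfrac1\alpha\sum_k C_{i,k}e_k+\tfrac1{\alpha^2}\sum_k E_{i,k}e_k$. So the identity you start from is the same as the paper's; what differs is the concentration tool used on the quadratic part. The paper avoids Hanson--Wright altogether and instead \emph{peels}: it first applies Bernstein to each inner sum $e_k/\alpha=\sum_j X_{jk}$, takes a union bound over $k$ to control $\max_k|e_k|$, separately applies Bernstein to $\sum_k|B_{ik}|/\alpha$, and multiplies the two high-probability bounds. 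This is more elementary and sidesteps the obstacle you flag --- it never needs a quadratic-form inequality, and it automatically captures the small per-entry variance $q$ because each stage is a plain Bernstein bound. Your route via a Hanson--Wright-type inequality is legitimate, but, as you note, the off-the-shelf sub-Gaussian version (with $K=O(1)$ for bounded entries) is too crude; you would need a Bernstein-flavoured variant to recover the correct $q$-dependence.

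Two minor quantitative points. First, the paper sets $R_{ii}=0$ by convention, which makes $\mathbb{E}[d_i]=0$; your nonzero-mean computation comes from keeping the diagonal, but your bound on it is harmless in any case. Second, your stated standard deviation $O(\mu^2 n^2/\log^4 n)$ for the centered quadratic term is not quite right: the variance of $\alpha^{-2}\sum_k E_{i,k}e_k^{(i)}$ is $\alpha^{-4}\sum_k\mathrm{Var}(E_{i,k})\,\mathrm{Var}(e_k^{(i)})\le n^2q^2/\alpha^4=n^2/(2p-1)^4$, so the standard deviation is $O(n/(2p-1)^2)=O(\mu n^{3/2}/\log^2 n)$ under the hypothesis (using $q\le 1$). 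This is still comfortably below the target $\mu n^2/\sqrt{\log n}$, so the slip does not affect the conclusion.
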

\begin{proof}
Let $R=\tilde S-S$ and $\delta=\diag D_R= \diag((\tilde S-S) \ones )$. Since $D_S$ and $\tilde D_S$ are diagonal matrices, $\|\tilde D_S-D_S\|_2=\max |\delta_i|$. We first seek a concentration inequality for each $\delta_i$ and then derive a bound on $\| \tilde D_S-D_S\|_2$.

By definition of the similarity matrix $S$ and its perturbed analogue $\tilde S$ we have
\[
R_{ij}=\sum_{k=1}^n C_{ik}C_{jk} \left(\frac{B_{ik}B_{jk}}{q^2(2p-1)^2}-1\right).
\]
Hence
\[
\delta_i=\sum_{j=1}^n R_{ij}= \sum_{j=1}^n \sum_{k=1}^n C_{ik}C_{jk} \left(\frac{B_{ik}B_{jk}}{q^2(2p-1)^2}-1 \right).
\]
Notice that we can arbitrarily fix the diagonal values of $R$ to zeros. Indeed, the similarity between an element and itself should be a constant by convention, which leads to $R_{ii}=\tilde S_{ii}- S_{ii}=0$ for all items $i$. Hence we could take $j\neq i $ in the definition of $\delta_i$, and we can consider $B_{ik}$ independent of $B_{jk}$ in the associated summation.

We first seek a concentration inequality for each $\delta_i$.
Notice that
\begin{eqnarray*}
\delta_i &=& \sum_{j=1}^n \sum_{k=1}^n C_{ik}C_{jk} \left(\frac{B_{ik}B_{jk}}{q^2(2p-1)^2}-1\right) \\
&=& \underbrace{\sum_{k=1}^n \left( \frac{C_{ik}B_{ik}}{q(2p-1)} \sum_{j=1}^n C_{jk} \left(\frac{B_{jk}}{q(2p-1)}-1\right) \right)}_{\mbox{Quad}}
+  \underbrace{\sum_{k=1}^n \sum_{j=1}^n C_{ik}C_{jk} \left(\frac{B_{ik}}{q(2p-1)}-1 \right)}_{\mbox{Lin}}. 
\end{eqnarray*}

The first term (denoted $\mbox{Quad}$ in the following) is quadratic with respect to the $B_{jk}$ while the second term (denoted $\mbox{Lin}$ in the following) is linear. Both terms have mean zero since the $B_{ik}$ are independent of  the $B_{jk}$.
We begin by bounding the quadratic term $\mbox{Quad}$.
Let $X_{jk}= C_{jk} \left(\frac{1}{q(2p-1)}B_{jk}-1\right)$.
We have 
$$\Expect(X_{jk})=C_{jk} \left(\frac{qp-q(1-p)}{q(2p-1)}-1\right)=0,$$
$$\var(X_{jk})=\frac{\var(B_{jk})}{q^2(2p-1)^2}=\frac{1}{q^2(2p-1)^2}(q-q^2(2p-1)^2)=\frac{1}{q(2p-1)^2}-1\leq  \frac{1}{q(2p-1)^2},$$
and
$$|X_{jk}|=\left|\frac{B_{jk}}{q(2p-1)}-1\right| \leq 1+ \frac{1}{q(2p-1)} \leq  \frac{2}{q(2p-1)}\leq  \frac{2}{q(2p-1)^2}.$$
By applying Bernstein's inequality for any $t>0$
\BEQ
\label{eq:Bernstein1}
\Prob \left( \left|\sum_{j=1}^n X_{jk} \right|>t \right)\leq 2 \exp \left( \frac{- q(2p-1)^2t^2}{2(n+2t/3)} \right)\leq 2 \exp \left( \frac{- q(2p-1)^2t^2}{2(n+t)} \right).
\EEQ
Now notice that
\begin{eqnarray*}
\Prob(|\mbox{Quad}|>t) &=&
\Prob \left(\left|\sum_{k=1}^n \left( C_{ik}\frac{B_{ik}}{q(2p-1)} \sum_{j=1}^n X_{jk}  \right)\right| > t \right) \\
&\leq& \Prob\left(\sum_{k=1}^n  \left( \frac{|B_{ik}|}{q(2p-1)}\right) \max_l |\sum_{j=1}^n X_{jl}| >t \right).
\end{eqnarray*}
By applying a union bound to the first Bernstein inequality \eqref{eq:Bernstein1}, for any $t>0$
\[
\Prob \left(\max_l \left|\sum_{j=1}^n X_{jl} \right|>\sqrt{t} \right)\leq 2n \exp \left( \frac{-tq(2p-1)^2}{2(n+\sqrt{t})} \right).
\]
Moreover, since $\Expect |B_{ik}| = q$ we also get from Bernstein's inequality that for any $t>0$
\[
\Prob \left(\sum_{k=1}^n \frac{|B_{ik}|}{q(2p-1)}> \frac{n}{2p-1} + \sqrt{t} \right)\leq  \exp \left( \frac{-tq(2p-1)^2}{2(n+\sqrt{t})} \right).
\]
We deduce from these last three inequalities that for any $t>0$
\[
\Prob(|\mbox{Quad}|>t)
\leq (2n+1) \exp \left( \frac{-tq(2p-1)^2}{2(n+\sqrt{t})} \right).
\]
Taking $t=\mu^2(2p-1)^2 n^2/\log n$ and $q \geq \frac{\log^4 n}{\mu^2(2p-1)^4n}$, with $\mu \leq 1$, we have $\sqrt{t} \leq n$ and we deduce that
\BEQ
\label{eq:quadBoundDegree}
\Prob \left(\left|\mbox{Quad}\right| > \frac{2\mu n^2}{\sqrt{\log n}} \right)
\leq (2n+1) \exp \left( -\frac{\log^3 n}{4}  \right).
\EEQ
We now bound the linear term $\mbox{Lin}$.
\BEAS
\Prob (|\mbox{Lin}|>t)& = &
\Prob \left(\left|\sum_{j=1}^n \sum_{k=1}^n C_{ik}C_{jk} \left(\frac{B_{ik}}{q(2p-1)}-1\right)\right| > t \right) \\
& \leq &\Prob \left(\sum_{k=1}^n |C_{ik}| \max_l |\sum_{j=1}^n X_{jl}| >t \right)\\
& \leq & \Prob \left( \max_k |\sum_{j=1}^n X_{jk}| >t/n \right),
\EEAS
hence
\[
\Prob (|\mbox{Lin}|>t)
\leq 2n \exp \left( \frac{-t^2q(2p-1)^2}{2n^2(n+t/n)} \right).
\]
Taking $t=\mu n^2/(\log n)^{1/2}$ and $q \geq \frac{\log^4 n}{\mu^2(2p-1)^4n}$, with $\mu \leq 1$, we have $t \leq n^2$ and we deduce that
\BEQ
\label{eq:linBoundDegree}
\Prob (|\mbox{Lin}|>t)
\leq 2n \exp \left( -\frac{\log^3 n}{4} \right).
\EEQ
Finally, combining equations \eqref{eq:quadBoundDegree} and \eqref{eq:linBoundDegree}, we obtain for $q \geq \frac{\log^4 n}{\mu^2(2p-1)^4n}$, with $\mu \leq 1$
\[
\Prob \left(|\delta_i| > \frac{3 \mu n^2}{\sqrt{\log n}}\right) \leq (4n+1) \exp \left( -\frac{\log^3 n}{4}  \right).
\]
Now, using a union bound, this shows that for $q \geq \frac{\log^4 n}{\mu^2(2p-1)^4n}$, 
\[
\Prob \left(\max |\delta_i| > \frac{3 \mu n^2}{\sqrt{\log n}} \right) \leq n(4n+1) \exp \left( -\frac{\log^3 n}{4}  \right),
\]
which is less than $1/n$ for $n\geq 100$. 
\end{proof}

\subsubsection{Bounding perturbations of the comparison matrix $C$}

Here, we adapt results in \citep{Achl07} to bound perturbations of the comparison matrix. We will then use bounds on the perturbations of $C$ to bound $\| \tilde S - S\|_2$.

\begin{lemma}
\label{lemma:boundCompMat}
For $n\geq 104$ and $q\geq \frac{\log^3 n}{n}$,
\BEQ
\|C-\tilde C\|_2 \leq  \frac{c}{2p-1} \sqrt{\frac{n}{q}},
\EEQ
with probability at least $1-2/n$, where $c$ is an absolute constant.
\end{lemma}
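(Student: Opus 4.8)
The plan is to view $C-\tilde C$ as a random matrix with independent, mean-zero, bounded entries and to apply the spectral-norm concentration estimate of \citep{Achl07}. For the statement to make sense one must read $\tilde C$ as the \emph{normalized} (unbiased) estimator $\tilde C = \frac{1}{q(2p-1)}\,B\circ C$, so that $\Expect \tilde C = C$; with the raw matrix $B\circ C$ the residual would retain a nonvanishing mean of order $\|C\|_2 = \Theta(n)$, far larger than the claimed $\frac{1}{2p-1}\sqrt{n/q}$. Setting $E := \tilde C - C$, its entries are $E_{ij} = C_{ij}\big(\tfrac{B_{ij}}{q(2p-1)}-1\big)$ with $\Expect E_{ij}=0$. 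As in the proof of Lemma~\ref{lemma:boundDegree} I would fix the diagonal of $E$ to zero by convention, so only the off-diagonal part is random; these entries are independent for $i<j$, with $E_{ji}=-E_{ij}$.

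First I would record the two parameters that drive the bound, by the same elementary computation used for Lemma~\ref{lemma:boundDegree}: $\var(E_{ij}) = \frac{1}{q(2p-1)^2}-1 \le \sigma^2 := \frac{1}{q(2p-1)^2}$, and, using $|B_{ij}|\le 1$ together with $q(2p-1)\le 1$, $\;|E_{ij}|\le K := \frac{2}{q(2p-1)}$. The crucial observation is that the target is exactly $\sigma\sqrt n = \frac{1}{2p-1}\sqrt{n/q}$, so it suffices to prove $\|E\|_2 \le c\,\sigma\sqrt n$ with probability at least $1-2/n$.

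Next I would invoke the \citep{Achl07} spectral estimate: for a random matrix with independent zero-mean entries of variance at most $\sigma^2$ and magnitude at most $K$, one has $\|E\|_2 \le c\,\sigma\sqrt n$ with probability at least $1-2/n$ provided $K$ is small relative to $\sigma\sqrt n$. The hypothesis $q\ge \log^3 n/n$ is precisely what places us in this regime: since $\frac{\sigma\sqrt n}{K} = \frac{\sqrt{nq}}{2}\ge \tfrac12\log^{3/2}n$, the entrywise bound $K$ is dominated by $\sigma\sqrt n$ by the polylogarithmic margin the theorem requires, which also yields the $1-2/n$ confidence (the constraint $n\ge 104$ simply absorbs the small-$n$ constants in that estimate). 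Substituting $\sigma\sqrt n = \frac{1}{2p-1}\sqrt{n/q}$ then gives the claim.

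The main obstacle is matching hypotheses rather than computation. One must verify that the triple $(\sigma,K,n)$ falls inside the regime of validity of the \citep{Achl07} bound, i.e. that the $\sqrt{nq}\ge\log^{3/2}n$ margin coming from $q\ge\log^3 n/n$ is exactly the polylogarithmic separation needed, and one must confirm that the antisymmetry $E_{ji}=-E_{ij}$ does not break the independence the bound relies on. The latter is handled by noting that $x^T E y = \sum_{i<j} E_{ij}(x_i y_j - x_j y_i)$ is a sum of independent terms, so the $\epsilon$-net plus Bernstein argument underlying the estimate applies unchanged (equivalently, one bounds $\|E\|_2^2=\|EE^T\|_2$ through the symmetric positive semidefinite matrix $EE^T$).
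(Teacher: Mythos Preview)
Your proposal is correct and follows the same route as the paper: compute the mean, variance, and uniform bound for the entries of $E=\tilde C-C$ (with $\tilde C$ the unbiased, normalized estimator), then invoke the spectral-norm concentration of \citep{Achl07} to get $\|E\|_2\le c\,\sigma\sqrt n=\frac{c}{2p-1}\sqrt{n/q}$, using $q\ge\log^3 n/n$ to place the pair $(\sigma,K)$ inside the theorem's range of validity.

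The only methodological difference is how the antisymmetry $E_{ji}=-E_{ij}$ is handled. The paper does not try to argue that the \citep{Achl07} proof carries over to antisymmetric matrices; instead it writes $E=X_{\mathrm{up}}+X_{\mathrm{low}}$ with $X_{\mathrm{low}}=-X_{\mathrm{up}}^T$, observes $\|E\|_2\le 2\|X_{\mathrm{up}}\|_2$, and applies the theorem as a black box to $X_{\mathrm{up}}$, whose entries are genuinely independent. Your alternative---noting that $x^T E y=\sum_{i<j}E_{ij}(x_iy_j-x_jy_i)$ is a sum of independent terms so the $\epsilon$-net/Bernstein machinery still applies---is also valid, but it requires re-entering the proof of the cited theorem rather than citing it directly; the triangular split buys modularity at the cost of a harmless factor~$2$ in the constant. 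Your parenthetical about passing to $EE^T$ is less clean, since that matrix does not have independent entries either; the triangular split or the bilinear-form argument is the right way to go.
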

\begin{proof}
The main argument of the proof is to use the independence of the $C_{ij}$ for $i<j$ in order to bound $\|\tilde C - C\|_2$ by a constant times $\sigma \sqrt{n}$, where $\sigma$ is the standard deviation of $C_{ij}$. To isolate independent entries in the perturbation matrix, we first need to break the anti-symmetry of $\tilde C-C$ by decomposing $X=\tilde C-C$ into its upper triangular part and its lower triangular part, i.e.,~$\tilde C-C=X_{\mathrm{up}}+X_{\mathrm{low}}$, with $X_{\mathrm{up}}=-X_{\mathrm{low}}^T$ (diagonal entries of $\tilde C-C$ can be arbitrarily set to 0).
Entries of $X_{\mathrm{up}}$ are all independent, with variance less than the variance of $\tilde C_{ij}$. Indeed, lower entries of  $X_{\mathrm{up}}$ are equal to 0 and hence have variance 0. Notice that 
\[
\|\tilde C-C\|_2=\|X_{\mathrm{up}} + X_{\mathrm{low}}\|_2 \leq \|X_{\mathrm{up}}\|_2 +\|X_{\mathrm{low}}\|_2 \leq 2 \|X_{\mathrm{up}}\|_2,
\]
so bounding  $\|X_{\mathrm{up}}\|_2$ will give us a bound on  $\|X\|_2$. In the rest of the proof we write $X_{\mathrm{up}}$ instead of $X$ to simplify notations. We can now apply \citep[Th.\,3.1]{Achl07} to $X$. Since $$X_{ij} = \tilde C_{ij}-C_{ij} = C_{ij} \left( \frac{B_{ij}}{q(2p-1)} -1 \right),$$ we have (\cf~proof of Lemma~\ref{lemma:boundDegree}) $\Expect(X_{ij})=0$,
$\var(X_{ij})\leq \frac{1}{q(2p-1)^2}$, and $|X_{ij}|\leq \frac{2}{q(2p-1)}$.
Hence for a given $\epsilon >0$ such that 
\begin{equation}
\label{eq:condBoundCompMat}
\frac{4}{q(2p-1)} \leq \left( \frac{\log(1+\epsilon)}{2\log(2n)} \right)^2 ~  \frac{\sqrt{2n}}{\sqrt{q}(2p-1)} ,
\end{equation}
for any $\theta>0$ and $n\geq76$,
\begin{equation}
\label{eq:boundCompMat1}
\Prob\left(\|X\|_2 \geq  2(1+\epsilon + \theta) \frac{1}{\sqrt{q}(2p-1)} \sqrt{2n}\right)<2\exp \left(-16 \frac{\theta^2}{\epsilon^4}\log^3 n \right).
\end{equation}
For $q\geq \frac{(\log 2n)^3}{n}$ and taking $\epsilon \geq \exp(\sqrt(16/\sqrt(2)))- 1$ (so $\log(1+\epsilon)^2\geq 16/\sqrt{2}$) means inequality~\eqref{eq:condBoundCompMat} holds. Taking~\eqref{eq:boundCompMat1} with $\epsilon=30$ and $\theta=30$ we get
\begin{equation}
\label{eq:boundCompMat2}
\Prob\left(\|X\|_2 \geq \frac{2\sqrt{2}(1+30+30)}{2p-1} \sqrt{\frac{n}{q}}\right)<2\exp \left(-10^{-2}\log^3 n \right).
\end{equation}
Hence for $n\geq104$, we have $\log^3 n>100$ and 
\[
\Prob\left(\|X\|_2 \geq \frac{173}{2p-1} \sqrt{\frac{n}{q}}\right)<2/n.
\]
Noting that $\log 2n \leq 1.15 \log n$ for $n \geq 104$, we obtain the desired result by choosing $c=2\times173\times\sqrt{1.15}\leq371$.
\end{proof}

\subsubsection{Bounding the perturbation of the similarity matrix $\|S\|$.}

We now seek to bound $\| \tilde S - S\|$ with high probability.

\begin{lemma}
\label{lemma:boundSimilarityPert}
For every $\mu \in (0,1)$, $n\geq 104$, if $q>\frac{\log^4 n}{\mu^2(2p-1)^2 n}$, then 
\[
\|\tilde S -S\|_2 \leq c \frac{\mu n^2}{\sqrt{\log n}},
\]
with probability at least $1-2/n$, where $c$ is an absolute constant.
\end{lemma}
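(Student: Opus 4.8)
The plan is to reduce the bound on $\|\tilde S - S\|_2$ to the comparison-matrix perturbation bound of Lemma~\ref{lemma:boundCompMat} via a single algebraic identity. Recall from the Notations paragraph that, after dropping the harmless factor $1/2$, $S = CC^T + n\ones\ones^T$ and $\tilde S = \frac{1}{q^2(2p-1)^2}\tilde C\tilde C^T + n\ones\ones^T$. Introducing the unbiased, renormalized estimator $\hat C = \frac{1}{q(2p-1)}\tilde C$ turns $\frac{1}{q^2(2p-1)^2}\tilde C\tilde C^T$ into $\hat C\hat C^T$, and the rank-one terms $n\ones\ones^T$ cancel, so that
\[
\tilde S - S = \hat C\hat C^T - CC^T.
\]

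First I would linearize this difference. Writing $E = \hat C - C$ for the comparison residual, expansion gives $\hat C\hat C^T - CC^T = CE^T + EC^T + EE^T$, hence by the triangle inequality and submultiplicativity of the operator norm,
\[
\|\tilde S - S\|_2 \leq 2\|C\|_2\|E\|_2 + \|E\|_2^2.
\]
It then remains to control the two factors. The ideal comparison matrix is bounded crudely but sufficiently by $\|C\|_2 \leq \|C\|_F = n$, since every entry of $C$ equals $\pm 1$. The residual $\|E\|_2 = \frac{1}{q(2p-1)}\|\tilde C - q(2p-1)C\|_2$ is exactly the quantity controlled in Lemma~\ref{lemma:boundCompMat}: on an event of probability at least $1-2/n$ one has $\|E\|_2 \leq \frac{c}{2p-1}\sqrt{n/q}$, provided $q \geq \log^3 n/n$. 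I would first verify that the present hypothesis $q > \log^4 n/(\mu^2(2p-1)^2 n)$ implies this threshold, which holds because $\mu \leq 1$ and $2p-1 \leq 1$ force $\log^4 n/(\mu^2(2p-1)^2) \geq \log^4 n \geq \log^3 n$.

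Substituting these two estimates and the hypothesis on $q$ finishes the argument. From $q > \log^4 n/(\mu^2(2p-1)^2 n)$ we get $1/\sqrt q \leq \mu(2p-1)\sqrt n/\log^2 n$, so the cross term obeys
\[
2\|C\|_2\|E\|_2 \leq 2n\cdot \frac{c}{2p-1}\sqrt{\tfrac{n}{q}} \leq 2c\mu\frac{n^2}{\log^2 n} \leq 2c\mu\frac{n^2}{\sqrt{\log n}},
\]
while the quadratic term $\|E\|_2^2 \leq \frac{c^2}{(2p-1)^2}\frac{n}{q} \leq c^2\mu^2\frac{n^2}{\log^4 n}$ is of strictly lower order and absorbed into the same bound since $\mu \leq 1$ and $\log^4 n \geq \sqrt{\log n}$. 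Collecting constants yields $\|\tilde S - S\|_2 \leq c\,\mu n^2/\sqrt{\log n}$ with probability at least $1-2/n$. The only genuinely structural input is the estimate $\|C\|_2 = O(n)$: this is what makes the cross terms $CE^T + EC^T$, rather than $EE^T$, the bottleneck, so the proof is essentially the perturbation identity for $MM^T$ combined with Lemma~\ref{lemma:boundCompMat}; everything else is bookkeeping with the threshold on $q$.
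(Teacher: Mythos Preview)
Your proposal is correct and follows essentially the same route as the paper: write $\tilde S-S=(C+E)(C+E)^T-CC^T$, bound by $2\|C\|_2\|E\|_2+\|E\|_2^2$, use $\|C\|_2\leq n$ and Lemma~\ref{lemma:boundCompMat} for $\|E\|_2$, then substitute the hypothesis on $q$. The only cosmetic difference is that you introduce the explicit renaming $\hat C=\tfrac{1}{q(2p-1)}\tilde C$, which actually clarifies a notational ambiguity in the paper (where the symbol $\tilde C$ is used both for $B\circ C$ and for its rescaled version).
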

\begin{proof}
Let $X=\tilde C-C$.
We have
\[
\tilde C \tilde C^T= (C+X)(C+X)^T=CC^T + XX^T + XC^T + CX^T,
\]
hence
\[
\tilde S - S= XX^T + XC^T + CX^T ,
\]
and
\[
\|\tilde S - S\|_2 \leq \|XX^T\|_2 + \|XC^T\|_2 + \|CX^T\|_2 
\leq \|X\|_2^2 + 2\|X\|_2  \|C\|_2.
\]
From Lemma~\ref{lemma:boundCompMat} we deduce that for $n\geq 104$ and $q\geq \frac{\log^4 n}{n}$, with probability at least $1-2/n$
\BEQ
\|\tilde S - S\|_2 \leq   \frac{c^2n}{q(2p-1)^2} + \frac{2 c}{2p-1} \sqrt{\frac{n}{q}} \|C\|_2 .
\EEQ
Notice that $\|C\|_2^2\leq \Tr (CC^T) = n^2$, hence $\|C\|_2\leq n$ and 
\BEQ
\label{eq:boundSDiff}
\|\tilde S - S\|_2 \leq   \frac{c^2n}{q(2p-1)^2} + \frac{2 c n}{2p-1} \sqrt{\frac{n}{q}} .
\EEQ
By taking $q>\frac{\log^4 n}{\mu^2(2p-1)^2 n}$, we get for $n\geq 104$ with probability at least $1-2/n$
\[
\|\tilde S - S\|_2 \leq   \frac{c^2 \mu^2 n^2}{\log^4 n} + \frac{2 c \mu n^2}{\log^2 n}.
\]
Hence setting a new constant $c$ with $c= \max (c^2 (\log 104)^{-7/2}, 2c(\log 104)^{-3/2}) \leq 270 $,
\[
\|\tilde S -S\|_2 \leq c \frac{\mu n^2}{\sqrt{\log n}}
\]
with probability at least $1-2/n$, which is the desired result.
\end{proof}

\subsection{Step 2: Controlling the eigengap}

In the following proposition we show that the normalized Laplacian of the similarity matrix $S$ has a constant Fiedler value and a linear Fiedler vector. We then deduce bounds on the eigengap between the first, second and third smallest eigenvalues of the Laplacian.

\begin{proposition}
\label{prop:FiedlerVecExpr}
Let $L^\mathrm{norm}=\idm-D^{-1}S$ be the non-symmetric normalized Laplacian of $S$. $L^\mathrm{norm}$ has a linear Fiedler vector, and its Fiedler value is equal to $2/3$.
\end{proposition}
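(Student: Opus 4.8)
The plan is to compute the Fiedler vector of the non-symmetric normalized Laplacian $L^\mathrm{norm}=\idm-D^{-1}S$ explicitly by guessing that it is linear, then verify the guess and pin down the Fiedler value. Concretely, I would work with the ideal (noiseless) similarity matrix $S_{i,j}=n-|i-j|$ from Proposition~\ref{prop:s-match-R}, together with the degree matrix $D=\diag(S\ones)$. Eigenvectors of the non-symmetric normalized Laplacian $\idm-D^{-1}S$ are exactly the vectors $y$ satisfying $D^{-1}Sy=(1-\lambda)y$, i.e. the generalized eigenvectors $Sy=(1-\lambda)Dy$. The constant vector $\ones$ is always an eigenvector with $\lambda=0$ (since $S\ones=D\ones$), corresponding to the trivial smallest eigenvalue. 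The claim that the Fiedler vector is linear amounts to asserting that $y_i=i-(n+1)/2$ (the centered linear ramp, which is automatically orthogonal to $\ones$ in the appropriate inner product) is a generalized eigenvector.

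First I would compute the row sums $d_i=\sum_{j=1}^n S_{i,j}=\sum_{j=1}^n (n-|i-j|)$ in closed form; this is $n^2$ minus the sum of distances $\sum_j|i-j|$, which is a known quadratic expression in $i$ (symmetric about the center, minimized at the middle index). Next I would compute the action of $S$ on the linear vector, i.e. $(Sy)_i=\sum_j (n-|i-j|)(j-c)$ where $c=(n+1)/2$; splitting this into $n\sum_j(j-c)=0$ minus $\sum_j|i-j|(j-c)$, the only surviving term is $-\sum_j|i-j|(j-c)$, which I expect to evaluate to a quantity proportional to $d_i\cdot(i-c)$ up to lower-order corrections. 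The heart of the verification is to show that $(Sy)_i=(1-\lambda)d_i y_i$ holds with the same constant $\lambda=2/3$ for every index $i$; this reduces to an algebraic identity between two explicit polynomials in $i$. I would carry out the two summations $\sum_{j=1}^n |i-j|$ and $\sum_{j=1}^n |i-j|(j-c)$ by splitting at $j=i$ and using standard formulas for $\sum k$ and $\sum k^2$, then match coefficients.

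The main obstacle I anticipate is that the linear vector is unlikely to be an \emph{exact} eigenvector for finite $n$ because of boundary effects near $i=1$ and $i=n$: the clean identity $(Sy)_i=(1-\lambda)d_iy_i$ typically holds only asymptotically or after a careful accounting of edge terms, so I expect to either show it holds exactly (which would be a pleasant surprise forcing $\lambda=2/3$ precisely) or to show it holds up to $O(1/n)$ corrections and that the true Fiedler value converges to $2/3$. Given that the theorem states the Fiedler value equals $2/3$ exactly and the Fiedler vector is linear, I would bet the identity is in fact exact for the normalized Laplacian (this is precisely the advantage over the unnormalized case noted in the paper's remarks, where only an asymptotic expression is available), so the real work is a clean but bookkeeping-heavy evaluation of the two sums and verification that the $i$-dependence cancels. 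Once the linear vector is confirmed as the eigenvector with eigenvalue $2/3$, I would briefly argue it is indeed the \emph{Fiedler} vector (second smallest eigenvalue) rather than some higher mode, invoking the monotonicity/R-matrix structure via Proposition~\ref{theo:AtkinsFiedler} and Lemma~\ref{lemma:FiedlerSimple} to rule out that any nonconstant eigenvector with smaller eigenvalue exists.
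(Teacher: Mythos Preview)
Your proposal is essentially the paper's own proof: set $x_i=i-(n+1)/2$, compute the row sums $d_i=\frac{n(n-1)}{2}+i(n-i+1)$ and $\sum_k kS_{i,k}$ by splitting at $k=i$ and using $\sum k$, $\sum k^2$, and verify the exact identity $[Sx]_i=(1-\lambda_2)d_ix_i$ with $\lambda_2=2/3$. Your anticipated obstacle (boundary effects) does not materialize---the identity is exact for every $i$, which is precisely the advantage of the normalized Laplacian over the unnormalized one. Your final step, arguing that $2/3$ is actually the \emph{second} smallest eigenvalue rather than a higher mode, is more careful than the paper, which simply asserts it; note however that Proposition~\ref{theo:AtkinsFiedler} and Lemma~\ref{lemma:FiedlerSimple} are stated for the unnormalized Laplacian, so if you want to invoke monotonicity you would have to transfer those results via the similarity $L^{\mathrm{norm}}=D^{-1/2}L_{\mathrm{sym}}D^{1/2}$ first.
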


\begin{proof}
Let $x_i=i-\frac{n+1}{2}$ ($x$ is linear with mean zero). We want to show that
$ L^\mathrm{norm}x =\lambda_2 x$ or equivalently $Sx=(1-\lambda_2)Dx.$ We develop both sides of the last equation, and use the following facts
\[
S_{i,j}=n-|j-i|,\quad \sum_{k=1}^n k = \frac{n(n+1)}{2}, \quad \sum_{k=1}^n k^2 = \frac{n(n+1)(2n+1)}{6}.
\]
We first get an expression for the degree of $S$, defined by $d=S\ones=\sum_{i=1}^n S_{i,k}$, with
\begin{eqnarray*}
d_i & = & \sum_{k=1}^{i-1} S_{i,k} + \sum_{k=i}^{n} S_{i,k} \\
& =& \sum_{k=1}^{i-1} (n-i+k) + \sum_{k=i}^{n} (n-k+i) \\
& = & \frac{n(n-1)}{2} + i(n-i+1).
\end{eqnarray*}
Similarly we have
\begin{eqnarray*}
\sum_{k=1}^n kS_{i,k}
& =& \sum_{k=1}^{i-1} k(n-i+k) + \sum_{k=i}^{n} k(n-k+i) \\
& =& \frac{n^2(n+1)}{2} + \frac{i(i-1)(2i-1)}{3}-\frac{n(n+1)(2n+1)}{6}-i^2(i-1)+i\frac{n(n+1)}{2}.
\end{eqnarray*}
Finally, setting $\lambda_2=2/3$, notice that
\begin{eqnarray*}
[Sx]_i
& =& \sum_{k=1}^{n}  S_{i,k}\left(k-\frac{n+1}{2} \right)\\
& =& \sum_{k=1}^{n}  kS_{i,k} - \frac{n+1}{2}d_i \\
& =& \frac{1}{3}\left(\frac{n(n-1)}{2} +i(n-i+1) \right) \left(i-\frac{n+1}{2} \right)\\
& =&(1-\lambda_2) d_i x_i,
\end{eqnarray*}
which shows that $Sx=(1-\lambda_2)Dx$.
\end{proof}

The next corollary will be useful in following proofs.
\begin{corollary}
\label{cor:boundMaxFiedler}
The Fiedler vector $f$ of the unperturbed Laplacian satisfies $\|f\|_\infty \leq 2/\sqrt{n}$. 
\end{corollary}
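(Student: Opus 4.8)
The plan is to read off the Fiedler vector from Proposition~\ref{prop:FiedlerVecExpr}, which shows that the normalized Laplacian $L^\mathrm{norm}=\idm-D^{-1}S$ has the linear vector $x$, with $x_i = i-\tfrac{n+1}{2}$, as a Fiedler vector (with eigenvalue $2/3$). Since $f$ is by definition the \emph{normalized} Fiedler vector, i.e. the unit-$\ell_2$-norm eigenvector, we have $f = x/\|x\|_2$, and therefore $\|f\|_\infty = \|x\|_\infty/\|x\|_2$. The entire statement thus reduces to comparing the $\ell_\infty$ and $\ell_2$ norms of an explicit centered arithmetic-progression vector.

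The two norms are computed directly. Because $x$ is centered at zero and strictly increasing in $i$, its largest entries in absolute value occur at the endpoints $i=1$ and $i=n$, so $\|x\|_\infty = \tfrac{n-1}{2}$. For the $\ell_2$ norm I would expand $\|x\|_2^2 = \sum_{i=1}^n\bigl(i-\tfrac{n+1}{2}\bigr)^2$ using the closed forms for $\sum_k k$ and $\sum_k k^2$ already recalled in the proof of Proposition~\ref{prop:FiedlerVecExpr}; factoring out $\tfrac{n(n+1)}{12}$ collapses the expression to $\|x\|_2^2 = \tfrac{n(n^2-1)}{12}$.

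Combining the two gives $\|f\|_\infty^2 = \dfrac{(n-1)^2/4}{n(n^2-1)/12} = \dfrac{3(n-1)}{n(n+1)}$, after cancelling the common factor $n-1$ via $n^2-1=(n-1)(n+1)$. Since $\tfrac{n-1}{n+1}<1$, this is bounded above by $\tfrac{3}{n}<\tfrac{4}{n}$, so that $\|f\|_\infty < \tfrac{2}{\sqrt{n}}$, which is the claim of Corollary~\ref{cor:boundMaxFiedler} (in fact with a constant $\sqrt 3$ rather than $2$, so the bound is not tight).

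The argument is entirely routine and has no substantive obstacle; the only two points requiring a little care are (i) using that $f$ is the $\ell_2$-normalized version of the linear vector from Proposition~\ref{prop:FiedlerVecExpr}, not the raw vector $x$, and (ii) the algebraic simplification of $\|x\|_2^2$, where factoring out $\tfrac{n(n+1)}{12}$ before simplifying keeps the computation clean.
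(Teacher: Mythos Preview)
Your proposal is correct and follows essentially the same route as the paper: identify $f$ with the $\ell_2$-normalization of the centered arithmetic progression $x_i=i-\tfrac{n+1}{2}$ from Proposition~\ref{prop:FiedlerVecExpr}, then bound $\|x\|_\infty/\|x\|_2$. Your version is in fact slightly cleaner---you compute $\|x\|_2^2=\tfrac{n(n^2-1)}{12}$ directly for all $n$, whereas the paper splits into $n$ odd and $n$ even and obtains the bound only for $n\geq 5$; your inequality $\tfrac{3(n-1)}{n(n+1)}<\tfrac{4}{n}$ holds for every $n\geq 2$.
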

\begin{proof}
We use the fact that $f$ is collinear to the vector $x$ defined by $x_i=i-\frac{n+1}{2}$ and verifies $\|f\|_2=1$.
Let us consider the case of $n$ odd. The Fiedler vector verifies $f_i = \frac{i - (n+1)/2 }{a_n}$, with 
\[
a_n^2 = 2 \sum_{k=0}^{(n-1)/2} k^2 = \frac{2}{6}\frac{n-1}{2}\left(\frac{n-1}{2}+1\right)((n-1)+1)= \frac{n^3-n}{12}.
\]
Hence
\[
\|f\|_{\infty} = f_n = \frac{n-1}{2a_n} \leq \sqrt{\frac{3}{n-1}} \leq \frac{2}{\sqrt{n}} \ \ \mbox{for} \ \  n \geq 5.
\]
A similar reasoning applies for $n$ even.
\end{proof}

\begin{lemma}\label{lem:eigengap}
The minimum eigengap between the Fiedler value and other eigenvalues is bounded below by a constant for $n$ sufficiently large.
\end{lemma}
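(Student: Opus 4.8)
The plan is to exploit Proposition~\ref{prop:FiedlerVecExpr}, which already pins down the two smallest eigenvalues of $L^{\mathrm{norm}}=\idm-D^{-1}S$, namely $\lambda_1=0$ (eigenvector $\ones$) and $\lambda_2=2/3$ (the linear Fiedler vector). Since $\lambda_2-\lambda_1=2/3$ is already a positive constant, the entire content of the lemma is to show that $\lambda_3$ stays bounded away from $2/3$, i.e.\ that $\lambda_3-2/3\geq \gamma$ for some absolute $\gamma>0$ and all $n$ large enough. Because $L^{\mathrm{norm}}$ is similar to the symmetric normalized Laplacian $\idm-D^{-1/2}SD^{-1/2}$, I would work with the latter (same spectrum), so that the Courant--Fischer/Weyl machinery applies; equivalently, I track the generalized eigenproblem $S\phi=(1-\lambda)D\phi$.

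The key step is to identify the limiting spectrum as $n\to\infty$. Rescaling indices by $u=i/n$, the entries satisfy $S_{ij}=n\,(1-|u-v|)$ and, from the degree computation $d_i=\frac{n(n-1)}{2}+i(n-i+1)$ in Proposition~\ref{prop:FiedlerVecExpr}, $d_i=n^2\big(\tfrac12+u(1-u)\big)+o(n^2)$. Writing $d(u)=\tfrac12+u(1-u)$, the generalized eigenvalue equation becomes, in the continuum, the integral eigenproblem
\[
(1-\lambda)\,d(u)\,\phi(u)=\int_0^1\big(1-|u-v|\big)\,\phi(v)\,dv .
\]
The operator on the right is compact and self-adjoint on $L^2([0,1],d(u)\,du)$ (its kernel is symmetric), so its spectrum is discrete with a single accumulation point, and the eigenvalues $\lambda$ form a sequence $0=\lambda_1<\lambda_2<\lambda_3<\cdots\to 1$. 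One checks directly that the constant function gives $\lambda=0$ and the centered linear function $\phi(u)=u-\tfrac12$ gives $\lambda=2/3$, matching the finite-$n$ values; discreteness then forces $\lambda_3>2/3$ strictly, so the limiting gap $\gamma_\infty:=\lambda_3-2/3$ is a fixed positive constant. If a closed form for $\lambda_3$ is wanted, differentiating the integral equation twice in $u$ and using $\partial_u^2|u-v|=2\delta(u-v)$ turns it into the Sturm--Liouville problem $\big((1-\lambda)d(u)\phi\big)''=-2\phi$ with natural boundary conditions read off at $u=0,1$, whose spectrum can be computed or bounded.

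Finally I would transfer the limiting gap to finite $n$. Since $\lambda_2^{(n)}=2/3$ exactly for every $n$, it suffices to show $\lambda_3^{(n)}\to\lambda_3>2/3$. This follows from convergence of the finite normalized-Laplacian spectrum to that of the limiting integral operator, following the consistency framework of \citep{Von-08}: the matrix acts as a Riemann-sum (Nystr\"om) discretization of the kernel, so collectively compact operator convergence (or Weyl's inequality applied to the discretization error) yields $|\lambda_k^{(n)}-\lambda_k|\to 0$ for each fixed $k$. Hence for $n$ large enough $\lambda_3^{(n)}\geq 2/3+\gamma_\infty/2$, and combined with $\lambda_2-\lambda_1=2/3$ the minimum eigengap around the Fiedler value is at least $\min(2/3,\gamma_\infty/2)$, a constant.

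I expect the genuine obstacle to be making the spectral convergence rigorous: controlling the discretization error between the finite normalized Laplacian and the continuum operator uniformly enough to separate $\lambda_3^{(n)}$ from $2/3$, and ensuring that no eigenvalue of the limiting operator sits strictly between $2/3$ and $\lambda_3$, i.e.\ that $\lambda_2=2/3$ is simple and isolated in the limit (consistent with the simplicity already established for R-matrices in Lemma~\ref{lemma:strictR}). Everything else --- the two explicit eigenpairs, the self-adjointness and compactness of the limiting kernel, and the resulting discreteness of its spectrum --- is routine.
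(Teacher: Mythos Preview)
Your proposal takes essentially the same approach as the paper: both note that $\lambda_2-\lambda_1=2/3$ exactly, and both appeal to the spectral convergence framework of \citep{Von-08} to argue that $\lambda_3^{(n)}$ converges to an isolated limit strictly above $2/3$, hence the gap $\lambda_3-\lambda_2$ is eventually bounded below by a positive constant. The paper's proof is in fact much terser than yours --- it simply cites \citep{Von-08} for the statement that the eigenvalues of the normalized Laplacian that differ from one converge to an asymptotic spectrum whose limit eigenvalues are isolated, and concludes directly --- whereas you spell out the limiting integral operator, its compactness and self-adjointness, and the mechanism (collectively compact operator convergence or Nystr\"om discretization error) by which the finite-$n$ spectrum converges. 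What you flag as the genuine obstacle, namely the rigorous spectral convergence and the simplicity of $\lambda_2=2/3$ in the limit, is precisely what the paper outsources to the citation without further argument.
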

\begin{proof}
The first eigenvalue of the Laplacian is always 0, so we have for any $n$, $\lambda_2 - \lambda_1 = \lambda_2=2/3$.
Moreover, using results from \citep{Von-08}, we know that eigenvalues of the normalized Laplacian that are different from one converge to an asymptotic spectrum, and that the limit eigenvalues are ``isolated". Hence there exists $n_0>0$ and $c>0$ such that for any $n\geq n_0$ we have $\lambda_3-\lambda_2>c$.
\end{proof}

Numerical experiments show that $\lambda_3$ converges to $0.93\ldots$ very fast when $n$ grows towards infinity. 

\subsection{Step 3: Bounding the perturbation of the Fiedler vector $\|\tilde f -f\|_2$}
We can now compile results from previous sections to get a first perturbation bound and show $\ell_2$ consistency of the Fiedler vector when comparisons are both missing and corrupted.

\begin{theorem}
\label{thm:boundFiedlerPerturbation}
For every $\mu \in (0,1)$ and $n$ large enough, if $q>\frac{\log^4 n}{\mu^2(2p-1)^4 n}$, then 
\[
\|\tilde f -f\|_2 \leq c \frac{\mu}{\sqrt{\log n}},
\]
with probability at least $1-2/n$.
\end{theorem}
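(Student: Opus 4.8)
The plan is to invoke the Davis--Kahan $\sin\Theta$ theorem, which applies to symmetric matrices, so the first move is to replace the non-symmetric normalized Laplacian $L^\mathrm{norm}=\idm-D^{-1}S$ by the symmetric normalized Laplacian $L^\mathrm{sym}=\idm-D^{-1/2}SD^{-1/2}$ (and its perturbed analogue $\tilde L^\mathrm{sym}$). Since $L^\mathrm{sym}=D^{1/2}L^\mathrm{norm}D^{-1/2}$, the two are similar and share the same spectrum, so by Proposition~\ref{prop:FiedlerVecExpr} the Fiedler value of $L^\mathrm{sym}$ is $2/3$ and by Lemma~\ref{lem:eigengap} its eigengap $\gamma:=\min(\lambda_2-\lambda_1,\lambda_3-\lambda_2)$ is bounded below by an absolute constant for $n$ large. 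The Fiedler vectors are related by $g=D^{1/2}f/\|D^{1/2}f\|_2$ and $f=D^{-1/2}g/\|D^{-1/2}g\|_2$, so I will bound $\|\tilde g-g\|_2$ for the symmetric problem and then transport the bound back to $f$.

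Second, I would bound the perturbation $E:=\tilde L^\mathrm{sym}-L^\mathrm{sym}=D^{-1/2}SD^{-1/2}-\tilde D^{-1/2}\tilde S\tilde D^{-1/2}$. The explicit degrees $d_i=\tfrac{n(n-1)}{2}+i(n-i+1)$ computed inside the proof of Proposition~\ref{prop:FiedlerVecExpr} are all of order $n^2$, so $\|D^{-1/2}\|_2=\Theta(1/n)$ and $D^{-1/2}$ has bounded condition number. Adding and subtracting $D^{-1/2}\tilde S D^{-1/2}$ splits $E$ into a similarity term $D^{-1/2}(S-\tilde S)D^{-1/2}$, of norm at most $\|D^{-1/2}\|_2^2\,\|\tilde S-S\|_2=O(\mu/\sqrt{\log n})$ by Lemma~\ref{lemma:boundSimilarityPert}, and a degree term. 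For the latter, Lemma~\ref{lemma:boundDegree} gives $\|\tilde D_S-D_S\|_2\le 3\mu n^2/\sqrt{\log n}=o(n^2)$, so $\tilde D$ stays of order $n^2$ (hence invertible and well-conditioned), and a first-order expansion of $x\mapsto x^{-1/2}$ on the diagonal yields $\|\tilde D^{-1/2}-D^{-1/2}\|_2=O(\mu/(n\sqrt{\log n}))$; combined with $\|\tilde S\|_2=O(n^2)$ this controls $(D^{-1/2}-\tilde D^{-1/2})\tilde S D^{-1/2}+\tilde D^{-1/2}\tilde S(D^{-1/2}-\tilde D^{-1/2})$ by $O(\mu/\sqrt{\log n})$ as well. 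Hence $\|E\|_2\le c\mu/\sqrt{\log n}$ on the intersection of the events of Lemmas~\ref{lemma:boundDegree} and~\ref{lemma:boundSimilarityPert}; since their failure probabilities decay faster than any polynomial in $n$, this intersection has probability at least $1-2/n$ for $n$ large. The hypothesis $q>\log^4 n/(\mu^2(2p-1)^4 n)$ implies both lemmas' hypotheses because $(2p-1)^4\le(2p-1)^2$.

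Third, Davis--Kahan applied to $L^\mathrm{sym},\tilde L^\mathrm{sym}$ and the isolated Fiedler eigenvalue gives $\|\tilde g-g\|_2\le\sqrt{2}\,\|E\|_2/\gamma\le c\mu/\sqrt{\log n}$ after fixing the sign of $\tilde g$. It then remains to transfer this to $f$. Writing $\tilde f-f=\tilde D^{-1/2}\tilde g/\|\tilde D^{-1/2}\tilde g\|_2-D^{-1/2}g/\|D^{-1/2}g\|_2$, I would split the difference into the change $\tilde g\to g$ at fixed matrix $\tilde D^{-1/2}$ and the change $\tilde D^{-1/2}\to D^{-1/2}$ at fixed $g$. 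The normalization map $v\mapsto Mv/\|Mv\|_2$ is Lipschitz on the unit sphere with constant $O(\kappa(M))=O(1)$, so the first contribution is $O(\|\tilde g-g\|_2)=O(\mu/\sqrt{\log n})$, while the second is controlled by $\|\tilde D^{-1/2}-D^{-1/2}\|_2\,\|D^{1/2}\|_2=O(\mu/(n\sqrt{\log n}))\cdot O(n)=O(\mu/\sqrt{\log n})$ after accounting for the (comparable) normalizing constants. Collecting terms yields $\|\tilde f-f\|_2\le c\mu/\sqrt{\log n}$.

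The main obstacle is precisely this passage through the symmetric normalized Laplacian, since Davis--Kahan cannot be applied directly to $D^{-1}S$. The two delicate points are (i) bounding the nonlinear term $\tilde D^{-1/2}\tilde S\tilde D^{-1/2}-D^{-1/2}SD^{-1/2}$, which couples the similarity and degree perturbations and requires first establishing via Lemma~\ref{lemma:boundDegree} that $\tilde D$ stays uniformly of order $n^2$, and (ii) transporting the eigenvector bound back from $g$ to $f$ through the change of variables $D^{-1/2}$ together with renormalization. Both hinge on the fact, read off from Proposition~\ref{prop:FiedlerVecExpr}, that every degree is $\Theta(n^2)$, which keeps all the $D^{-1/2}$ factors harmless. (The $\ell_\infty$ bound $\|f\|_\infty\le 2/\sqrt{n}$ of Corollary~\ref{cor:boundMaxFiedler} is not needed here; it enters the subsequent $\ell_\infty$ analysis of Theorem~\ref{thm:local}.)
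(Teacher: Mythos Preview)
Your proposal is correct and follows essentially the same route as the paper: pass to the symmetric normalized Laplacian so Davis--Kahan applies, bound $\|\tilde L^{\mathrm{sym}}-L^{\mathrm{sym}}\|_2$ by splitting it into a similarity piece (via Lemma~\ref{lemma:boundSimilarityPert}) and a degree piece (via Lemma~\ref{lemma:boundDegree}) using that $d_i=\Theta(n^2)$, invoke the eigengap bound of Lemma~\ref{lem:eigengap}, and then transport the resulting eigenvector bound back from $f_{\mathrm{sym}}=D^{1/2}f/\|D^{1/2}f\|_2$ to $f$. The only cosmetic difference is where you insert the add-and-subtract terms in the decomposition of $E$ (the paper writes $\tilde\Delta(\tilde S-S)\tilde\Delta+(\tilde\Delta-\Delta)S\tilde\Delta+\Delta S(\tilde\Delta-\Delta)$ with $\Delta=D^{-1/2}$), which makes no difference since both $\Delta$ and $\tilde\Delta$ are $O(1/n)$.
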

\begin{proof}
In order to use Davis-Kahan theorem, we need to relate perturbations of the normalized Laplacian matrix to perturbations of the similarity and degree matrices. To simplify notations, we write $L=\idm-D^{-1}S$ and $\tilde L=\idm-\tilde D^{-1}\tilde S$. 

Since the normalized Laplacian is not symmetric, we will actually apply Davis-Kahan theorem to the symmetric normalized Laplacian $L_{sym}=\idm - D^{-1/2}S D^{-1/2}$. It is easy to see that $L_{sym}$ and $L$ have the same Fiedler value, and that the Fiedler vector $f_{sym}$ of $L_{sym}$ is equal to $D^{1/2}f$ (up to normalization). Indeed, if $v$ is the eigenvector associated to the $i^{th}$ eigenvalue of $L$ (denoted by $\lambda_i$), then
\[
L_{sym}D^{1/2}v=D^{-1/2}(D-S)D^{-1/2}D^{1/2}v=D^{-1/2}(D-S)v=D^{1/2}(\idm -D^{-1}S)v=\lambda_i D^{1/2} v.
\]
Hence perturbations of the Fiedler vector of $L_{sym}$ are directly related to perturbations of the Fiedler vector of $L$.

The proof relies mainly on Lemma~\ref{lemma:boundDegree}, which states that for $n \geq 100$, denoting by $d$ the vector of diagonal elements of $D_S$,
\[
\|D_R\|_2=\max |\tilde d_i - d_i| \leq \frac{3 \mu n^2}{\sqrt{\log n}}
\]
with probability at least $1-\frac{2}{n}$.
Combined with the fact that $d_i=\frac{n(n-1)}{2} + i(n-i+1)$ (\cf~proof of Proposition~\ref{prop:FiedlerVecExpr}), this guarantees that $d_i$ and $\tilde d_i$ are strictly positive. Hence $D^{-1/2}$ and $\tilde D^{-1/2}$ are well defined.
We now decompose the perturbation of the Laplacian matrix.
Let $\Delta=D^{-1/2}$, we have
\begin{eqnarray*}
\| \tilde L_{sym} - L_{sym} \|_2 & = & \| \tilde \Delta \tilde S \tilde \Delta - \Delta S \Delta \|_2 \\
& = & \| \tilde \Delta \tilde S \tilde \Delta - \tilde \Delta S  \tilde \Delta  +  \tilde \Delta S \tilde \Delta  -  \Delta S \Delta \|_2 \\
& = & \| \tilde \Delta (\tilde S -S) \tilde \Delta  + \tilde \Delta S \tilde \Delta  -  \Delta S \tilde \Delta +\Delta S \tilde \Delta -\Delta S  \Delta \|_2 \\
& = & \| \tilde \Delta (\tilde S -S) \tilde \Delta  + (\tilde \Delta - \Delta)S \tilde \Delta  +  \Delta S (\tilde \Delta - \Delta) \|_2 \\
& \leq & \|\tilde \Delta \|_2^2 \|\tilde S -S\|_2 + \|S\|_2 (\|\tilde \Delta\|_2+ \|\Delta\|_2) \|\tilde \Delta - \Delta \|_2 .
\end{eqnarray*}

We first bound $\|\tilde \Delta - \Delta \|_2$. Notice that
\[
\|\tilde \Delta - \Delta \|_2= \max_i | \tilde  d_i^{-1/2}- d_i^{-1/2}|,
\]
where $d_i$ (respectively $\tilde  d_i$) is the sum of elements of the $i^{\mathrm{th}}$ row of $S$ (respectively $\tilde S$).
Hence 
\[
\|\tilde \Delta - \Delta \|_2= \max_i  \frac{\left| \sqrt{ \tilde d_i} - \sqrt{d_i} \right|}{\sqrt{\tilde d_i d_i}}
= \max_i  \frac{\left|  \tilde d_i -  d_i \right|}{\sqrt{\tilde d_i d_i} (\sqrt{ \tilde d_i} + \sqrt{d_i} )} .
\]
Using Lemma~\ref{lemma:boundDegree} we obtain
\[
\|\tilde \Delta-\Delta\|_2 \leq \max_i \frac{\frac{3 \mu n^2}{\sqrt{\log n}}}{\sqrt{d_i}(d_i-\frac{3 \mu n^2}{\sqrt{\log n}})+ d_i \sqrt{d_i-\frac{3 \mu n^2}{\sqrt{\log n}}}}, \quad i=1,\ldots,n, \mbox{ w.h.p.}
\]
Since $d_i=\frac{n(n-1)}{2} + i(n-i+1)$ (\cf~proof of Proposition~\ref{prop:FiedlerVecExpr}), for $\mu<1$ there exists a constant $c$ such that
$d_i>d_i-\frac{3 \mu n^2}{\sqrt{\log n}}> c n^2$. 
We deduce that there exists an absolute constant $c$ such that
\begin{equation}
\label{eq:boundPerturbDegree}
\|\tilde \Delta-\Delta\|_2 \leq \frac{ c \mu}{n\sqrt{\log n}} \mbox{ w.h.p}.
\end{equation}

Similarly we obtain that 
\begin{equation}
\label{eq:boundDegreeTrue}
\|\Delta \|_2 \leq \frac{ c }{n} \mbox{ w.h.p},
\end{equation}
and
\begin{equation}
\label{eq:boundDegreetilde}
\|\tilde \Delta \|_2 \leq \frac{ c }{n} \mbox{ w.h.p}.
\end{equation}

Moreover, we have 
\[
\|S\|_2=\|CC^T+n\ones \ones^T \|_2 \leq \|C\|^2_2+n \| \ones \ones^T \|_2 \leq 2n^2.
\]
Hence,
\[
\|S\|_2 (\|\tilde \Delta\|_2+ \|\Delta\|_2)\|\tilde \Delta -\Delta\|_2 \leq \frac{ c\mu }{\sqrt{\log n}} \mbox{ w.h.p},
\]
where $c:=4c^2$.
Using Lemma~\ref{lemma:boundSimilarityPert}, we can similarly bound $\|\tilde \Delta\|_2^2 \|\tilde S -S\|_2$ and obtain\begin{equation}
\label{eq:boundLaplacianPert}
\|\tilde L_{sym} - L_{sym}\|_2 \leq \frac{ c \mu}{\sqrt{\log n}} \mbox{ w.h.p},
\end{equation}
where $c$ is an absolute constant.
Finally, for small $\mu$, Weyl's inequality, equation~\eqref{eq:boundLaplacianPert} together with Lemma~\ref{lem:eigengap} ensure that for $n$ large enough with high probability $|\tilde \lambda_3 - \lambda_2|> |\lambda_3-\lambda_2|/2$ and $|\tilde \lambda_1 - \lambda_2|> |\lambda_1-\lambda_2|/2$. Hence we can apply Davis-Kahan theorem. 
Compiling all constants into $c$ we obtain
\begin{equation}
\label{eq:boundperturbfsym}
\|\tilde f_{sym} -f_{sym}\|_2 \leq  \frac{c\mu}{\sqrt{\log n}} \mbox{ w.h.p}.
\end{equation}

Finally we relate the perturbations of $f_{sym}$ to the perturbations of $f$.
Since $f_{sym}=\frac{D^{1/2} f}{\|D^{1/2} f\|_2}$, letting $\alpha_n=\|D^{1/2} f\|$, we deduce that
\begin{eqnarray*}
\| \tilde f -f  \|_2 & = & \| \tilde \alpha_n \tilde \Delta  \tilde  f_{sym} -  \alpha_n \Delta f_{sym} \|_2 \\
& = & \| \Delta  (\tilde \alpha_n \tilde  f_{sym} -\alpha_n f_{sym} ) + \tilde \alpha_n (\tilde \Delta - \Delta) \tilde  f_{sym}  \|_2 \\
& \leq & \| \Delta \|_2 \| \tilde \alpha_n \tilde f_{sym} - \alpha_n f_{sym}\|_2 + \| \tilde \alpha_n \|_2 \|\tilde \Delta - \Delta \|_2 .
\end{eqnarray*}
Similarly as for inequality \eqref{eq:boundPerturbDegree}, we can show that $\|\tilde D^{1/2} \| $ and  $\| D^{1/2} \|$ are of the same order $O(n)$. Since $\|f\|_2=\|\tilde f\|_2=1$, this is also true for $\|\alpha_n\|_2$ and $\|\tilde \alpha_n\|_2$.
We conclude the proof using inequalities \eqref{eq:boundPerturbDegree}, \eqref{eq:boundDegreeTrue} and \eqref{eq:boundperturbfsym}.
\end{proof}

\subsection{Bounding ranking perturbations $\|\tilde \pi -\pi\|_{\infty}$}
\ref{alg:SerialRank}'s ranking is derived by sorting the Fiedler vector. While the consistency result in Theorem~\ref{thm:boundFiedlerPerturbation} shows the $\ell_2$ estimation error going to zero as $n$ goes to infinity, this is not sufficient to quantify the maximum displacement of the ranking. To quantify the maximum displacement of the ranking, as in \citep{Waut13}, we need to bound $\|\tilde \pi -\pi\|_{\infty}$ instead. 

We bound the maximum displacement of the ranking here with an extra factor $\sqrt{n}$ compared to the sampling rate in \citep{Waut13}. We would only need a better component-wise bound on $\tilde S -S$ to get rid of this extra factor $\sqrt{n}$, and we hope to achieve it in future work.

The proof is in two parts: we first bound the $\ell_{\infty}$ norm of the perturbation of the Fiedler vector, then translate this perturbation of the Fiedler vector into a perturbation of the ranking.

\subsubsection{Bounding the $\ell_{\infty}$ norm of the Fiedler vector perturbation}
We start by a technical lemma bounding $\|(\tilde S-S)f\|_{\infty}$.

\begin{restatable}{lemma}{lemmaboundRf}
\label{lemma:boundRf}
Let $r>0$, 
for every $\mu \in (0,1)$ and $n$ large enough, if $q>\frac{\log^4 n}{\mu^2(2p-1)^4 n}$, then 
\[
\|(\tilde S-S)f\|_{\infty} \leq \frac{3 \mu n^{3/2}}{\sqrt{\log n}}
\]
with probability at least $1-2/n$.
\end{restatable}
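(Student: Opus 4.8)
The plan is to run the concentration argument of Lemma~\ref{lemma:boundDegree} essentially verbatim, with the all-ones vector $\ones$ replaced by the Fiedler vector $f$. Writing $R=\tilde S-S$, the quantity to control is $\|Rf\|_\infty=\max_i|[Rf]_i|$ with $[Rf]_i=\sum_{j=1}^n R_{ij}f_j$, which has exactly the form of $\delta_i=\sum_j R_{ij}$ from that lemma but with the weights $f_j$ inserted. Substituting the explicit expression $R_{ij}=\sum_k C_{ik}C_{jk}(B_{ik}B_{jk}/(q^2(2p-1)^2)-1)$ and using the same splitting of $B_{ik}B_{jk}/(q^2(2p-1)^2)-1$ into a quadratic and a linear contribution in the $B$'s, I would write $[Rf]_i=\mathrm{Quad}+\mathrm{Lin}$ with $\mathrm{Quad}=\sum_k \frac{C_{ik}B_{ik}}{q(2p-1)}W_k$, $W_k=\sum_j f_j X_{jk}$, and $\mathrm{Lin}=\sum_k g_k X_{ik}$, $g_k=\sum_j C_{jk}f_j$, where $X_{jk}=C_{jk}(B_{jk}/(q(2p-1))-1)$ is the mean-zero, bounded variable already analysed in Lemma~\ref{lemma:boundDegree}.

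The quadratic term is the crux, and it is precisely here that the improvement from $n^2$ to $n^{3/2}$ appears. The inner sum $W_k=\sum_j f_j X_{jk}$ is a sum of independent mean-zero variables with $\var(W_k)\le \|f\|_2^2/(q(2p-1)^2)=1/(q(2p-1)^2)$, using $\|f\|_2=1$; this is a factor $n$ smaller than the variance $n/(q(2p-1)^2)$ of the corresponding unweighted sum in Lemma~\ref{lemma:boundDegree}. Together with the per-term bound $|f_jX_{jk}|\le \|f\|_\infty\cdot 2/(q(2p-1)^2)\le 4/(\sqrt n\,q(2p-1)^2)$, which uses Corollary~\ref{cor:boundMaxFiedler}, Bernstein's inequality and a union bound over $k$ yield $\max_k|W_k|\le c\mu(2p-1)\sqrt n/\sqrt{\log n}$ with probability $1-o(1/n^2)$ under the hypothesis $q\ge \log^4 n/(\mu^2(2p-1)^4 n)$ (the numerator of the Bernstein exponent is again of order $\log^3 n$). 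Bounding $\sum_k |B_{ik}|/(q(2p-1))\le 2n/(2p-1)$ w.h.p.\ exactly as in Lemma~\ref{lemma:boundDegree}, I would then obtain $|\mathrm{Quad}|\le 2\mu n^{3/2}/\sqrt{\log n}$.

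For the linear term, $g_k=\sum_j C_{jk}f_j$ is deterministic and satisfies $|g_k|\le \sum_j|f_j|=O(\sqrt n)$ (since $f$ is linear with $\|f\|_2=1$), so $\mathrm{Lin}=\sum_k g_k X_{ik}$ is a sum of independent mean-zero variables of total variance $O(n^2/(q(2p-1)^2))$, and Bernstein gives a bound of the smaller order $O(\mu n^{3/2}/\log^2 n)$, leaving slack. Adding the two contributions gives $|[Rf]_i|\le 3\mu n^{3/2}/\sqrt{\log n}$, and a final union bound over $i=1,\ldots,n$ costs only a factor $n$, which is absorbed by the $\exp(-c\log^3 n)$ tail, yielding the claim with probability at least $1-2/n$. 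The main obstacle is the quadratic term: everything hinges on correctly tracking the variance reduction from $\|f\|_2=1$ and the $\ell_\infty$ control from Corollary~\ref{cor:boundMaxFiedler}, which is exactly what converts the degree estimate of Lemma~\ref{lemma:boundDegree} into the $\sqrt n$-smaller bound claimed here.
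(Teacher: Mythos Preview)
Your proposal is correct and follows essentially the same route as the paper: decompose $[Rf]_i$ into the same quadratic and linear pieces as in Lemma~\ref{lemma:boundDegree}, and exploit $\|f\|_2=1$ together with $\|f\|_\infty\le 2/\sqrt n$ (Corollary~\ref{cor:boundMaxFiedler}) to gain the factor $\sqrt n$ in the Bernstein bound for the inner sum. The only cosmetic difference is that the paper absorbs the weight $f_j$ into the definition of $X_{jk}$, whereas you keep it as $W_k=\sum_j f_j X_{jk}$; after that the paper simply says ``the rest of the proof is identical to the proof of Lemma~\ref{lemma:boundDegree}, replacing $t$ by $\sqrt n\,t$'', which is exactly your computation.
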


\begin{proof}
The proof is very much similar to the proof of Lemma~\ref{lemma:boundDegree} and can be found the Appendix (section~\ref{s:appendixPertAnalysis}).
\end{proof}

We now prove the main result of this section, bounding $\|\tilde f -f\|_{\infty}$ with high probability when roughly $O(n^{3/2})$ comparisons are sampled.

\begin{restatable}{lemma}{lemmaboundLinfFiedler}
\label{lemma:boundLinfFiedler}
For every $\mu \in (0,1)$ and $n$ large enough, if $q>\frac{\log^4 n}{\mu^2(2p-1)^4 \sqrt{n}}$, then \[
\|\tilde f -f\|_{\infty} \leq c \frac{\mu}{\sqrt{n\log n}}
\]
with probability at least $1-2/n$, where $c$ is an absolute constant.
\end{restatable}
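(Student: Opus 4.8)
The plan is to bound $\|\tilde f - f\|_\infty$ by passing through the eigenvalue equations for the normalized Laplacian and isolating the residual acting on the \emph{known} Fiedler vector $f$, whose $\ell_\infty$ norm we already control through Lemma~\ref{lemma:boundRf} and Corollary~\ref{cor:boundMaxFiedler}. Since the ranking is derived from sorting $f$ (equivalently from the Fiedler vector $f_{sym}$ of $L_{sym}$), I would first work with the symmetric normalized Laplacian $L_{sym} = \idm - D^{-1/2}SD^{-1/2}$, for which the eigenvalue/eigenvector perturbation story is cleanest, and at the very end transfer the bound back to $f$ using the relation $f = \tilde\Delta^{-1}\tilde\alpha_n \tilde f_{sym}$-type identities already established at the end of the proof of Theorem~\ref{thm:boundFiedlerPerturbation} (where $\Delta = D^{-1/2}$ and $\|\Delta\|_2, \|\tilde\Delta\|_2 = O(1/n)$).

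The core step is a fixed-point / resolvent argument. Writing $\tilde L_{sym} \tilde f_{sym} = \tilde\lambda_2 \tilde f_{sym}$ and $L_{sym} f_{sym} = \lambda_2 f_{sym}$, I would subtract to get
\[
(L_{sym} - \lambda_2 \idm)(\tilde f_{sym} - f_{sym}) = -(\tilde L_{sym} - L_{sym})\tilde f_{sym} + (\tilde\lambda_2 - \lambda_2)\tilde f_{sym}.
\]
Projecting off the $\lambda_2$-eigenspace (using that $\lambda_2$ is simple with an eigengap bounded below by a constant, by Lemma~\ref{lem:eigengap}) lets me invert $L_{sym} - \lambda_2 \idm$ on the orthogonal complement. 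The key observation is that I want the right-hand side to involve $(\tilde L_{sym} - L_{sym})$ applied to the \emph{unperturbed} $f_{sym}$ rather than to $\tilde f_{sym}$, because it is $(\tilde S - S)f$ that Lemma~\ref{lemma:boundRf} controls in $\ell_\infty$. So I would rewrite $(\tilde L_{sym} - L_{sym})\tilde f_{sym} = (\tilde L_{sym} - L_{sym})f_{sym} + (\tilde L_{sym} - L_{sym})(\tilde f_{sym} - f_{sym})$ and treat the second piece as a lower-order remainder, controlled by the operator-norm bound $\|\tilde L_{sym} - L_{sym}\|_2 = O(\mu/\sqrt{\log n})$ from~\eqref{eq:boundLaplacianPert} times the $\ell_2$ bound $\|\tilde f_{sym} - f_{sym}\|_2 = O(\mu/\sqrt{\log n})$ from~\eqref{eq:boundperturbfsym} (already available once $q$ exceeds the smaller threshold $\log^4 n/(\mu^2(2p-1)^4 n)$, which the hypothesis here comfortably implies). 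The leading term $(\tilde L_{sym}-L_{sym})f_{sym}$ I would expand using $f_{sym} = D^{1/2}f/\alpha_n$, reducing its $\ell_\infty$ norm to combinations of $\|(\tilde S - S)f\|_\infty$, $\|\tilde\Delta - \Delta\|_2$, $\|\Delta\|_2$, and $\|f\|_\infty$, each of which is a quantity bounded in Lemmas~\ref{lemma:boundRf}, \ref{lemma:boundDegree}, \ref{lemma:boundSimilarityPert} and Corollary~\ref{cor:boundMaxFiedler}.

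The scaling then works out as follows: the dominant contribution is of order $\|\Delta\|_2 \cdot \|(\tilde S - S)f\|_\infty \sim (1/n)\cdot(\mu n^{3/2}/\sqrt{\log n}) = \mu\sqrt{n}/\sqrt{\log n}$, after accounting for the resolvent bound (constant) and the normalization $\alpha_n = \Theta(n^{3/2})$. Tracking the factors of $n$ through the $D^{1/2}$ conjugation and the final transfer from $f_{sym}$ back to $f$ should produce the claimed $c\mu/\sqrt{n\log n}$. The essential reason the sampling rate jumps to $q > \log^4 n/(\mu^2(2p-1)^4\sqrt n)$, an extra $\sqrt n$ over Theorem~\ref{thm:boundFiedlerPerturbation}, is precisely that Lemma~\ref{lemma:boundRf} only gives $\|(\tilde S - S)f\|_\infty = O(\mu n^{3/2}/\sqrt{\log n})$ under this stronger hypothesis; an $\ell_\infty$ bound is intrinsically weaker than the corresponding $\ell_2$ spectral bound.

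The main obstacle I anticipate is controlling the perturbation in $\ell_\infty$ rather than $\ell_2$: the resolvent $(L_{sym} - \lambda_2\idm)^{-1}$ is naturally bounded only in operator ($\ell_2 \to \ell_2$) norm via the eigengap, and there is no free $\ell_\infty \to \ell_\infty$ bound on it. The delicate part is therefore arguing that inverting on the orthogonal complement of a \emph{known, explicit} low-dimensional eigenspace (spanned by $\ones$ and the linear $f_{sym}$) still yields an $\ell_\infty$-admissible output — essentially because the projections onto $\ones$ and $f_{sym}$ are explicit and $f_{sym}$ has small $\ell_\infty$ norm, so the subtracted rank-two correction does not inflate the sup-norm. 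Handling this cleanly, while keeping the eigenvalue shift term $(\tilde\lambda_2 - \lambda_2)\tilde f_{sym}$ (bounded by Weyl plus $\|f_{sym}\|_\infty$) subordinate, is where the real work lies; the rest is bookkeeping with bounds already in hand.
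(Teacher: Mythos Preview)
Your proposal contains a genuine gap at exactly the point you flag as ``the main obstacle.'' You want to invert $(L_{sym}-\lambda_2\idm)$ on the orthogonal complement of the $\lambda_2$-eigenspace, and then pass from an $\ell_\infty$-small right-hand side to an $\ell_\infty$-small $\tilde f_{sym}-f_{sym}$. But the eigengap only gives you $\|(L_{sym}-\lambda_2\idm)^{-1}_\perp\|_{2\to 2}=O(1)$; there is no reason for the $\ell_\infty\to\ell_\infty$ norm of this resolvent to be $O(1)$, and in general a matrix with $O(1)$ spectral norm can have $\ell_\infty\to\ell_\infty$ norm as large as $\sqrt n$. Your remark that ``the subtracted rank-two correction does not inflate the sup-norm'' concerns only the \emph{projection} $P_\perp$, not the inversion on the complement; after projecting you still need to apply $(L_{sym}-\lambda_2\idm)^{-1}_\perp$, and it is this operator whose entrywise behaviour you have not controlled. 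Without an additional argument (e.g.\ decay of the Green kernel of $L_{sym}$), your plan would at best yield $\|\tilde f_{sym}-f_{sym}\|_\infty\le\|\tilde f_{sym}-f_{sym}\|_2$, losing a factor $\sqrt n$ compared to the target.

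The paper avoids matrix inversion entirely. Working with the non-symmetric $L=\idm-D^{-1}S$ and $\tilde L=\idm-\tilde D^{-1}\tilde S$, subtracting the two eigen-equations yields
\[
\bigl((\tilde\lambda_2-1)\idm+\tilde D^{-1}\tilde S\bigr)(\tilde f-f)=(D^{-1}S-\tilde D^{-1}\tilde S+(\lambda_2-\tilde\lambda_2)\idm)f,
\]
and the crucial move is to keep only the \emph{scalar} $(\tilde\lambda_2-1)$ on the left, moving $\tilde D^{-1}\tilde S(\tilde f-f)$ to the right. Since $\lambda_2=2/3$, Weyl's inequality gives $|\tilde\lambda_2-1|\ge c>0$, so componentwise
\[
|\tilde f_i-f_i|\le \frac{1}{|\tilde\lambda_2-1|}\Bigl(|(d_i^{-1}S_i-\tilde d_i^{-1}\tilde S_i)f|+|\lambda_2-\tilde\lambda_2|\,|f_i|+|\tilde d_i^{-1}\tilde S_i(\tilde f-f)|\Bigr).
\]
The first two terms are handled by Lemma~\ref{lemma:boundRf}, Lemma~\ref{lemma:boundDegree}, and Corollary~\ref{cor:boundMaxFiedler} exactly as you anticipate. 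The third term is bounded by Cauchy--Schwarz, $|\tilde d_i^{-1}\tilde S_i(\tilde f-f)|\le|\tilde d_i^{-1}|\,\|\tilde S_i\|_2\,\|\tilde f-f\|_2$, and it is here that the stronger sampling rate $q>\log^4 n/(\mu^2(2p-1)^4\sqrt n)$ enters a second time: it sharpens both $\|\tilde S_i\|_2=O(\mu n^{7/4}/\sqrt{\log n})$ (via Lemma~\ref{lemma:boundSimilarityPert}) and $\|\tilde f-f\|_2=O(\mu n^{-1/4}/\sqrt{\log n})$ (via Theorem~\ref{thm:boundFiedlerPerturbation} with $\mu$ replaced by $\mu n^{-1/4}$), so that the product lands at $O(\mu/\sqrt{n\log n})$. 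No operator is ever inverted; the $\ell_\infty$ control comes from dividing by a scalar and feeding the already-available $\ell_2$ bound back through Cauchy--Schwarz row by row.
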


\begin{proof}
Notice that by definition
$\tilde L \tilde f =\tilde \lambda_2 \tilde f$ and $Lf=\lambda_2 f$. Hence for $\tilde \lambda_2 >0$
\begin{eqnarray*}
\tilde f -f &=& \frac{\tilde L \tilde f}{\tilde \lambda_2} -f \\
&=& \frac{\tilde L \tilde f - Lf}{\tilde \lambda_2} + \frac{(\lambda_2 - \tilde \lambda_2)f}{\tilde \lambda_2}.
\end{eqnarray*}
Moreover
\begin{eqnarray*}
\tilde L \tilde f - Lf & = & (\idm - \tilde D^{-1}\tilde S)\tilde f - (\idm -D^{-1}S)f \\
& = & (\tilde f  -  f) + D^{-1} S f - \tilde D^{-1}\tilde S \tilde f \\
& = & (\tilde f  -  f) + D^{-1} S f - \tilde D^{-1}\tilde S  f + \tilde D^{-1}\tilde S  f - \tilde D^{-1}\tilde S  \tilde f \\
& = & (\tilde f  -  f) + (D^{-1} S - \tilde D^{-1}\tilde S)f + \tilde D^{-1}\tilde S (f - \tilde f) 
\end{eqnarray*}
Hence
\BEQ
(\idm(\tilde \lambda_2-1)+\tilde D^{-1}\tilde S)(\tilde f-f)  =  (D^{-1} S- \tilde D^{-1}\tilde S+ (\lambda_2 - \tilde \lambda_2)\idm)f .
\EEQ
Writing $S_i$ the $i^{th}$ row of $S$ and $d_i$ the degree of row $i$, using the triangle inequality, we deduce that
\BEQ
| \tilde f_i-f_i | \leq  \frac{1}{|\tilde \lambda_2 -1|}\left(|(d_i^{-1} S_i- \tilde d_i^{-1}\tilde S_i) f| + |\lambda_2 - \tilde \lambda_2| |f_i| + |\tilde d_i^{-1}\tilde S_i (\tilde f-f)| \right).
\EEQ

It remains to bound each term separately, using Weyl's inequality for the denominator and previous lemmas for numerator terms, which is detailed in the Appendix (section~\ref{s:appendixPertAnalysis}). 
\end{proof}

\subsubsection{Bounding the $\ell_{\infty}$ norm of the ranking perturbation}
First note that the $\ell_{\infty}$-norm of the ranking perturbation is equal to the number of pairwise disagreements between the true ranking and the retrieved one, i.e.,~for any $i$
\[
|\tilde \pi_i - \pi_i| = \sum_{j<i} \ones_{\tilde f_j> \tilde f_i} + \sum_{j>i} \ones_{\tilde f_j <\tilde f_i}.
\]
Now we will argue that when $i$ and $j$ are far apart, with high probability 
\[
\tilde f_j - \tilde f_i= (\tilde f_j - f_j) + (f_j-f_i) + (f_i - \tilde f_i)
\]
will have the same sign as $j-i$.
Indeed $|\tilde f_j - f_j|$ and $|\tilde f_i - f_i|$ can be bounded with high probability by a quantity less than $|f_j-f_i|/2$ for $i$ and $j$ sufficiently ``far apart".
Hence, $|\tilde \pi_i - \pi_i|$  is bounded by the number of pairs that are not sufficiently ``far apart".
We quantify the term ``far apart" in the following proposition.

\begin{theorem}
\label{thm:local}
For every $\mu \in (0,1)$ and  $n$ large enough, if $q>\frac{\log^4 n}{\mu^2(2p-1)^2 \sqrt{n}}$, then 
\[
\|\tilde \pi -\pi\|_{\infty} \leq c  \mu  n,
\]
with probability at least $1-2/n$, where $c$ is an absolute constant.
\end{theorem}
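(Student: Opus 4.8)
The plan is to treat this as the purely deterministic \emph{translation} step (Step 4 of the sketch): once the $\ell_\infty$ control on the Fiedler vector from Lemma~\ref{lemma:boundLinfFiedler} is in hand, the bound on the ranking displacement follows from a counting argument that exploits the linearity of the unperturbed Fiedler vector. So the first thing I would do is condition on the high-probability event of Lemma~\ref{lemma:boundLinfFiedler}, which for $q$ as assumed gives $\|\tilde f - f\|_\infty \leq c\mu/\sqrt{n\log n}$ with probability at least $1-2/n$; everything that follows is deterministic on this event and therefore inherits the same probability.

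Next I would invoke the explicit form of the unperturbed Fiedler vector. By Proposition~\ref{prop:FiedlerVecExpr} and Corollary~\ref{cor:boundMaxFiedler}, $f$ is collinear to $x_i = i-\frac{n+1}{2}$ with normalization $a_n = \sqrt{(n^3-n)/12}$, so for any two indices $|f_j - f_i| = |j-i|/a_n$. Writing $\tilde f_j - \tilde f_i = (\tilde f_j - f_j) + (f_j - f_i) + (f_i - \tilde f_i)$ and bounding the two perturbation terms by $\|\tilde f - f\|_\infty$, the sign of $\tilde f_j - \tilde f_i$ agrees with that of $f_j - f_i$, hence of $j-i$, whenever $|f_j - f_i| > 2\|\tilde f - f\|_\infty$. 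Substituting the two bounds, this holds as soon as $|j-i| > 2 a_n \|\tilde f - f\|_\infty$, i.e. $|j-i| > K$ with $K = c'\mu n/\sqrt{\log n}$ for an absolute constant $c'$, using $a_n \sim n^{3/2}/\sqrt{12}$.

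The last step is to convert this into a bound on each $|\tilde\pi_i - \pi_i|$ via the identity already recorded just before the statement, $|\tilde\pi_i - \pi_i| = \sum_{j<i}\ones_{\tilde f_j > \tilde f_i} + \sum_{j>i}\ones_{\tilde f_j < \tilde f_i}$. By the previous paragraph, every pair $(i,j)$ with $|j-i| > K$ is correctly ordered and contributes nothing, so only the ``close'' pairs can contribute, and for each fixed $i$ there are at most $2K$ of those. Hence $|\tilde\pi_i - \pi_i| \leq 2K = 2c'\mu n/\sqrt{\log n} \leq c\mu n$, and taking the maximum over $i$ yields $\|\tilde\pi - \pi\|_\infty \leq c\mu n$ on the same event of probability $1-2/n$.

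I expect no genuine obstacle beyond bookkeeping, since the analytic difficulty is entirely concentrated in Lemma~\ref{lemma:boundLinfFiedler}: the only points needing care are that the single uniform $\ell_\infty$ event controls all pairs simultaneously, and that the extra $1/\sqrt{\log n}$ gain and the constant $\sqrt{12}$ get absorbed into the final absolute constant $c$. The factor $\sqrt{n}$ in the sampling rate $q$ is inherited verbatim from Lemma~\ref{lemma:boundRf} and Lemma~\ref{lemma:boundLinfFiedler} and, as noted above, reflects the coarseness of the component-wise bound on $\tilde S - S$ rather than anything in this counting step.
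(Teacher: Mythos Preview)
Your proposal is correct and follows essentially the same argument as the paper: condition on the $\ell_\infty$ event of Lemma~\ref{lemma:boundLinfFiedler}, use the decomposition $\tilde f_j-\tilde f_i=(\tilde f_j-f_j)+(f_j-f_i)+(f_i-\tilde f_i)$ together with the explicit linear form of $f$ to conclude that only pairs with $|j-i|\leq K=c'\mu n/\sqrt{\log n}$ can be inverted, and then plug this into the displacement identity $|\tilde\pi_i-\pi_i|=\sum_{j<i}\ones_{\tilde f_j>\tilde f_i}+\sum_{j>i}\ones_{\tilde f_j<\tilde f_i}$. The paper's proof is identical up to presentation (it treats $n$ odd explicitly and absorbs the $\sqrt{3}$ from $a_n$ into $c$ just as you do).
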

\begin{proof}
We assume w.l.o.g. in the following that the true ranking is the identity, hence the unperturbed Fiedler vector $f$ is strictly increasing. We first notice that for any $j>i$
\[
\tilde f_j - \tilde f_i= (\tilde f_j - f_j) + (f_j-f_i) + (f_i - \tilde f_i).
\]
Hence for any $j>i$
\[
\|\tilde f - f\|_{\infty} \leq  \frac{|f_j-f_i|}{2} \implies \tilde f_j \geq \tilde f_i.
\]
Consequently, fixing an index $i_0$,
\[
\sum_{j>i_0} \ones_{\tilde f_j <\tilde f_{i_0}} 
\leq \sum_{j>i_0} \ones_{\|\tilde f - f\|_{\infty} >  \frac{|f_j-f_{i_0}|}{2}}.
\]
Now recall that by Lemma~\ref{lemma:boundLinfFiedler}, for $q>\frac{\log^4 n}{\mu^2(2p-1)^2 \sqrt{n}}$
\[
\|\tilde f -f\|_{\infty} \leq c \frac{\mu }{\sqrt{n\log n}}
\]
 with probability at least $1-2/n$.
Hence 
\[
\sum_{j>i_0} \ones_{\tilde f_j <\tilde f_{i_0}} 
\leq \sum_{j>i_0} \ones_{\|\tilde f - f\|_{\infty} >  \frac{|f_j-f_{i_0}|}{2}}
\leq \sum_{j>i_0} \ones_{\frac{c\mu }{\sqrt{n\log n}}>  \frac{|f_j-f_{i_0}|}{2}}  \ \ \ \mbox{w.h.p.}
\]
We now consider the case of $n$ odd (a similar reasoning applies for $n$ even).
We have $f_{j} = \frac{j - (n+1)/2 }{a_n}$ for all $j$, with 
\[
a_n^2 = 2 \sum_{k=0}^{(n-1)/2} k^2 = \frac{2}{6}\frac{n-1}{2}\left(\frac{n-1}{2}+1\right)((n-1)+1)= \frac{n^3-n}{12}.
\]
Therefore
\[
\frac{c \mu }{\sqrt{n\log n}}>  \frac{|f_j-f_{i_0}|}{2}
\iff  \frac{c \mu }{\sqrt{n\log n}} > \frac{|j-i_0| \sqrt{3}}{n^{3/2}}
\iff  \frac{ c\mu n}{\sqrt{3\log n}} > |j-i_0|.
\]
Dividing $c$ by $\sqrt{3}$, we deduce that 
\[
\sum_{j>i_0} \ones_{\tilde f_j <\tilde f_{i_0}} 
\leq \sum_{j>i_0} \ones_{ \frac{c\mu n}{\sqrt{\log n}} > |j-i_0|} 
= \left\lfloor \frac{c \mu n}{\sqrt{\log n}} \right\rfloor
\leq \frac{c \mu n}{\sqrt{\log n}}  \ \ \ \mbox{w.h.p.}
\]
Similarly 
\[
\sum_{j<i_0} \ones_{\tilde f_j >\tilde f_{i_0}} \leq \frac{c \mu n}{\sqrt{\log n}}  \ \ \ \mbox{w.h.p.}
\]
Finally, we obtain
\[
|\tilde \pi_{i_0} - \pi_{i_0}| = \sum_{j<i_0} \ones_{\tilde f_j> \tilde f_{i_0}} + \sum_{j>i_0} \ones_{\tilde f_j <\tilde f_{i_0}}
 \leq \frac{c \mu n}{\sqrt{\log n}}  \ \ \ \mbox{w.h.p.},
\]
where $c$ is an absolute constant.
Since the last inequality relies on $\|\tilde f -f\|_{\infty} \leq  \frac{c\mu }{\sqrt{n\log n}}$, it is true for all $i_0$ with probabilty $1-2/n$, which concludes the proof.
\end{proof}

\section{Numerical Experiments}\label{s:numres}

We now describe numerical experiments using both synthetic and real datasets to compare the performance of \ref{alg:SerialRank} with several classical ranking methods. 

\subsection{Synthetic Datasets} The first synthetic dataset consists of a matrix of pairwise comparisons derived from a given ranking of $n$ items with uniform, randomly distributed corrupted or missing entries. A second synthetic dataset consists of a full matrix of pairwise comparisons derived from a given ranking of $n$ items, with added ``local'' noise on the similarity between nearby items. Specifically, given a positive integer $m$, we let $C_{i,j} = 1$ if $i<j-m$, $C_{i,j} \sim \mathrm{Unif}[-1,1]$ if $|i-j| \leq m$, and $C_{i,j} = -1$ if $i > j+m$. 
In Figure~\ref{fig:impactNbObsNoise}, we measure the Kendall~$\tau$ correlation coefficient between the true ranking and the retrieved ranking, when varying either the percentage of corrupted comparisons or the percentage of missing comparisons. Kendall's~$\tau$ counts the number of agreeing pairs minus the number of disagreeing pairs between two rankings, scaled by the total number of pairs, so that it takes values between -1 and~1. Experiments were performed with $n=100$ and reported Kendall $\tau$ values were averaged over 50 experiments, with standard deviation less than 0.02 for points of interest (i.e., with Kendall $\tau>0.8$). 

Results suggest that \ref{alg:SerialRank} (SR, full red line) produces more accurate rankings than point score (PS, \citep{Waut13} dashed blue line), Rank Centrality (RC \citep{Nega12} dashed green line), and maximum likelihood (BTL \citep{Brad52}, dashed magenta line) in regimes with limited amount of corrupted and missing comparisons. In particular \ref{alg:SerialRank} seems more robust to corrupted comparisons. On the other hand, the performance deteriorates more rapidly in regimes with very high number of corrupted/missing comparisons.
For a more exhaustive comparison of \ref{alg:SerialRank} to state-of-the art ranking algorithms, we refer the interested reader to a recent paper by \citet{cucuringu2015sync}, which introduces another ranking algorithm called SyncRank, and provides extensive numerical experiments.

\subsection{Real Datasets} The first real dataset consists of pairwise comparisons derived from outcomes in the TopCoder algorithm competitions. We collected data from 103 competitions among 2742 coders over a period of about one year. Pairwise comparisons are extracted from the ranking of each competition and then averaged for each pair. TopCoder maintains ratings for each participant, updated in an online scheme after each competition, which were also included in the benchmarks. 
To measure performance in Figure~\ref{fig:TopCoder}, we compute the percentage of upsets (i.e. comparisons disagreeing with the computed ranking), which is closely related to the Kendall $\tau$ (by an affine transformation if comparisons were coming from a consistent ranking). We refine this metric by considering only the participants appearing in the top $k$, for various values of $k$, i.e. computing
\BEQ\label{eq:loss}
l_k=\frac{1}{|\mathcal{C}_k|} \sum_{i,j \in \mathcal{C}_k} \ones_{r(i)>r(j)}\ones_{C_{i,j}<0},
\EEQ
where $\mathcal{C}$ are the pairs $(i,j)$ that are compared and such that $i,j$ are both ranked in the top $k$, and $r(i)$ is the rank of $i$. Up to scaling, this is the loss considered in~\citep{Keny07}.

This experiment shows that \ref{alg:SerialRank} gives competitive results with other ranking algorithms. Notice that  rankings could probably be refined by designing a similarity matrix taking into account the specific nature of the data.

\begin{figure}[t]
\begin{center}
\begin{tabular}{ccc}
\psfrag{Kendall}[b][t]{Kendall $\tau$}
\psfrag{corrupted}[t][b]{\% corrupted}
\includegraphics[scale=0.33]{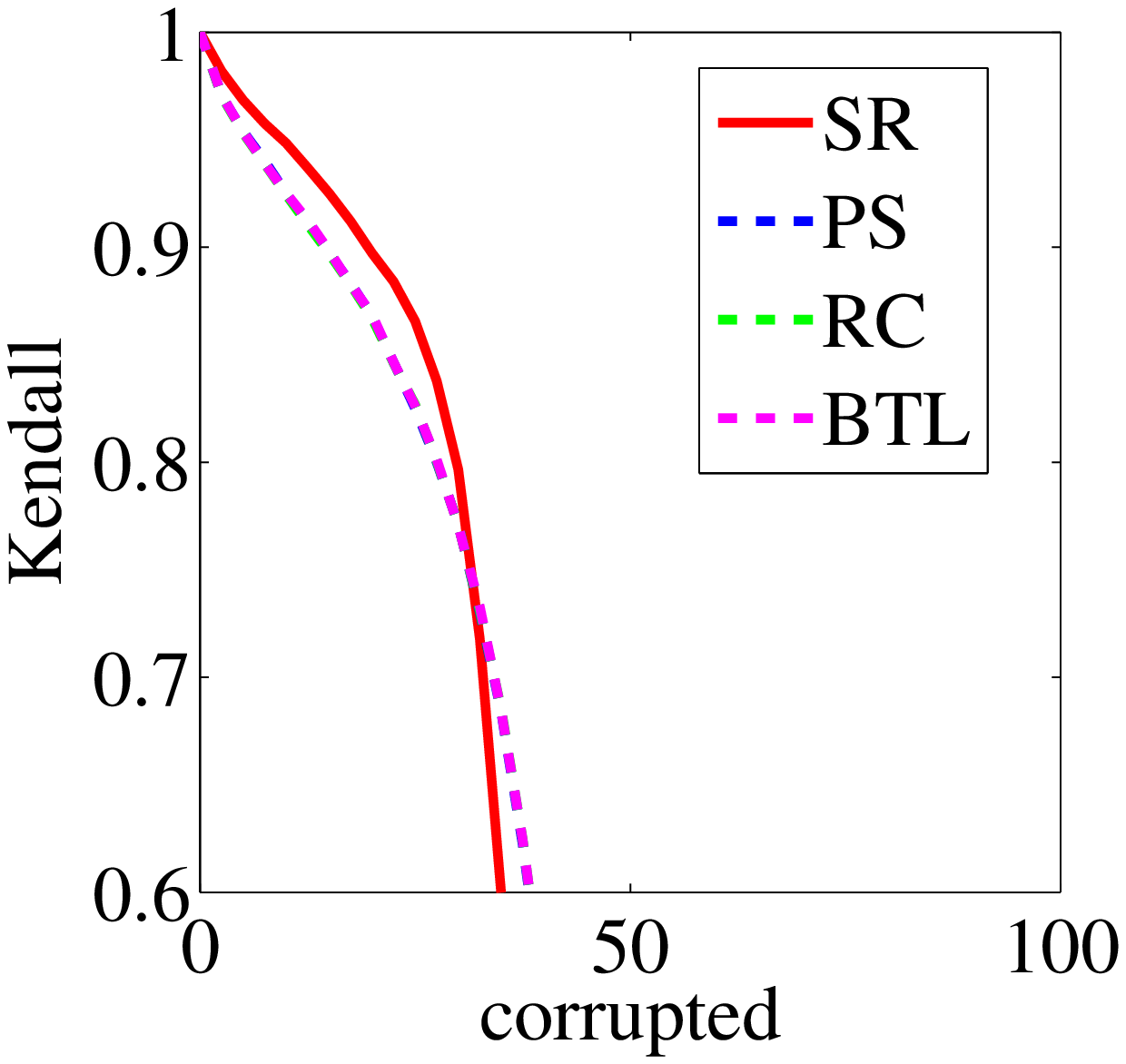} & &
\psfrag{Kendall}[b][t]{Kendall $\tau$}
\psfrag{missing}[t][b]{\% missing}
\includegraphics[scale=0.33]{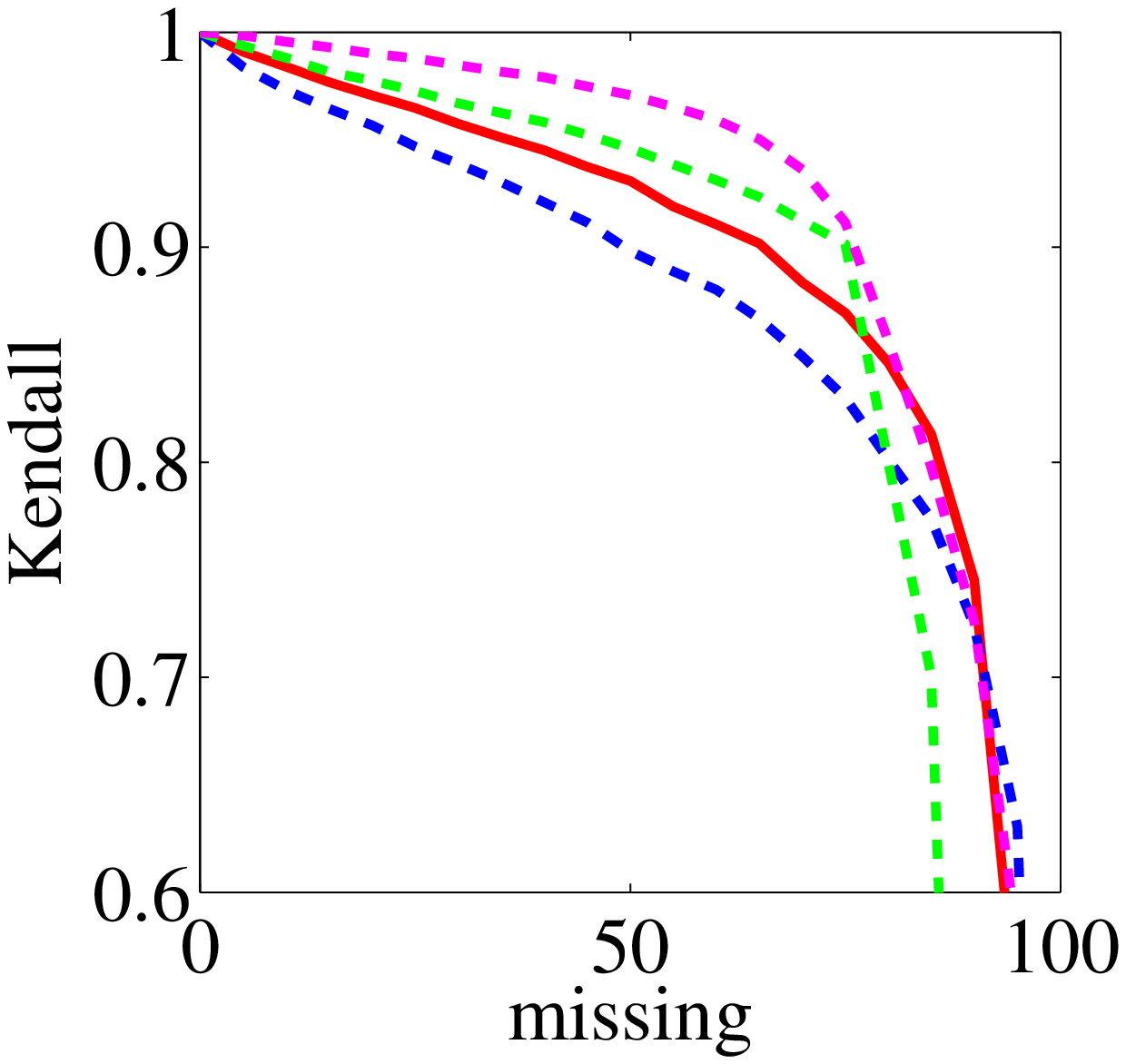} \\
\psfrag{Kendall}[b][t]{Kendall $\tau$}
\psfrag{missing}[t][b]{\% missing}
\includegraphics[scale=0.33]{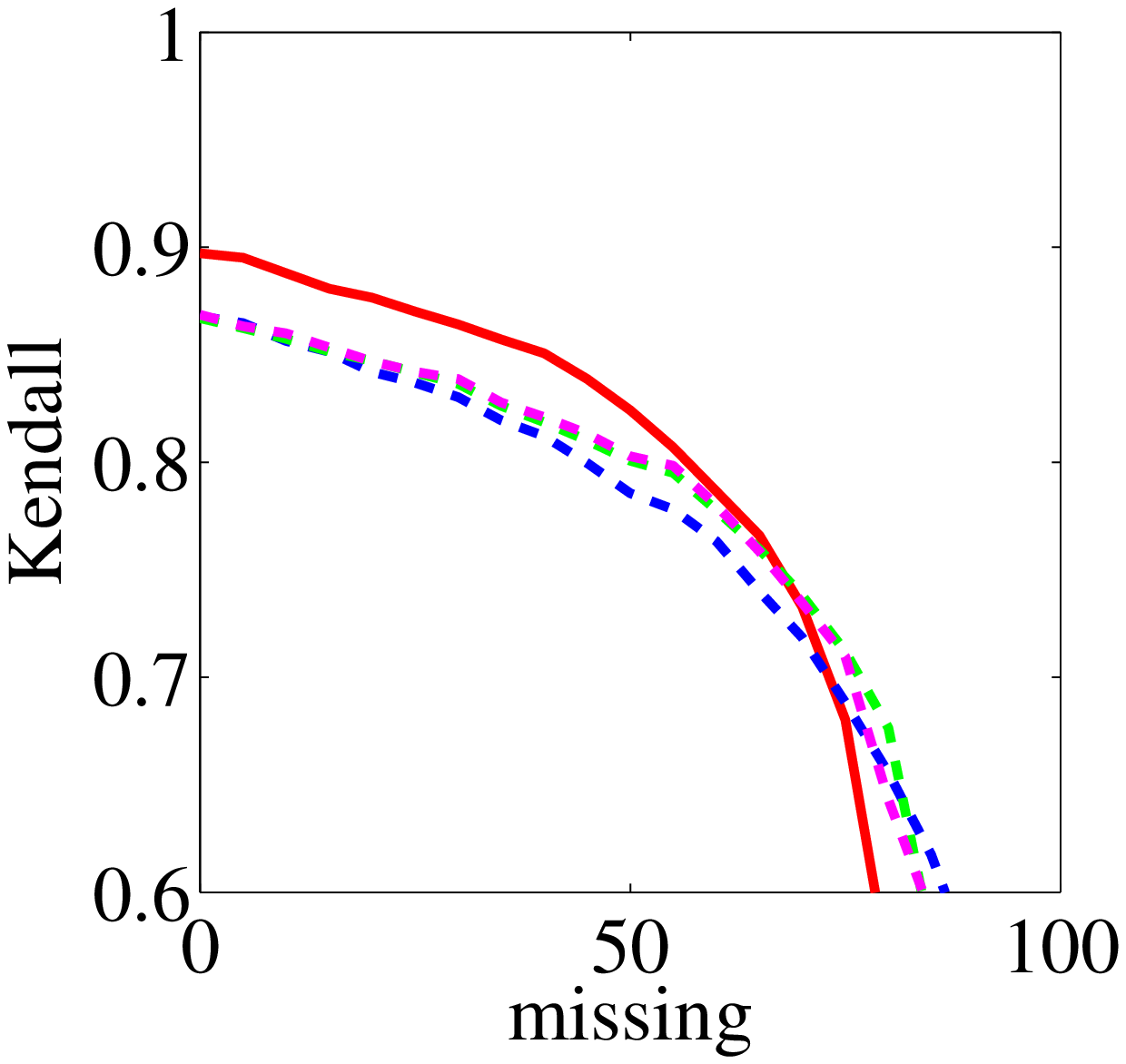} & &
\psfrag{Kendall}[b][t]{Kendall $\tau$}
\psfrag{m}[t][b]{Range $m$}
\includegraphics[scale=0.33]{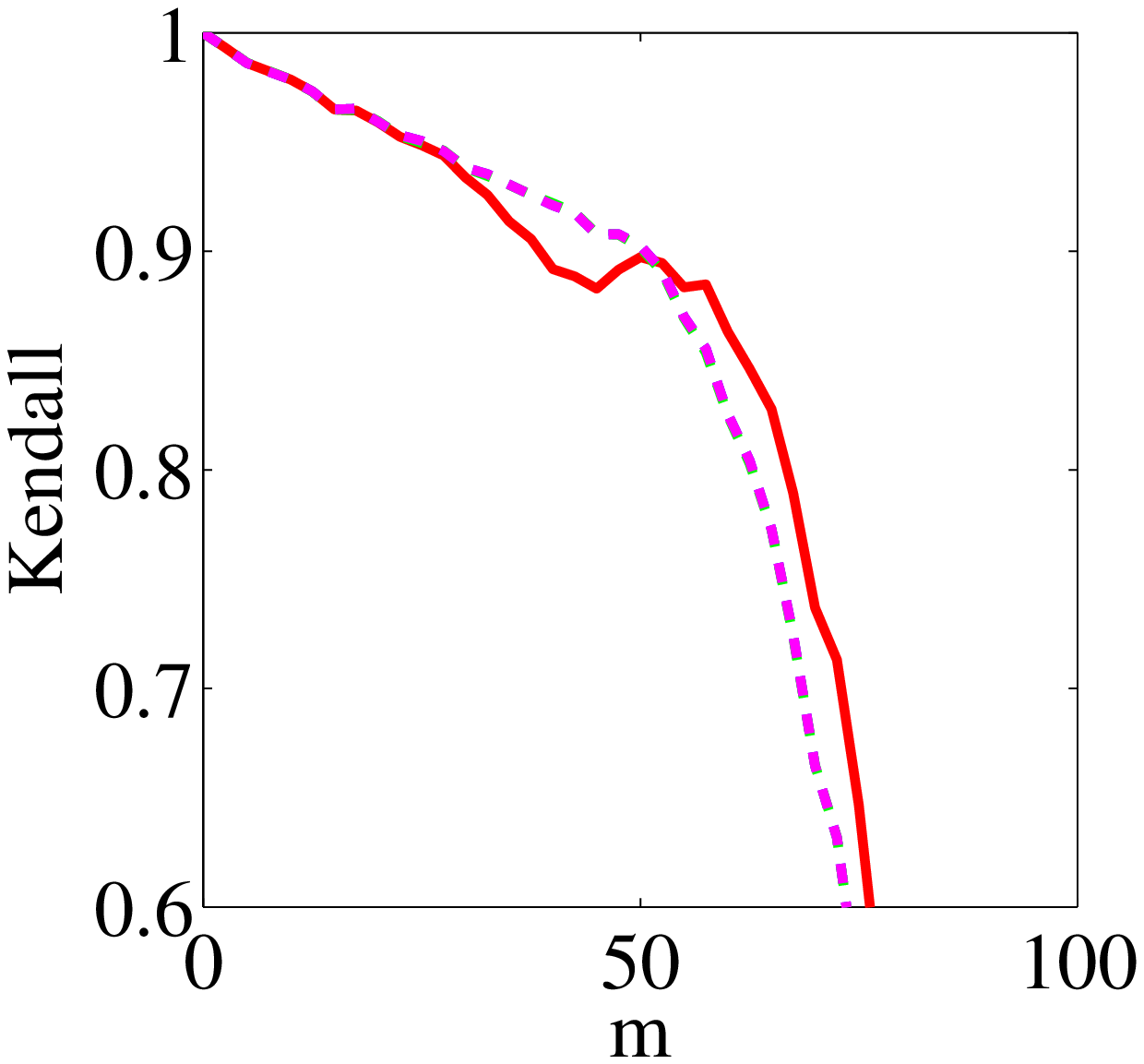}
\end{tabular}
\caption{Kendall $\tau$ (higher is better) for \ref{alg:SerialRank} (SR, full red line), point score (PS, \citep{Waut13} dashed blue line), Rank Centrality (RC \citep{Nega12} dashed green line), and maximum likelihood (BTL \citep{Brad52}, dashed magenta line). In the first synthetic dataset, we vary the proportion of corrupted comparisons {\em (top left)}, the proportion of observed comparisons {\em (top right)} and the proportion of observed comparisons, with 20\% of comparisons being corrupted {\em (bottom left)}. We also vary the parameter $m$ in the second synthetic dataset {\em (bottom right)}. \label{fig:impactNbObsNoise}}
\end{center}
 \end{figure}
 
\begin{figure}[t]
\begin{center}
\begin{tabular}{ccc} 
\psfrag{Dis}[b][t]{\% upsets in top $k$}
\psfrag{Top k}[t][b]{$k$}
\includegraphics[scale=0.45]{./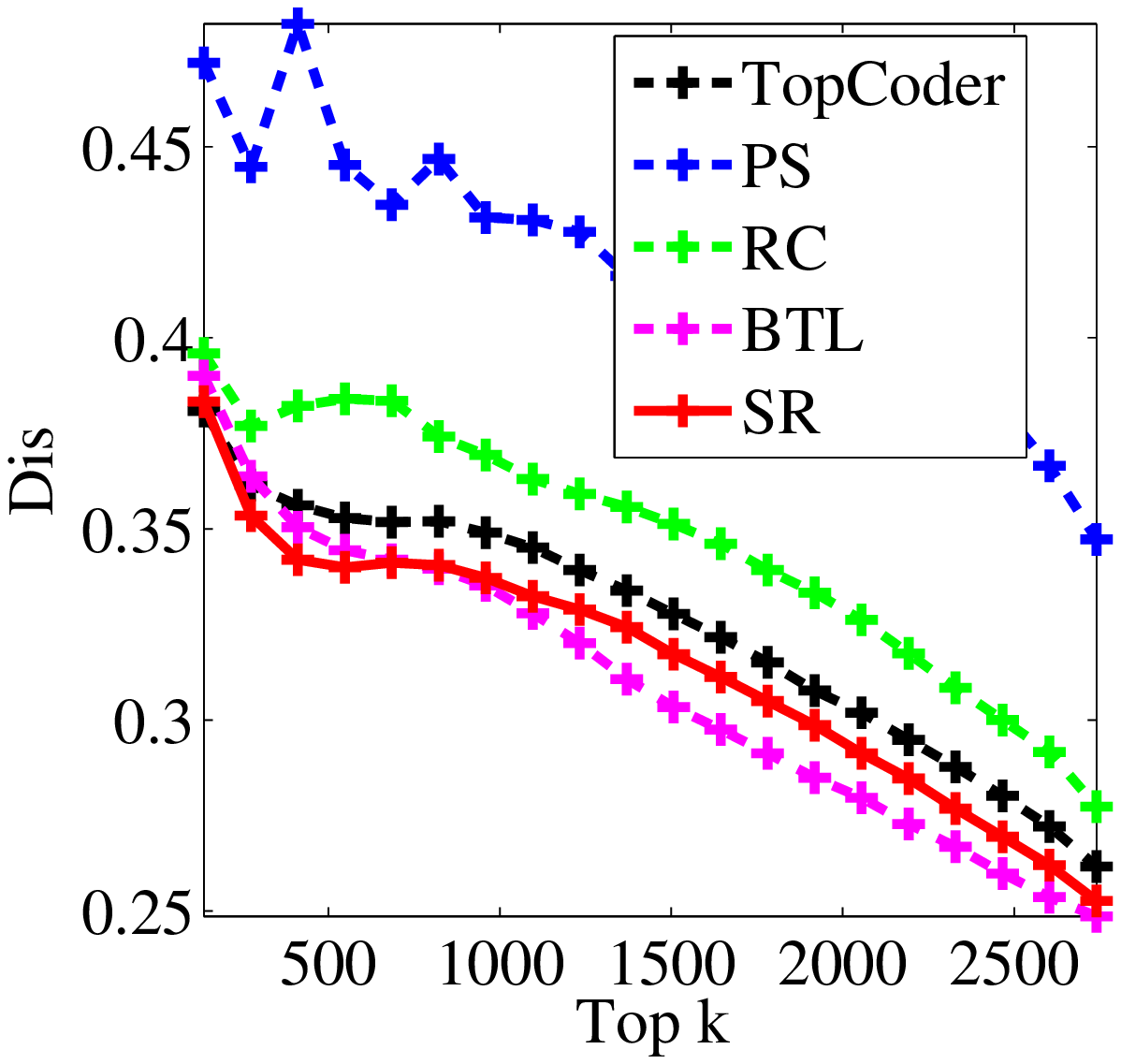} & &
\psfrag{Dis}[b][t]{\% upsets in top $k$}
\psfrag{Top k}[t][b]{$k$}
\includegraphics[scale=0.435]{./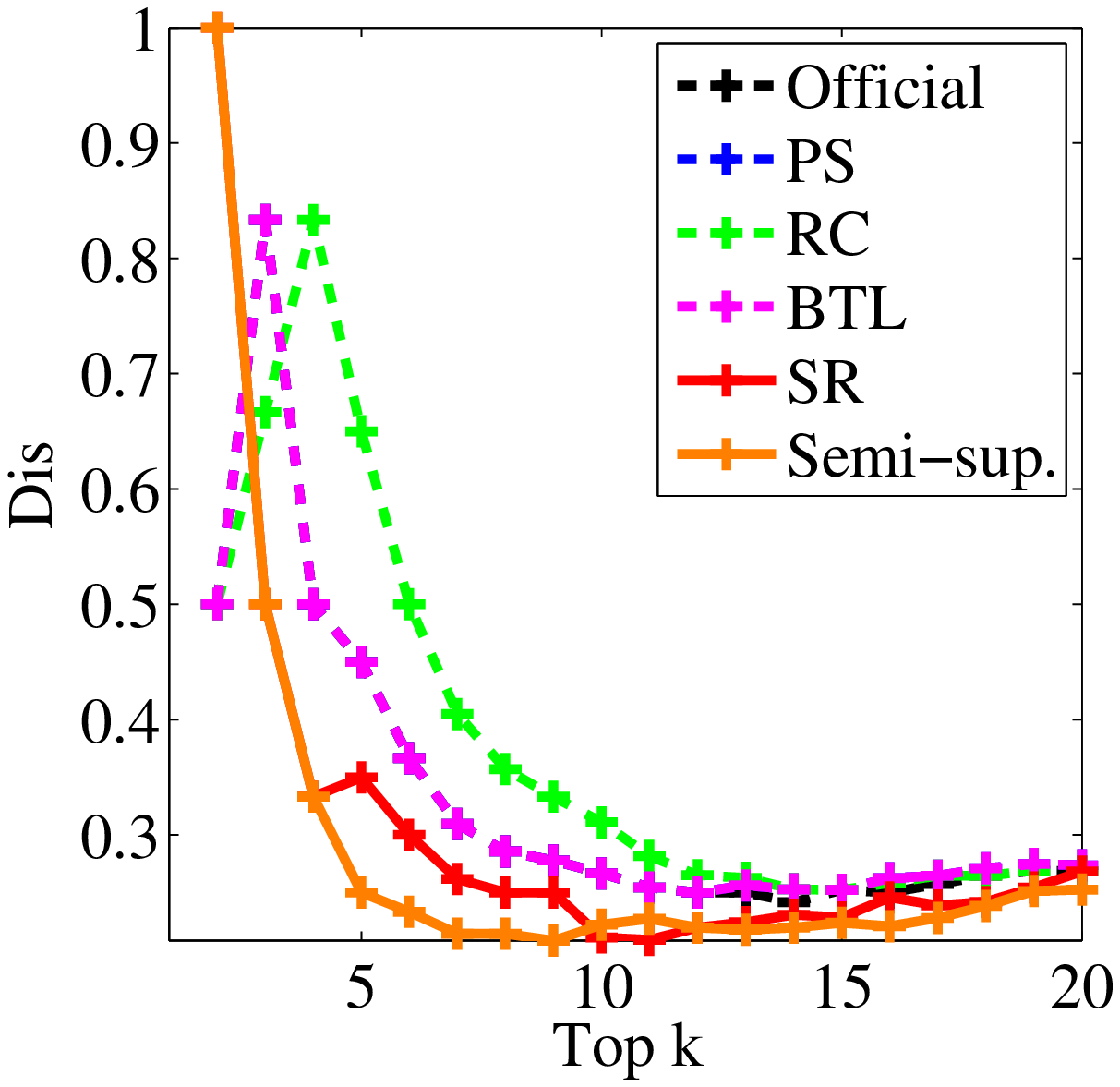}
\end{tabular}
\caption{Percentage of upsets (i.e. disagreeing comparisons, lower is better) defined in~\eqref{eq:loss}, for various values of $k$ and ranking methods, on TopCoder (\emph{left}) and football data (\emph{right}).
\label{fig:TopCoder}
}
\end{center}
\end{figure}

\begin{table}[h]
\begin{center}
\caption{Ranking of teams in the England premier league season 2013-2014.\label{tab:football}}
\vskip 0.5ex
{\scriptsize
\begin{tabular}{l|l|l|l|l|l}
\bf Official	&  \bf Row-sum 	&	\bf RC	&\bf 	BTL	&\bf 	SerialRank	&\bf 	Semi-Supervised\\\hline
Man City (86)	&	Man City	&	Liverpool	&	Man City	&	Man City	&	Man City		\\
Liverpool (84)	&	Liverpool	&	Arsenal	&	Liverpool	&	Chelsea	&	Chelsea		\\
Chelsea (82)	&	Chelsea	&	Man City	&	Chelsea	&	Liverpool	&	Liverpool		\\
Arsenal (79)	&	Arsenal	&	Chelsea	&	Arsenal	&	Arsenal	&	Everton		\\
Everton (72)	&	Everton	&	Everton	&	Everton	&	Everton	&	Arsenal		\\
Tottenham (69)	&	Tottenham	&	Tottenham	&	Tottenham	&	Tottenham	&	Tottenham	\\
Man United (64)	&	Man United	&	Man United	&	Man United	&	Southampton	&	Man United		\\
Southampton (56)	&	Southampton	&	Southampton	&	Southampton	&	Man United	&	Southampton		\\
Stoke (50)	&	Stoke	&	Stoke	&	Stoke	&	Stoke	&	Newcastle		\\
Newcastle (49)	&	Newcastle	&	Newcastle	&	Newcastle	&	Swansea	&	Stoke	\\
Crystal Palace (45)	&	Crystal Palace	&	Swansea	&	Crystal Palace	&	Newcastle	&	West Brom	\\
Swansea (42)	&	Swansea	&	Crystal Palace	&	Swansea	&	West Brom	&	Swansea		\\
West Ham (40)	&	West Brom	&	West Ham	&	West Brom	&	Hull	&	Crystal Palace	\\
Aston Villa (38)	&	West Ham	&	Hull	&	West Ham	&	West Ham	&	Hull	\\
Sunderland (38)	&	Aston Villa	&	Aston Villa	&	Aston Villa	&	Cardiff	&	West Ham		\\
Hull (37)	&	Sunderland	&	West Brom	&	Sunderland	&	Crystal Palace	&	Fulham	\\
West Brom (36)	&	Hull	&	Sunderland	&	Hull	&	Fulham	&	Norwich	\\
Norwich (33)	&	Norwich	&	Fulham	&	Norwich	&	Norwich	&	Sunderland		\\
Fulham (32)	&	Fulham	&	Norwich	&	Fulham	&	Sunderland	&	Aston Villa\\
Cardiff (30)	&	Cardiff	&	Cardiff	&	Cardiff	&	Aston Villa	&	Cardiff
\end{tabular}
}
\end{center}
\end{table}

\subsection{Semi-Supervised Ranking} We illustrate here how, in a semi-supervised setting, one can interactively enforce some constraints on the retrieved ranking, using e.g. the semi-supervised seriation algorithm in~\citep{Foge13}. We compute rankings of England Football Premier League teams for season 2013-2014 (\cf~figure~\ref{fig:FootballPastSeasons} for seasons 2011-2012 and 2012-2013). Comparisons are defined as the averaged outcome (win, loss, or tie) of home and away games for each pair of teams.
As shown in Table~\ref{tab:football}, the top half of \ref{alg:SerialRank} ranking is very close to the official ranking calculated by sorting the sum of points for each team (3 points for a win, 1 point for a tie). However, there are significant variations in the bottom half, though the number of upsets is roughly the same as for the official ranking. To test semi-supervised ranking, suppose for example that we are not satisfied with the ranking of Aston Villa (last team when ranked by the spectral algorithm), we can explicitly enforce that Aston Villa appears before Cardiff, as in the official ranking. In the ranking based on the corresponding semi-supervised seriation problem, Aston Villa is not last anymore, though the number of disagreeing comparisons remains just as low (\cf~Figure~\ref{fig:TopCoder}, \emph{right}).

\begin{figure}[t]
\begin{center}
\begin{tabular}{cc} 
\psfrag{Dis}[b][t]{\% upsets in top $k$}
\psfrag{Top k}[t][b]{$k$}
\includegraphics[scale=0.435]{./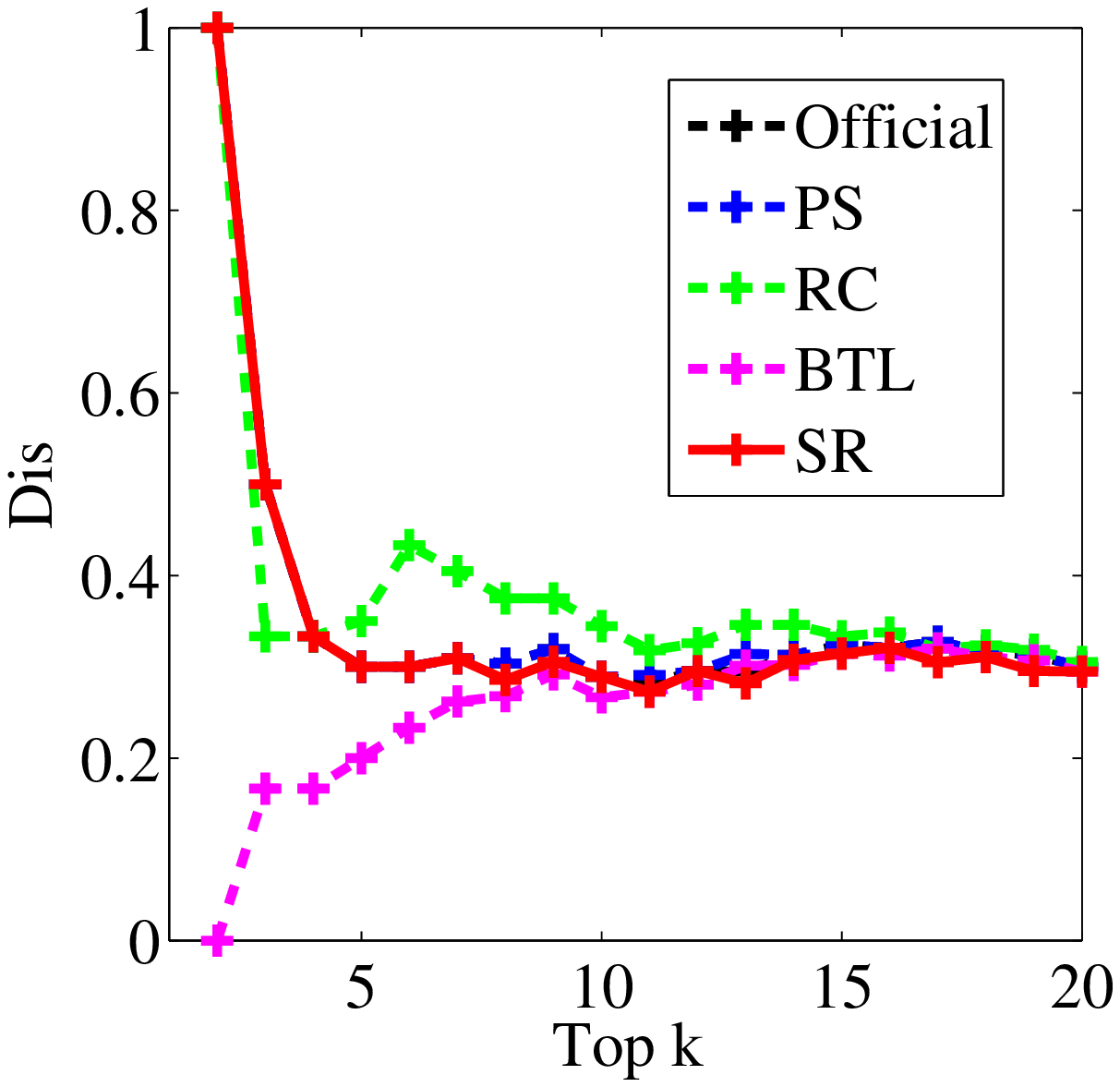} &
\psfrag{Dis}[b][t]{\% upsets in top $k$}
\psfrag{Top k}[t][b]{$k$}
\includegraphics[scale=0.435]{./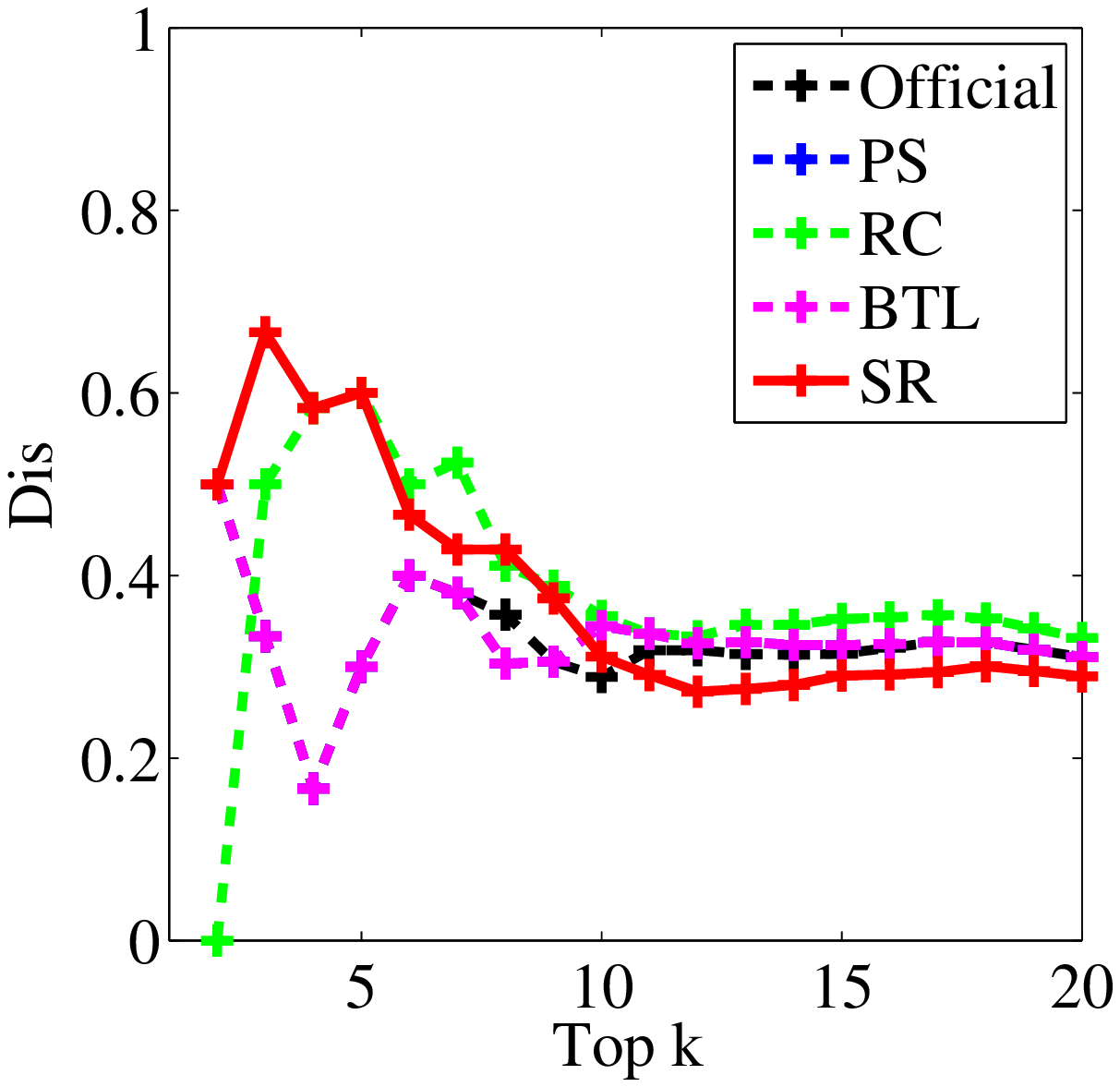} 
\end{tabular}
\caption{Percentage of upsets (i.e. disagreeing comparisons, lower is better) defined in~\eqref{eq:loss}, for various values of $k$ and ranking methods, on England Premier League 2011-2012 season (\emph{left}) and  2012-2013 season (\emph{right}).
\label{fig:FootballPastSeasons}
}
\end{center}
\end{figure}

\section{Conclusion}
We have formulated the problem of ranking from pairwise comparisons as a seriation problem, i.e. the problem of ordering from similarity information. By constructing an adequate similarity matrix, we applied a spectral relaxation for seriation to a variety of synthetic and real ranking datasets, showing competitive and in some cases superior performance compared to classical methods, especially in low noise environments. We derived performance bounds for this algorithm in the presence of corrupted and missing (ordinal) comparisons showing that SerialRank produces state-of-the art results for ranking based on ordinal comparisons, e.g. showing exact reconstruction w.h.p. when only $O(\sqrt{n})$ comparisons are missing. On the other hand, performance deteriorates when only a small fraction of comparisons are observed, or in the presence of very high noise. In this scenario, we showed that local ordering errors can be bounded if the number of samples is of order $O(n^{1.5}\mathrm{polylog}(n))$ which is significantly above the optimal bound of $O(n\log n)$.

A few questions thus remain open, which we pose as future research directions. First of all, from a theoretical perspective, is it possible to obtain an $\ell_\infty$ bound on local perturbations of the ranking using only $O(n\, \rm{polylog} (n))$ sampled pairs? Or, on the contrary, can we find a lower bound for spectral algorithms (i.e. perturbation arguments) imposing more than $\Omega(n\,\rm{polylog} (n))$ sampled pairs? Note that those questions hold for all current spectral ranking algorithms. 

Another line of research concerns the generalization of spectral ordering methods to more flexible settings, e.g., enforcing structural or a priori constraints on the ranking. Hierarchical ranking, i.e. running the spectral algorithm on increasingly refined subsets of the original data should be explored too. Early experiments suggests this works quite well, but no bounds are available at this point.

Finally, it would be interesting to investigate how similarity measures could be tuned for specific applications in order to improve SerialRank predictive power, for instance to take into account more information than win/loss in sports tournaments. Additional experiments in this vein can be found in \cite{cucuringu2015sync}. 

\newpage
{
\bibliographystyle{agsm}
\bibliography{/Users/Fajwel/Dropbox/PhD/SpectralRanking/Paper/Arxiv/MainPerso}
}

\section{Appendix}
\label{s:appendix}

We now detail several complementary technical results.

\subsection{Exact recovery results with missing entries}
Here, as in Section~\ref{s:robust}, we study the impact of one missing comparison on \ref{alg:SerialRank}, then extend the result to multiple missing comparisons. 

\begin{proposition}\label{prop:sup-miss}
Given pairwise comparisons $C_{s,t} \in \{-1,0,1\}$ between items ranked according to their indices, suppose only one comparison $C_{i,j}$ is missing, with $j-i>1$ (i.e., $C_{i,j}=0$), then $S^\mathrm{match}$ defined in~\eqref{eq:rnk-sim} remains \emph{strict-R} and the point score vector remains strictly monotonic.
\end{proposition}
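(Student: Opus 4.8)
The plan is to mirror the argument of Proposition~\ref{prop:sup-oneError}, tracking how a single missing comparison perturbs both the point score vector and the similarity matrix $S^\mathrm{match}$, and exploiting the fact that a missing comparison produces shifts of magnitude $1/2$ rather than the magnitude-$1$ shifts caused by a sign flip. I write $\hat C$ for the observed comparison matrix, which differs from the true $C$ only in $\hat C_{i,j}=\hat C_{j,i}=0$ (whereas $C_{i,j}=-1$, $C_{j,i}=1$ since $i<j$), and $\hat S,\hat w$ for the associated similarity and point score.

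First I would record the effect on the point score. Only columns $i$ and $j$ of $\hat C$ change, giving $\hat w_i=w_i-1$ and $\hat w_j=w_j+1$ while all other scores are unchanged. The true scores satisfy $w_k=n+2-2k$, so consecutive scores differ by exactly $2$; the shifts of $\mp 1$ at positions $i$ and $j$ therefore keep $\hat w_i$ strictly between $w_{i-1}$ and $w_{i+1}$ and $\hat w_j$ strictly between $w_{j-1}$ and $w_{j+1}$. The hypothesis $j-i>1$ guarantees $i+1\neq j$, so $\hat w_i$ and $\hat w_j$ do not collide and strict monotonicity of the point score is preserved (note that $j=i+1$ would force $\hat w_i=\hat w_{i+1}$, which is precisely why the separation is required).

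Next I would compute $\hat S-S$ entrywise from \eqref{eq:rnk-sim}, as in Proposition~\ref{prop:sup-oneError}. Expanding $\hat C\hat C^T$ and using $C_{i,j}=-1$, $C_{j,i}=1$, I expect rows and columns $i,j$ to pick up shifts of $\pm 1/2$: for $t\neq i,j$, $\hat S_{i,t}=S_{i,t}-1/2$ when $t<j$ and $S_{i,t}+1/2$ when $t>j$, and symmetrically $\hat S_{j,t}=S_{j,t}+1/2$ when $t<i$ and $S_{j,t}-1/2$ when $t>i$; a short computation gives $\hat S_{i,j}=S_{i,j}$ (the altered diagonal terms cancel, as in the corrupted case), while the diagonal entries $\hat S_{i,i},\hat S_{j,j}$ each drop by $1/2$. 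All entries with both indices outside $\{i,j\}$ are untouched.

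Finally I would verify the strict-R property through the two families of inequalities $\hat S_{s,t}\le \hat S_{s+1,t}$ and $\hat S_{s,t+1}\le \hat S_{s,t}$ for $s<t$ (the lower triangle following by symmetry). By \eqref{eq:s-match-ideal}, each such consecutive difference equals exactly $1$ in the unperturbed matrix. Because $|i-j|>1$, no two consecutive rows (resp.\ columns) both lie in $\{i,j\}$, so each difference is perturbed by at most a single shift of $\pm 1/2$; the only way both entries move is when they share a modified column or row, in which case the shift is either uniform (leaving the difference exactly $1$) or isolated at the untouched entry $\hat S_{i,j}$ and the reduced diagonal. A short case check at the sign-flip locations $t=j-1,j,j+1$ and at the diagonal then confirms each difference lies in $\{1/2,1,3/2\}$, hence stays strictly positive. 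Thus $\hat S$ is an R-matrix whose constraints are all strict, so it admits no block of equal rows in the sense of Lemma~\ref{lemma:strictReq} and is therefore strict-R. I expect the only delicate part to be this boundary case analysis near the columns and rows where the $\pm 1/2$ shifts change sign; everywhere else the gap of $1$ dominates the perturbation and strictness is immediate. This is in fact simpler than Proposition~\ref{prop:sup-oneError}, where the magnitude-$1$ shifts collapse some constraints to equalities that must be broken separately, whereas here the magnitude-$1/2$ shifts leave every constraint strict.
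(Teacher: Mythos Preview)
Your proposal is correct and follows essentially the same route as the paper: compute the $\pm 1/2$ shifts of $S^{\mathrm{match}}$ on rows/columns $i,j$, note $\hat S_{i,j}=S_{i,j}$, and verify that every consecutive R-constraint, originally of gap exactly $1$, is perturbed by at most $1/2$ and hence remains strict. Your treatment is in fact slightly more careful than the paper's (you explicitly track the $-1/2$ drop on the diagonal and spell out why $j-i>1$ prevents collisions in the point score), and your closing remark that the half-size shifts make this easier than the corrupted case---no equalities to break separately---is exactly the point.
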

\begin{proof}
We use the same proof technique as in Proposition~\ref{prop:sup-oneError}. We write the true score and comparison matrix $w$ and $C$, while the observations are written $\hat w$ and $\hat C$ respectively. This means in particular that $\hat C_{i,j}=0$. To simplify notations we denote by $S$ the similarity matrix $S^\mathrm{match}$ (respectively $\hat S$ when the similarity is computed from observations).
We first study the impact of the missing comparison $C_{i,j}$ for $i<j$ on the point score vector $\hat{w}$. We have
\[
\hat{w}_i = \sum_{k=1}^{n} \hat C_{k,i} = \sum_{k=1}^{n} C_{k,i} + \hat C_{j,i} - C_{j,i} = w_i + 1,
\]

similarly $\hat{w}_j=w_{j}-1$, whereas for $k\neq i,j$, $\hat{w_k}=w_k$.
Hence, $w$ is still strictly increasing if $j>i+1$. If $j=i+1$ there is a tie between $w_i$ and $w_{i+1}$. Now we show that the similarity matrix defined in \eqref{eq:rnk-sim} is an R-matrix. Writing $\hat S$ in terms of $S$, we get
\[
[\hat C\hat C^T]_{i,t}= \sum_{k\neq j} \left( \hat C_{i,k}\hat C_{t,k} \right) + \hat C_{i,j}\hat C_{t,j} 
= \sum_{k\neq j} \left( C_{i,k}C_{t,k} \right) = 
\left\{\BA{ll}
[CC^T]_{i,t} -1  & \mbox{if} \ t  <  j \\
\left[CC^T\right]_{i,t} +1 & \mbox{if} \ t > j.\\
\EA\right.
\]
We thus get
\[ \hat S_{i,t} =
 \left\{\BA{ll}
S_{i,t} - \frac{1}{2}  & \mbox{if} \ t  < j \\
S_{i,t} + \frac{1}{2} & \mbox{if} \ t > j,\\
\EA\right. \]
(remember there is a factor $1/2$ in the definition of $S$). Similarly we get for any $t \neq i$
\[ \hat S_{j,t} =
 \left\{\BA{ll}
S_{j,t} + \frac{1}{2}  & \mbox{if} \ t  <  i \\
S_{j,t} - \frac{1}{2} & \mbox{if} \ t > i.\\
\EA\right. \]
Finally, for the single corrupted index pair $(i,j)$, we get
\[
\hat S_{i,j}= \frac{1}{2} \left( n+ \sum_{k\neq i,j} \left( \hat C_{i,k}\hat C_{j,k} \right) + \hat C_{i,i}\hat C_{j,i} + \hat C_{i,j}\hat C_{j,j} \right)
= S_{i,j}  - 0 +0 = S_{i,j}.
\]
For all other coefficients $(s,t)$ such that $s,t \neq i,j$, we have $\hat S_{s,t}=S_{s,t}$. Meaning all rows or columns outside of $i,j$ are left unchanged. We first observe that these last equations, together with our assumption that $j-i>2$, mean that
\[
\hat S_{s,t} \geq \hat S_{s+1,t} \quad \mbox{and} \quad \hat S_{s,t+1} \geq \hat S_{s,t},\quad \mbox{for any } s < t
\]
so $\hat S$ remains an R-matrix.
To show uniqueness of the retrieved order, we need $j-i>1$. Indeed, when $j-i>1$ all these $R$ constraints are strict, which means that $\hat S$ is still a strict R-matrix, hence the desired result.
\end{proof}

We can extend this result to the case where multiple comparisons are missing.

\begin{proposition}\label{prop:sup-kmiss}
Given pairwise comparisons $C_{s,t} \in \{-1,0,1\}$ between items ranked according to their indices, suppose $m$ comparisons indexed $(i_1,j_1),\ldots,(i_m,j_m)$ are missing, i.e., $C_{i_l,j_j}=0$ for $i=l,\ldots,m$. If the following condition~\eqref{cond:distrkmissing} holds true,
\BEQ \label{cond:distrkmissing} 
| s - t | > 1 \ \mathrm{for \ all} \ s\neq t \in \{i_1,\dots,i_{m},j_1,\dots,j_{m}\}
\EEQ  
 then $S^\mathrm{match}$ defined in~\eqref{eq:rnk-sim} remains \emph{strict-R} and the point score vector remains strictly monotonic.
\end{proposition}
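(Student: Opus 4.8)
The plan is to follow the proof of Proposition~\ref{prop:sup-kerrors} step for step in structure, replacing the $\pm 1$ entrywise shifts of a \emph{corrupted} comparison by the $\pm\tfrac12$ shifts of a \emph{missing} one that were already computed in Proposition~\ref{prop:sup-miss}. I keep the notation $S,\hat S,w,\hat w$ and assume without loss of generality $i_l<j_l$, so that $\hat C_{i_l,j_l}=\hat C_{j_l,i_l}=0$ for every $l$. The first step is pure bookkeeping: since a single deleted pair $(i,j)$ disturbs only rows and columns $i$ and $j$, for any index $t\notin\{i_1,\dots,i_m,j_1,\dots,j_m\}$ the shifts are exactly those of Proposition~\ref{prop:sup-miss}, namely $\hat S_{i,t}=S_{i,t}\mp\tfrac12$ according as $t<j$ or $t>j$, a symmetric statement for row $j$, and $\hat S_{i,j}=S_{i,j}$, while $\hat S_{s,t}=S_{s,t}$ whenever neither $s$ nor $t$ is an endpoint of a missing comparison.

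The genuinely new part, absent from the single-deletion case, is to evaluate $\hat S_{s,t}$ at the \emph{intersections} where both $s$ and $t$ are endpoints of (possibly different) missing pairs $(s,s')$ and $(t,t')$. There two rank-one corrections overlap, and expanding
\[
\hat S_{s,t}=\tfrac12\Big(n+\sum_{k\neq s',t'}C_{s,k}C_{t,k}+\hat C_{s,s'}\hat C_{t,s'}+\hat C_{s,t'}\hat C_{t,t'}\Big)
\]
with the two vanishing observed entries shows that $\hat S_{s,t}$ differs from $S_{s,t}$ by one of $0,\pm\tfrac12,\pm 1$, the precise value being fixed by the relative order of $s,t,s',t'$. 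I would tabulate these few cases once; they are the half-magnitude analogues of the $0,\pm 2$ corrections that appear in Proposition~\ref{prop:sup-kerrors}.

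With every perturbed entry recorded, I would then check the R-inequalities $\hat S_{s,t}\le\hat S_{s+1,t}$ and $\hat S_{s,t+1}\le\hat S_{s,t}$ for $s<t$. The mechanism is the same as before: the exact matrix~\eqref{eq:s-match-ideal} has consecutive off-diagonal entries separated by at least one, so no isolated $\pm\tfrac12$ correction can reverse a weak inequality, and hypothesis~\eqref{cond:distrkmissing}, $|s-t|>1$, prevents two like-signed corrections from landing on adjacent rows or columns. To upgrade to \emph{strict}-R I argue, as in Propositions~\ref{prop:sup-oneError} and~\ref{prop:sup-kerrors}, that every constraint is strict except possibly between rows $i_l,i_l+1$ and $j_l-1,j_l$, and break those using the column $k=j_l-1$, which~\eqref{cond:distrkmissing} forces to be unperturbed: from $S_{i_l+1,\,j_l-1}=S_{i_l,\,j_l-1}+1$ one gets $\hat S_{i_l,\,j_l-1}=S_{i_l,\,j_l-1}-\tfrac12<S_{i_l+1,\,j_l-1}=\hat S_{i_l+1,\,j_l-1}$. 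For the point score I use $\hat w_{i_l}=w_{i_l}+1$ and $\hat w_{j_l}=w_{j_l}-1$ from Proposition~\ref{prop:sup-miss}: the true scores are strictly monotone with consecutive gaps equal to two, and~\eqref{cond:distrkmissing} keeps the perturbed indices pairwise non-adjacent, so each unit shift remains strictly inside its gap.

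I expect the strictness bookkeeping, rather than the plain R-inequalities, to be the main obstacle. Because a missing comparison moves each entry by only $\tfrac12$, the margin against the baseline unit gap is half of the corrupted-case margin, so I must verify carefully that when a single index is an endpoint of several missing pairs the accumulated corrections on its row and column never exhaust the full unit gap against the neighbouring unperturbed row or column. This is precisely the point at which the weakened separation $|s-t|>1$ (in place of $|s-t|>2$ in Proposition~\ref{prop:sup-kerrors}) has to be exploited sharply.
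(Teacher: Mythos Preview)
Your approach is exactly the paper's: the paper's entire proof is ``Proceed similarly as in the proof of Proposition~\ref{prop:sup-kerrors}, except that shifts are divided by two,'' and you have fleshed out precisely that sketch. Two small remarks. First, at an intersection $(s,t)$ of two missing-pair endpoints, the correction is one of $0,\pm1$ (exactly half of the $0,\pm2$ in the corrupted case), not $0,\pm\tfrac12,\pm1$: both $\hat C_{s,s'}$ and $\hat C_{t,t'}$ vanish, so $\hat S_{s,t}=S_{s,t}-\tfrac12 C_{s,s'}C_{t,s'}-\tfrac12 C_{s,t'}C_{t,t'}$; this is harmless since your bound only improves. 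Second, your worry about a single index being the endpoint of several missing pairs is unnecessary: condition~\eqref{cond:distrkmissing} forces all $2m$ endpoints to be pairwise at distance~$>1$, hence in particular distinct, so each row and column carries at most one $\pm\tfrac12$ shift and the strictness bookkeeping goes through with the comfortable margin you describe.
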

\begin{proof}
Proceed similarly as in the proof of Proposition~\ref{prop:sup-kerrors}, except that shifts are divided by two.
\end{proof}

We also get the following corollary.

\begin{corollary}
\label{cor:impactCorMiss}
Given pairwise comparisons $C_{s,t} \in \{-1,0,1\}$ between items ranked according to their indices, suppose $m$ comparisons indexed $(i_1,j_1),\ldots,(i_m,j_m)$ are either corrupted or missing. If condition~\eqref{cond:app-distrkerrors} holds true then $S^\mathrm{match}$ defined in~\eqref{eq:rnk-sim} remains \emph{strict-R}.
\end{corollary}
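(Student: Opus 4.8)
The plan is to combine the shift analyses of Proposition~\ref{prop:sup-kerrors} (corrupted comparisons) and Proposition~\ref{prop:sup-kmiss} (missing comparisons) by superposition. The key structural fact is that $S^\mathrm{match}$ depends on the comparisons only through the product $CC^T$ in~\eqref{eq:rnk-sim}, so the contribution of each altered entry to $\hat S$ is additive and can be analyzed independently. First I would split the index set $\{(i_1,j_1),\dots,(i_m,j_m)\}$ into the corrupted pairs and the missing pairs. For a single corrupted pair, the computation in Proposition~\ref{prop:sup-oneError} shows that rows and columns $i_l,j_l$ of $\hat S$ are shifted by $\pm 1$ while $\hat S_{i_l,j_l}$ is unchanged; for a single missing pair, the computation in Proposition~\ref{prop:sup-miss} gives the identical pattern with the shifts halved to $\pm\tfrac12$.

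Next I would handle the intersection entries, i.e.\ the entries $\hat S_{s,t}$ for which both $s$ and $t$ lie in the altered index set. As in the proof of Proposition~\ref{prop:sup-kerrors}, isolating the two special columns $s'$ and $t'$ in the sum defining $\hat S_{s,t}$ yields a net shift lying in $\{0,\pm 1,\pm 2\}$, where corrupted pairs enter with full weight and missing pairs with half weight. The crucial observation is that condition~\eqref{cond:app-distrkerrors} imposes $|s-t|>2$ for \emph{every} pair of distinct indices in the combined set, which is strictly stronger than condition~\eqref{cond:distrkmissing} and therefore simultaneously covers the requirement needed for the missing pairs. Consequently no two altered rows (or columns) are adjacent, and the shift along any single row or column stays bounded by $1$ in absolute value.

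With these bounds established, I would verify the R-matrix inequalities $\hat S_{s,t}\le \hat S_{s+1,t}$ and $\hat S_{s,t+1}\le \hat S_{s,t}$ for $s<t$ exactly as in the two source propositions: by~\eqref{eq:s-match-ideal} the unperturbed coefficients satisfy $S_{s,t}=n-|s-t|$, so consecutive off-diagonal entries differ by at least $1$, which absorbs the shifts and preserves monotonicity, whence $\hat S$ is an R-matrix. For strictness, condition~\eqref{cond:app-distrkerrors} guarantees that the only $R$-constraints that can degenerate to equalities are those between rows $i_l$ and $i_l+1$ and between rows $j_l-1$ and $j_l$ (and their column analogues); these residual ties are broken by the explicit comparison $\hat S_{i_l,k}<\hat S_{i_l+1,k}$ at $k=j_l-1$ used in Proposition~\ref{prop:sup-kerrors}, whose validity rests on $j_l-1>i_l+1$. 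Hence $\hat S$ is strict-R, which is the claim.

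I expect the main obstacle to be the bookkeeping at the intersection entries where a row affected by a corrupted pair meets a column affected by a missing pair: there an integer shift and a half-integer shift superpose, and one must check that the combined net shift never exceeds the gap of $1$ between neighbouring unperturbed coefficients. The uniform separation in~\eqref{cond:app-distrkerrors} is precisely what rules out such overlaps becoming too large, so the bulk of the work is confirming that the four cases (corrupted--corrupted, corrupted--missing, missing--corrupted, missing--missing) each reduce to a shift already bounded in the proofs of Propositions~\ref{prop:sup-kerrors} and~\ref{prop:sup-kmiss}.
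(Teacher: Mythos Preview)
Your proposal is correct and follows essentially the same approach as the paper: the paper's proof is a one-liner (``Proceed similarly as the proof of Proposition~\ref{prop:sup-kerrors}, except that shifts are divided by two for missing comparisons''), and you have simply unpacked that sentence into the explicit superposition argument, including the intersection-entry bookkeeping and the strictness check via $j_l-1>i_l+1$. The substance is identical.
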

\begin{proof}
Proceed similarly as the proof of Proposition~\ref{prop:sup-kerrors}, except that shifts are divided by two for missing comparisons.
\end{proof}

\subsection{Standard theorems and technical lemmas used in spectral perturbation analysis (section~\ref{s:spectralPertAnalysis})}
\label{s:appendixPertAnalysis}

We first recall Weyl's inequality and a simplified version of Davis-Kahan theorem which can be found in \citep{Stew90,Stew01,yu2015}.

\begin{theorem}{\bf (Weyl's inequality)}
\label{lemma:Weyl}
Consider a symmetric matrix $A$ with eigenvalues $\lambda_1,\ldots,\lambda_n$ and $\tilde A$ a symmetric perturbation of $A$ with eigenvalues $\tilde \lambda_1,\ldots,\tilde \lambda_n$,
\[
\max_i |\tilde \lambda_i - \lambda_i| \leq \|\tilde A -A \|_2.
\]

\end{theorem}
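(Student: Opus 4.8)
The plan is to prove Weyl's inequality through the Courant--Fischer variational (min--max) characterization of the eigenvalues of a symmetric matrix. First I would set $E = \tilde A - A$, so that $\tilde A = A + E$; since both $A$ and $\tilde A$ are symmetric, so is $E$, and by the convention recalled in the Notations the operator norm of a symmetric matrix equals its largest eigenvalue in absolute value, so $\|E\|_2 = \max_i |\lambda_i(E)|$. In particular $-\|E\|_2 \leq x^T E x \leq \|E\|_2$ for every unit vector $x$, which is the single analytic fact driving the whole argument.

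Next I would order the eigenvalues decreasingly, $\lambda_1 \geq \cdots \geq \lambda_n$ for $A$ and likewise $\tilde\lambda_1 \geq \cdots \geq \tilde\lambda_n$ for $\tilde A$, and recall Courant--Fischer in the form $\lambda_k(A) = \max_{\dim V = k}\ \min_{x \in V,\, \|x\|_2 = 1} x^T A x$. Fixing an index $k$, let $V^\star$ be a $k$-dimensional subspace attaining this maximum for $A$, so that $\min_{x \in V^\star,\, \|x\|_2=1} x^T A x = \lambda_k$. For every unit $x \in V^\star$ we have $x^T \tilde A x = x^T A x + x^T E x \geq x^T A x - \|E\|_2$; taking the minimum over such $x$ and then the maximum over all $k$-dimensional subspaces in the Courant--Fischer formula for $\tilde A$ yields $\tilde\lambda_k \geq \lambda_k - \|E\|_2$.

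Then I would obtain the matching upper bound by symmetry. Applying the same estimate to $A = \tilde A + (-E)$, and noting $\|-E\|_2 = \|E\|_2$, gives $\lambda_k \geq \tilde\lambda_k - \|E\|_2$, i.e. $\tilde\lambda_k \leq \lambda_k + \|E\|_2$. Combining the two one-sided bounds produces $|\tilde\lambda_k - \lambda_k| \leq \|E\|_2 = \|\tilde A - A\|_2$, and since $k$ was arbitrary, taking the maximum over $k$ delivers the stated inequality.

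This argument is entirely standard and presents no genuine obstacle; the only point needing care is the bookkeeping in the variational characterization, namely using the same indexing convention (decreasing eigenvalues paired with $k$-dimensional subspaces in the max--min form) consistently for both $A$ and $\tilde A$, so that the perturbation term $x^T E x$ is controlled uniformly by $\pm\|E\|_2$ on both sides. Alternatively one could quote the general Weyl inequalities $\lambda_k(A) + \lambda_n(E) \leq \lambda_k(A+E) \leq \lambda_k(A) + \lambda_1(E)$ and read the result off directly, but I would favour the self-contained min--max derivation above.
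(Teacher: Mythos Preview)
Your proof is correct and entirely standard: the Courant--Fischer min--max argument you outline, together with the uniform bound $-\|E\|_2 \le x^T E x \le \|E\|_2$ on unit vectors, cleanly yields both one-sided estimates and hence the conclusion.

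There is nothing to compare against here, because the paper does not actually prove this statement. Weyl's inequality is merely \emph{recalled} in the appendix as a classical result, with references to \citep{Stew90,Stew01,yu2015}, and then used as a black box (for instance in the proof of Theorem~\ref{thm:boundFiedlerPerturbation} to control $|\tilde\lambda_2-\lambda_2|$ via $\|\tilde L_{sym}-L_{sym}\|_2$). Your self-contained derivation is therefore strictly more than what the paper provides, and would be a perfectly acceptable justification in its place.
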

\begin{theorem} {\bf (Variant of Davis-Kahan theorem \citep[Corollary 3][]{yu2015})} 
\label{lemma:davis-kahan}
Let $A, \tilde A \in \reals^n$ be symmetric, with eigenvalues $\lambda_1 \leq \ldots \leq \lambda_n$ and $\tilde \lambda_1 \leq \ldots \leq \tilde \lambda_n$ respectively. Fix $j \in \{1,\ldots, n\}$, and assume that $\min(\lambda_j - \lambda_{j-1}, \lambda_{j+1} - \lambda_j) > 0$, where
$\lambda_{n+1} := \infty$ and $\lambda_{0} := -\infty$. 
If $v, \tilde v \in \reals^n$ satisfy $Av = \lambda_j v$ and $\tilde A \tilde v = \tilde \lambda_j \tilde v$, then
\[
\sin \Theta(\tilde v,v)\leq \frac{2\|\tilde A-A\|_2}{\min(\lambda_{j}-\lambda_{j-1}, \lambda_{j+1}-\lambda_{j})}.
\]
Moreover, if $\tilde v^T v \geq 0$, then
\[
\| \tilde v-v \|_2\leq \frac{2\sqrt{2}\|\tilde A-A\|_2}{\min(\lambda_{j}-\lambda_{j-1}, \lambda_{j+1}-\lambda_{j})}.
\]
\end{theorem}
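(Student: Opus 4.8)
The plan is to recognize this statement as the single-eigenvector specialization of the variant of the Davis--Kahan theorem in \citep{yu2015} and to reproduce the short, self-contained $\sin\Theta$ argument behind it. Write $E=\tilde A-A$, let $(\lambda_k,u_k)_{k=1}^n$ be an orthonormal eigensystem of $A$ with $Au_k=\lambda_k u_k$, set $v=u_j$ and $\delta=\min(\lambda_j-\lambda_{j-1},\lambda_{j+1}-\lambda_j)>0$. Since $v$ and $\tilde v$ are unit vectors, the principal angle satisfies $\sin\Theta(\tilde v,v)=\|P\tilde v\|_2$, where $P$ is the orthogonal projector onto $v^\perp=\mathrm{span}\{u_k:k\neq j\}$. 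The whole argument is a lower bound on how much $A-\lambda_j\idm$ stretches $P\tilde v$, played off against the fact that $\tilde v$ is an exact eigenvector of $\tilde A$.

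First I would record two elementary facts. Because $v^\perp$ is spanned by eigenvectors of $A$ whose eigenvalues all lie at distance at least $\delta$ from $\lambda_j$ (for $k<j$ one has $\lambda_j-\lambda_k\ge \lambda_j-\lambda_{j-1}\ge\delta$, and symmetrically for $k>j$), expanding any $w\in v^\perp$ in this basis gives $\|(A-\lambda_j\idm)w\|_2\ge \delta\|w\|_2$. Moreover $P$ commutes with $A$, since $v^\perp$ and its complement $\mathrm{span}(v)$ are both $A$-invariant. Applying the first fact to $w=P\tilde v$ and then the commutation, I obtain $\delta\|P\tilde v\|_2\le \|(A-\lambda_j\idm)P\tilde v\|_2=\|P(A-\lambda_j\idm)\tilde v\|_2$. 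The key identity is that, since $\tilde A\tilde v=\tilde\lambda_j\tilde v$,
\[
(A-\lambda_j\idm)\tilde v=(\tilde A-\lambda_j\idm)\tilde v-E\tilde v=(\tilde\lambda_j-\lambda_j)\tilde v-E\tilde v.
\]
Projecting, and using $\|P\|_2\le 1$ together with $\|E\tilde v\|_2\le\|E\|_2$, yields $\delta\|P\tilde v\|_2\le |\tilde\lambda_j-\lambda_j|\,\|P\tilde v\|_2+\|E\|_2$, while Weyl's inequality (Theorem~\ref{lemma:Weyl}) bounds $|\tilde\lambda_j-\lambda_j|\le\|E\|_2$. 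Rearranging gives $(\delta-\|E\|_2)\|P\tilde v\|_2\le\|E\|_2$.

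To convert this into the clean bound $2\|E\|_2/\delta$ free of any smallness assumption, I would split on the size of $\|E\|_2$: if $\|E\|_2\ge\delta/2$ then $2\|E\|_2/\delta\ge 1\ge\sin\Theta(\tilde v,v)$ holds trivially, while if $\|E\|_2<\delta/2$ then $\delta-\|E\|_2>\delta/2$ and the rearranged inequality gives $\|P\tilde v\|_2<2\|E\|_2/\delta$; both cases deliver $\sin\Theta(\tilde v,v)\le 2\|E\|_2/\delta$, which is the first claim. For the second claim, when $\tilde v^T v\ge 0$ I would use $\|\tilde v-v\|_2^2=2(1-\tilde v^T v)=2(1-\cos\Theta)\le 2(1-\cos^2\Theta)=2\sin^2\Theta$, where the middle inequality holds because $\cos\Theta\ge 0$; taking square roots and inserting the first bound gives $\|\tilde v-v\|_2\le\sqrt2\,\sin\Theta(\tilde v,v)\le 2\sqrt2\,\|E\|_2/\delta$. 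The only delicate point, and the one I would watch most carefully, is the factor-of-two bookkeeping: the direct rearrangement only produces $\|E\|_2/(\delta-\|E\|_2)$, so the case split is precisely what converts a gap measured against the \emph{perturbed} spectrum into the stated bound against the \emph{unperturbed} gap $\delta$.
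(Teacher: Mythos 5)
Your proof is correct, and it is worth noting that the paper itself offers no proof of this statement at all: it is imported verbatim as Corollary~3 of \citet{yu2015}, so there is nothing internal to compare against step by step. Your argument is the standard self-contained projection proof, and every step checks out: the gap hypothesis forces $\lambda_j$ to be simple, so taking $v=u_j$ in an orthonormal eigenbasis is legitimate; the invariance of $v^\perp$ gives $AP=PA$; the lower bound $\|(A-\lambda_j \idm)w\|_2\geq\delta\|w\|_2$ on $v^\perp$ is exactly right; and the chain $\delta\|P\tilde v\|_2\leq|\tilde\lambda_j-\lambda_j|\,\|P\tilde v\|_2+\|E\|_2$ combined with Weyl (which the paper states as its own Theorem, so you are consistent with its toolkit) yields $(\delta-\|E\|_2)\|P\tilde v\|_2\leq\|E\|_2$. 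The case split at $\|E\|_2\geq\delta/2$ is precisely the device that converts the raw bound $\|E\|_2/(\delta-\|E\|_2)$ into the clean unperturbed-gap bound with constant $2$, and your identity $\|\tilde v-v\|_2^2=2(1-\cos\Theta)\leq 2\sin^2\Theta$ (valid because $\cos\Theta\geq 0$ under the sign convention $\tilde v^Tv\geq 0$) delivers the $2\sqrt{2}$ constant. Compared with the cited source, your route is narrower but cheaper: \citet{yu2015} prove a multi-dimensional $\sin\Theta$ theorem with the sharper constant $\min(2,\,2\|E\|_2/\delta)$-type refinements and eigenspace (not just eigenvector) perturbations, whereas your argument handles only the single-eigenvector case but uses nothing beyond Weyl's inequality and orthogonal projection, which is exactly the generality the paper needs for its Fiedler-vector analysis. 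The only point you should make explicit is the implicit normalization $\|v\|_2=\|\tilde v\|_2=1$, which the theorem statement omits but which both your $\sin\Theta(\tilde v,v)=\|P\tilde v\|_2$ identity and the $\ell_2$ bound require (and which holds in the paper's application, where Fiedler vectors are normalized).
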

When analyzing the perturbation of the Fiedler vector $f$, we may always reverse the sign of $\tilde f$ such that $\tilde f^T f \geq 0$ and obtain
\[
\| \tilde f-f \|_2 \leq \frac{2\sqrt{2}\|\tilde L-L\|_2}{\min(\lambda_{2}-\lambda_{1}, \lambda_{3}-\lambda_{2})}.
\]

\lemmaboundRf*

\begin{proof}
The proof is very much similar to the proof of Lemma~\ref{lemma:boundDegree}.
Let $R=\tilde S- S$.
We have
\[
R_{ij}=\sum_{k=1}^n C_{ik}C_{jk} \left(\frac{B_{ik}B_{jk}}{q^2(2p-1)^2}-1\right).
\]
Therefore, let $\delta=Rf$
\[
\delta_i=\sum_{j=1}^n R_{ij} f_j= \sum_{j=1}^n \sum_{k=1}^n C_{ik}C_{jk} \left(\frac{B_{ik}B_{jk}}{q^2(2p-1)^2}-1 \right)f_j.
\]
Notice that we can arbitrarily fix the diagonal values of $R$ to zeros. Indeed, the similarity between an element and itself should be a constant by convention, which leads to $R_{ii}=\tilde S_{ii}- S_{ii}=0$ for all items $i$. Hence we could take $j\neq i $ in the definition of $d_i$, and we can consider $B_{ik}$ independent of $B_{jk}$ in the associated summation.

We first obtain a concentration inequality for each $\delta_i$. We will then use a union bound to bound $\|\delta \|_{\infty}=\max |\delta_i|$. Notice that
\begin{eqnarray*}
\delta_i &=& \sum_{j=1}^n \sum_{k=1}^n C_{ik}C_{jk} \left(\frac{B_{ik}B_{jk}}{q^2(2p-1)^2}-1\right)f_j \\
&=& \sum_{k=1}^n \left( \frac{C_{ik}B_{ik}}{q(2p-1)} \sum_{j=1}^n C_{jk} \left(\frac{B_{jk}}{q(2p-1)}-1\right)f_j \right)
+  \sum_{k=1}^n \sum_{j=1}^n C_{ik}C_{jk} \left(\frac{B_{ik}}{q(2p-1)}-1 \right)f_j. 
\end{eqnarray*}
The first term is quadratic while the second is linear, both terms have mean zero since the $B_{ik}$ are independent of  the $B_{jk}$.
We begin by bounding the quadratic term. Let $X_{jk}= C_{jk} (\frac{1}{q(2p-1)}B_{jk}-1)f_j$.
We have 
$$\Expect(X_{jk})=f_jC_{jk} (\frac{qp-q(1-p)}{q(2p-1)}-1)=0,$$

$$\var(X_{jk})=\frac{f_j^2\var(B_{jk})}{q^2(2p-1)^2}=\frac{f_j^2}{q^2(2p-1)^2}(q-q^2(2p-1)^2)\leq  \frac{f_j^2}{q(2p-1)^2},$$

$$|X_{jk}|=|f_j| |\frac{B_{jk}}{q(2p-1)}-1|  \leq  \frac{2|f_j|}{q(2p-1)}\leq  \frac{2\|f\|_{\infty}}{q(2p-1)^2}.$$
From corollary~\ref{cor:boundMaxFiedler} $\|f\|_{\infty}\leq 2/\sqrt{n}$. Moreover $\sum_{j=0}^n f_j^2=1$ since $f$ is an eigenvector.
Hence, by applying Bernstein inequality we get for any $t>0$
\BEQ
\label{eq:Bernstein5}
\Prob \left(|\sum_{j=1}^n X_{jk}|>t \right)\leq 2 \exp \left( \frac{- q(2p-1)^2t^2}{2(1+2 t/(3\sqrt{n}))} \right)\leq 2 \exp \left( \frac{- q(2p-1)^2t^2n}{2(n+\sqrt{n} t)} \right).
\EEQ
The rest of the proof is identical to the proof of Lemma~\ref{lemma:boundDegree}, replacing $t$ by $\sqrt{n}t$.
\end{proof}

\lemmaboundLinfFiedler*
\begin{proof}
Notice that by definition
$\tilde L \tilde f =\tilde \lambda_2 \tilde f$ and $Lf=\lambda_2 f$. Hence for $\tilde \lambda_2 >0$
\begin{eqnarray*}
\tilde f -f &=& \frac{\tilde L \tilde f}{\tilde \lambda_2} -f \\
&=& \frac{\tilde L \tilde f - Lf}{\tilde \lambda_2} + \frac{(\lambda_2 - \tilde \lambda_2)f}{\tilde \lambda_2}.
\end{eqnarray*}
Moreover
\begin{eqnarray*}
\tilde L \tilde f - Lf & = & (\idm - \tilde D^{-1}\tilde S)\tilde f - (\idm -D^{-1}S)f \\
& = & (\tilde f  -  f) + D^{-1} S f - \tilde D^{-1}\tilde S \tilde f \\
& = & (\tilde f  -  f) + D^{-1} S f - \tilde D^{-1}\tilde S  f + \tilde D^{-1}\tilde S  f - \tilde D^{-1}\tilde S  \tilde f \\
& = & (\tilde f  -  f) + (D^{-1} S - \tilde D^{-1}\tilde S)f + \tilde D^{-1}\tilde S (f - \tilde f) 
\end{eqnarray*}
Hence
\BEQ
(\idm(\tilde \lambda_2-1)+\tilde D^{-1}\tilde S)(\tilde f-f)  =  (D^{-1} S- \tilde D^{-1}\tilde S+ (\lambda_2 - \tilde \lambda_2)\idm)f .
\EEQ
Writing $S_i$ the $i^{th}$ row of $S$ and $d_i$ the degree of row $i$, using the triangle inequality, we deduce that
\BEQ
| \tilde f_i-f_i | \leq  \frac{1}{|\tilde \lambda_2 -1|}\left(|(d_i^{-1} S_i- \tilde d_i^{-1}\tilde S_i) f| + |\lambda_2 - \tilde \lambda_2| |f_i| + |\tilde d_i^{-1}\tilde S_i (\tilde f-f)| \right).
\EEQ

We will now bound each term separately. Define
\[\BA{l}
\mbox{Denom} =|\tilde \lambda_2 -1|,\\
\mbox{Num1} =|(d_i^{-1} S_i- \tilde d_i^{-1}\tilde S_i) f|,\\
\mbox{Num2} = |\lambda_2 - \tilde \lambda_2| |f_i|,\\
\mbox{Num3} =|\tilde d_i^{-1}\tilde S_i (\tilde f-f)|. 
\EA\]

\paragraph{\textbf{Bounding Denom}}
First notice that using Weyl's inequality and equation \eqref{eq:boundLaplacianPert}  (\cf~proof of Theorem~\ref{thm:boundFiedlerPerturbation}), we have with probability at least $1-2/n$
$|\tilde \lambda_2 -\lambda_2| \leq \| L_R\|_2 \leq \frac{c \mu}{\sqrt{\log n}}$. Therefore there exists an absolute constant $c$ such that with probability at least $1-2/n$
\[
|\tilde \lambda_2 -1| > c.
\]
We now proceed with the numerator terms.
\paragraph{\textbf{Bounding Num2}}
Using Weyl's inequality, corollary~\ref{cor:boundMaxFiedler} and equation \eqref{eq:boundLaplacianPert}  (\cf~proof of Theorem~\ref{thm:boundFiedlerPerturbation}), we deduce that w.h.p.
\[
|\lambda_2 - \tilde \lambda_2| |f_i| | \leq \frac {c \mu }{\sqrt{n \log n}},
\]
where $c$ is an absolute constant.
\paragraph{\textbf{Bounding Num1}}
We now bound $|d_i^{-1} S_i- \tilde d_i^{-1}\tilde S_i|$.
We have
\begin{eqnarray*}
|(\tilde d_i^{-1}\tilde S_i - d_i^{-1} S_i )f|
& = & |(\tilde d_i^{-1}\tilde S_i - \tilde d_i^{-1}S_i  +  \tilde d_i^{-1}S_i  -  d_i^{-1}S_i)f | \\
& \leq & |\tilde d_i^{-1}| |(\tilde S_i -S_i)f| + |(\tilde d_i^{-1}-d_i^{-1})S_i f|.
\end{eqnarray*}
Using equation \eqref{eq:boundPerturbDegree} from the proof of Theorem~\ref{thm:boundFiedlerPerturbation}, we have w.h.p.
$|\tilde d_i^{-1}-d_i^{-1}|\leq \frac{c\mu}{n^2\sqrt{\log n}}$. Moreover
\[
|\tilde d_i^{-1}|\leq  |\tilde d_i^{-1} - d_i^{-1}| + |d_i^{-1}| \leq \frac{c_1 \mu }{n^2\sqrt{\log n}} + \frac{c_2}{n^2} \leq \frac{c}{n^2}
\] w.h.p., where $c$ is an absolute constant.
Therefore
\begin{eqnarray}
\label{eq:boundNum1.1}
|(\tilde d_i^{-1}\tilde S_i - d_i^{-1} S_i )f|
& \leq &  \frac{c \mu}{n^2\sqrt{\log n}} |S_i f| + \frac{c}{n^2}  |(\tilde S_i -S_i)f|  \mbox{ w.h.p.}
\end{eqnarray}
Using the definition of $S$ and corollary~\ref{cor:boundMaxFiedler}, we get
\BEQ
\label{eq:boundNum1.2}
 |S_i f| \leq \sum_{j=1}^{n} S_{ij} \max_i |f_i| \leq c \frac{n^2}{\sqrt{n}} \leq  c n^{3/2},
\EEQ
where $c$ is an absolute constant. Using Lemma~\ref{lemma:boundRf}, we get 
\BEQ
\label{eq:boundNum1.3}
|(\tilde S_i -S_i)f| \leq \frac {3 \mu n^{3/2} }{\sqrt{\log n}} \mbox{ w.h.p.}
\EEQ 
Combining \eqref{eq:boundNum1.1}, \eqref{eq:boundNum1.2} and \eqref{eq:boundNum1.3} we deduce that there exists a constant $c$ such that 
\[
|(\tilde d_i^{-1}\tilde S_i - d_i^{-1} S_i )f| \leq \frac {c \mu }{\sqrt{n \log n}} \mbox{ w.h.p.}
\]

\paragraph{\textbf{Bounding Num3}}
Finally we bound the remaining term $|\tilde d_i^{-1}\tilde S_i (\tilde f-f)|$.
By Cauchy-Schwartz inequality we have,
\[
|\tilde d_i^{-1}\tilde S_i (\tilde f-f)| \leq |\tilde d_i^{-1}| \|\tilde S_i\|_2 \|\tilde f -f\|_2.
\]
Notice that
\[
\|\tilde S_i\|_2 \leq \|S_i\|_2 + \|\tilde S_i - S_i\|_2 \leq  \|S_i\|_2 + \|\tilde S - S\|_2.
\]
Since $\|S_i\|_2^2 \leq \|S_1\|_2^2 \leq \frac{n(n+1)(2n+1)}{6}$ and $q>\frac{\log^4 n}{\mu^2(2p-1)^2 \sqrt{n}}$ we deduce from Lemma~\ref{lemma:boundSimilarityPert} that w.h.p. $\|\tilde S_i\|_2 \leq \frac {c \mu n^{7/4} }{\sqrt{\log n}}$, where $c$ is an absolute constant, for $n$ large enough.
Moreover, as shown above, $|\tilde d_i^{-1}|\leq \frac{c}{n^2}$ and we also get from Theorem~\ref{thm:boundFiedlerPerturbation} that $\|\tilde f -f\|_2 \leq   \frac{c\mu}{n^{1/4}\sqrt{\log n}} $ w.h.p.
Hence we have
\[
|\tilde d_i^{-1}\tilde S_i (\tilde f-f)| \leq \frac {c \mu^2 n^{7/4} }{n^2 n^{1/4}(\log n)}
\leq \frac {c \mu}{\sqrt{n \log n}} \mbox{ w.h.p.},
\]
where $c$ is an absolute constant. Combining bounds on the denominator and numerator terms yields the desired result.
\end{proof}

\subsection{Numerical experiments with normalized Laplacian}

As shown in figure~\ref{fig:impactNbObsNoiseNormalized}, results are very similar to those of \ref{alg:SerialRank} with unnormalized Laplacian. We lose a bit of performance in terms of robustness to corrupted comparisons.

\begin{figure}[h]
\begin{center}
\begin{tabular}{ccc}
\psfrag{Kendall}[b][t]{Kendall $\tau$}
\psfrag{corrupted}[t][b]{\% corrupted}
\includegraphics[scale=0.33]{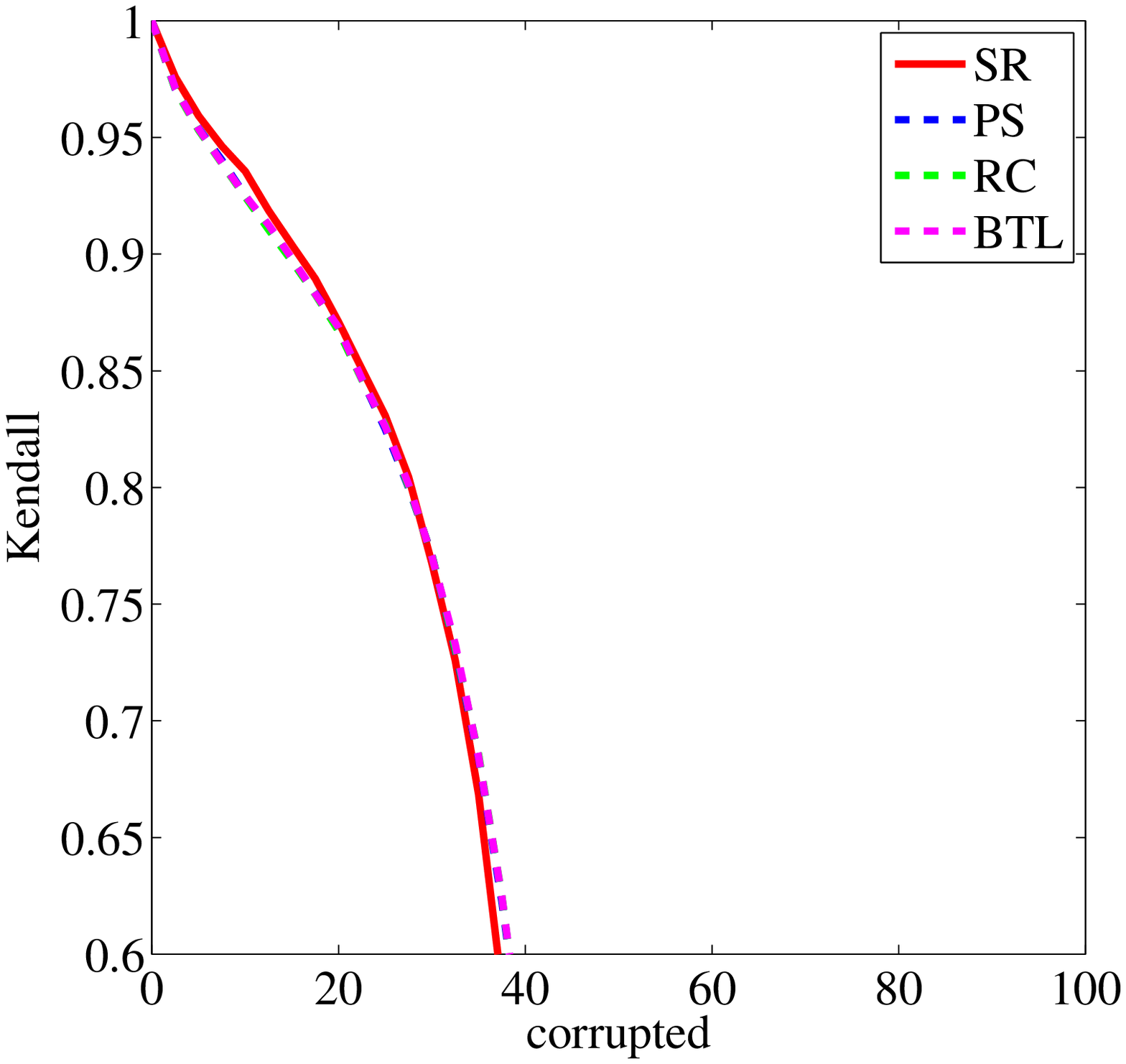} & &
\psfrag{Kendall}[b][t]{Kendall $\tau$}
\psfrag{missing}[t][b]{\% missing}
\includegraphics[scale=0.33]{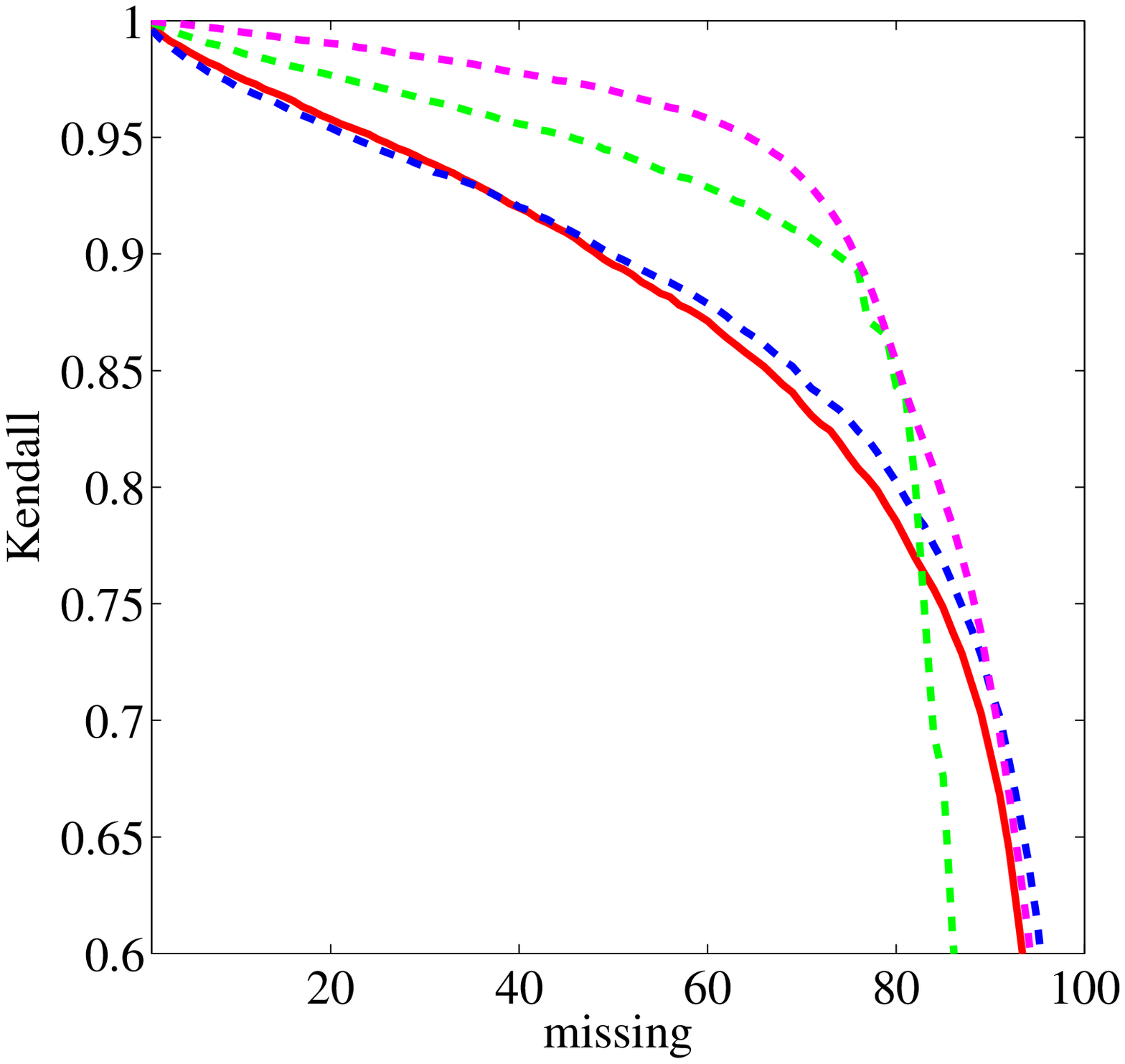} \\
\psfrag{Kendall}[b][t]{Kendall $\tau$}
\psfrag{missing}[t][b]{\% missing}
\includegraphics[scale=0.33]{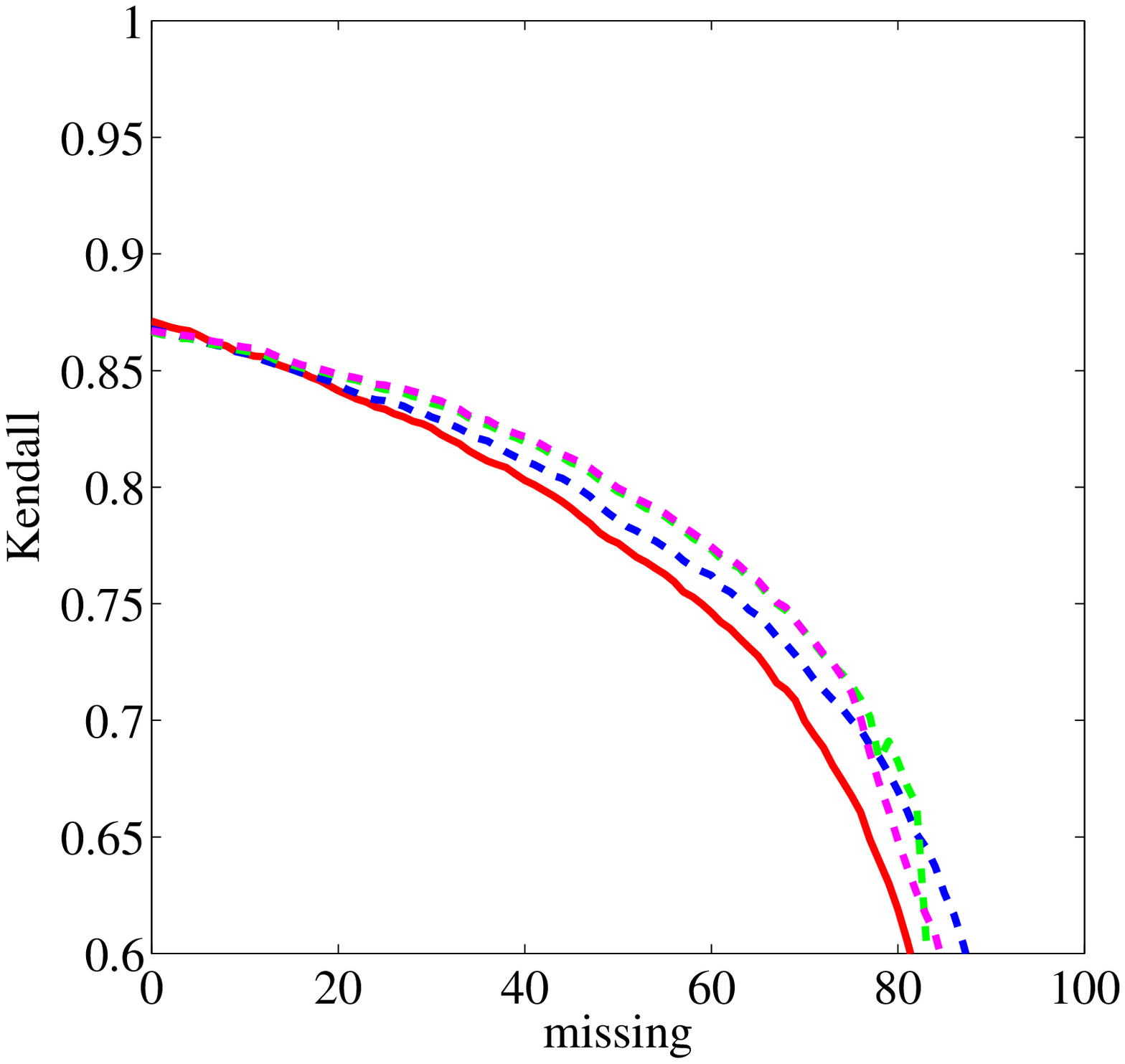} & &
\psfrag{Kendall}[b][t]{Kendall $\tau$}
\psfrag{m}[t][b]{Range $m$}
\includegraphics[scale=0.33]{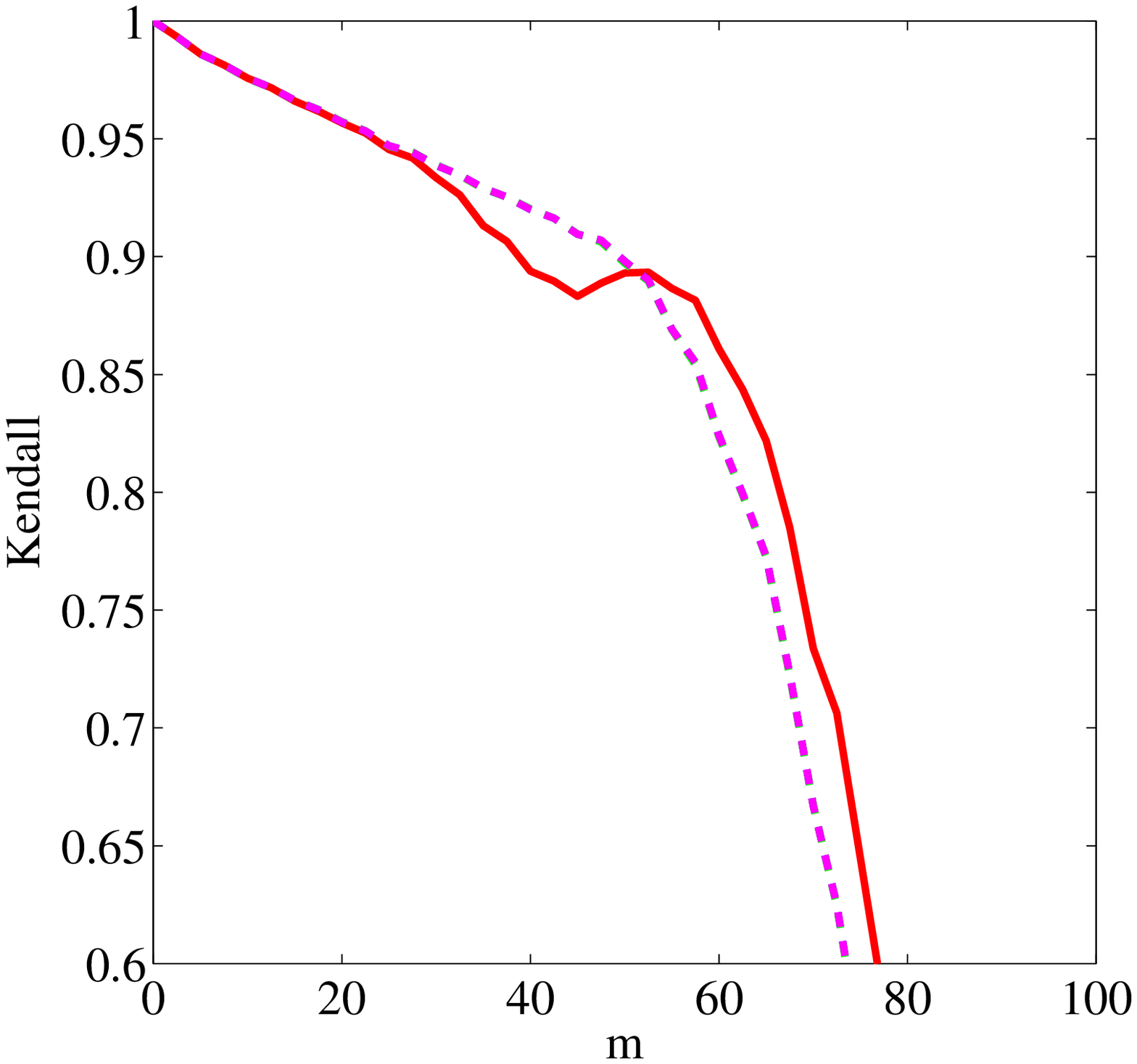}
\end{tabular}
\caption{Kendall $\tau$ (higher is better) for \ref{alg:SerialRank} with normalized Laplacian (SR, full red line), row-sum (PS, \citep{Waut13} dashed blue line), rank centrality (RC \citep{Nega12} dashed green line), and maximum likelihood (BTL \citep{Brad52}, dashed magenta line). In the first synthetic dataset, we vary the proportion of corrupted comparisons {\em (top left)}, the proportion of observed comparisons {\em (top right)} and the proportion of observed comparisons, with 20\% of comparisons being corrupted {\em (bottom left)}. We also vary the parameter $m$ in the second synthetic dataset {\em (bottom right)}. \label{fig:impactNbObsNoiseNormalized}}
\end{center}
 \end{figure}
 
\subsection{Spectrum of the unnormalized Laplacian matrix}

\subsubsection{Asymptotic Fiedler value and Fiedler vector}

We use results on the convergence of Laplacian operators to provide a description of the spectrum of the unnormalized Laplacian in \ref{alg:SerialRank}. Following the same analysis as in \citep{Von-08} we can prove that asymptotically, once normalized by $n^2$, apart from the first and second eigenvalue, the spectrum of the Laplacian matrix is contained in the interval $[0.5, 0.75]$. Moreover, we can characterize the eigenfunctions of the limit Laplacian operator by a differential equation, enabling to have an asymptotic approximation for the Fiedler vector. 

Taking the same notations as in \citep{Von-08}
we have here $k(x,y)=1-|x-y|$.
The degree function is
\[
d(x)=\int_0^1 k(x,y) d\Prob(y)=\int_0^1 k(x,y) d(y)
\]
(samples are uniformly ranked).
Simple calculations give
\[
d(x)=-x^2+x+1/2.
\]
We deduce that the range of $d$ is $[0.5,0.75]$. Interesting eigenvectors (i.e., here the second eigenvector) are not in this range. We can also characterize eigenfunctions $f$ and corresponding eigenvalues $\lambda$ by
\begin{eqnarray*}
& & Uf(x) =\lambda f(x) \ \  \forall x \in [0,1] \\
& \Leftrightarrow & Md f(x) -Sf(x)=\lambda f(x) \\
& \Leftrightarrow &  d(x)f(x)-\int_0^1 k(x,y) f(y) d(y)=\lambda f(x)\\
& \Leftrightarrow & f(x)(-x^2+x+1/2)-\int_0^1 (1-|x-y|) f(y) d(y)=\lambda f(x)
\end{eqnarray*}
Differentiating twice we get
\begin{equation}
\label{eqdiffFiedler}
f''(x)(1/2-\lambda+x-x^2)+2f'(x)(1-2x)=0.
\end{equation}
The asymptotic expression for the Fiedler vector is then a solution to this differential equation, with $\lambda < 0.5$. Let $\gamma_1$ and $\gamma_2$ be the roots of $(1/2-\lambda+x-x^2)$ (with $\gamma_1<\gamma_2$). We can suppose that $x\in(\gamma_1, \gamma_2)$ since the degree function is nonnegative.  Simple calculations show that 
\[
f'(x)=\frac{A}{(x-\gamma_1)^2(x-\gamma_2)^2}
\]
is solution to~\eqref{eqdiffFiedler}, where $A$ is a constant.
Now we note that
\BEAS
\frac{1}{(x-\gamma_1)^2(x-\gamma_2)^2}&=&\frac{1}{(\gamma_1-\gamma_2)^2(\gamma_2-x)^2}+\frac{1}{(\gamma_1-\gamma_2)^2(\gamma_1-x)^2}\\
&&-\frac{2}{(\gamma_1-\gamma_2)^3(\gamma_2-x)}+\frac{2}{(\gamma_1-\gamma_2)^3(\gamma_1-x)}.
\EEAS
We deduce that the solution $f$ to ~\eqref{eqdiffFiedler} satisfies
\[
f(x)=B+\frac{A}{(\gamma_1-\gamma_2)^2}\left(\frac{1}{\gamma_1-x}+\frac{1}{\gamma_2-x}\right)-\frac{2A}{(\gamma_1-\gamma_2)^3}\left(\log(x-\gamma_1) - \log(\gamma_2-x) \right),
\]
where $A$ and $B$ are two constants.
Since $f$ is orthogonal to the unitary function for $x \in (0,1)$, we must have $f(1/2)=0$, hence $B$=0 (we use the fact that $\gamma_1=\frac{1-\sqrt{1+4\alpha}}{2}$ and $\gamma_2=\frac{1+\sqrt{1+4\alpha}}{2}$, where $\alpha=1/2-\lambda$).

As shown in figure~\ref{fig:asFiedler} , the asymptotic expression for the Fiedler vector is very accurate numerically, even for small values of $n$. The asymptotic Fiedler value is also very accurate (2 digits precision for $n=10$, once normalized by $n^2$).

\begin{figure}[t]
\begin{center}
\begin{tabular}{cc}
\includegraphics[scale=0.4]{./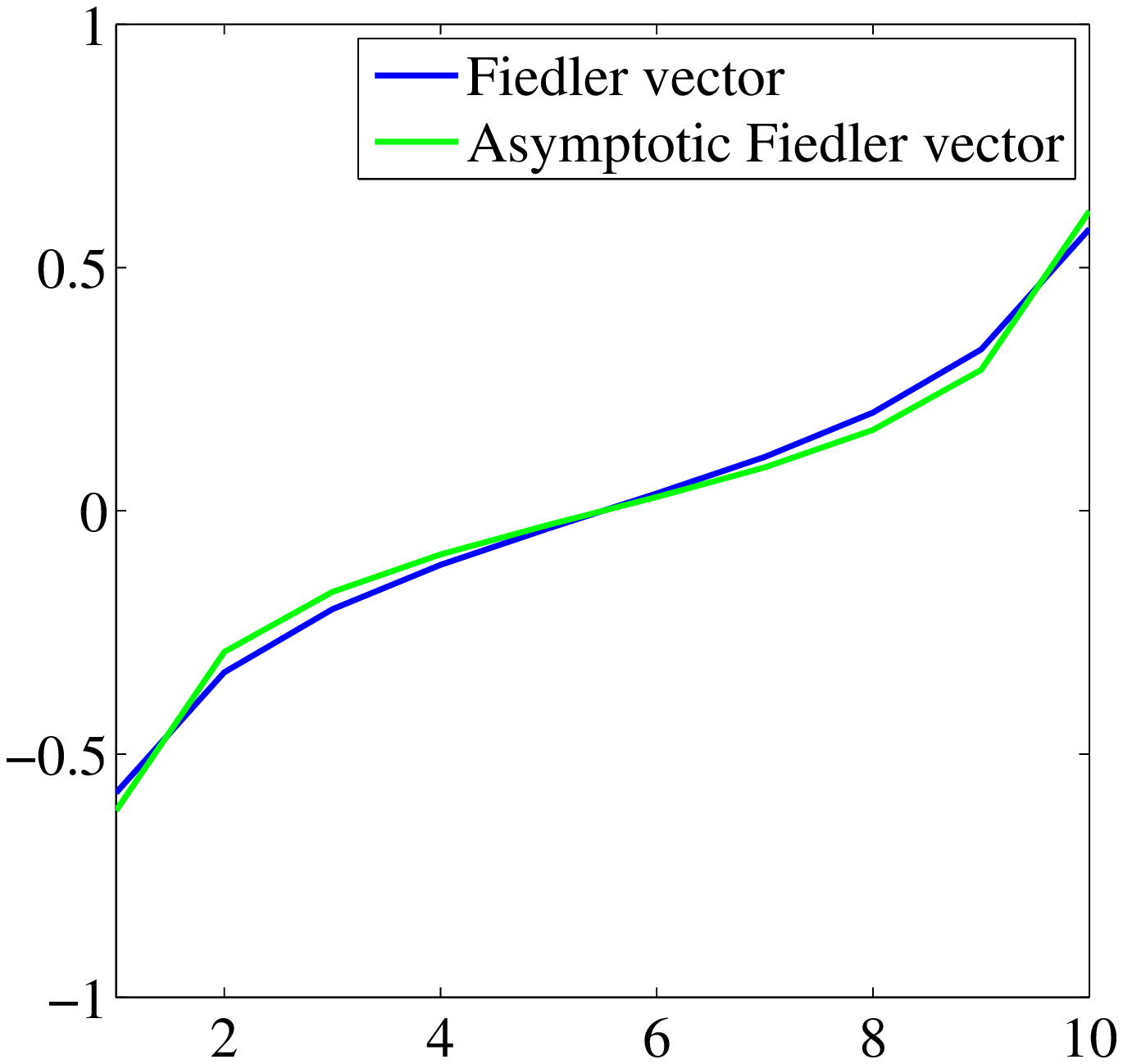}&
\includegraphics[scale=0.4]{./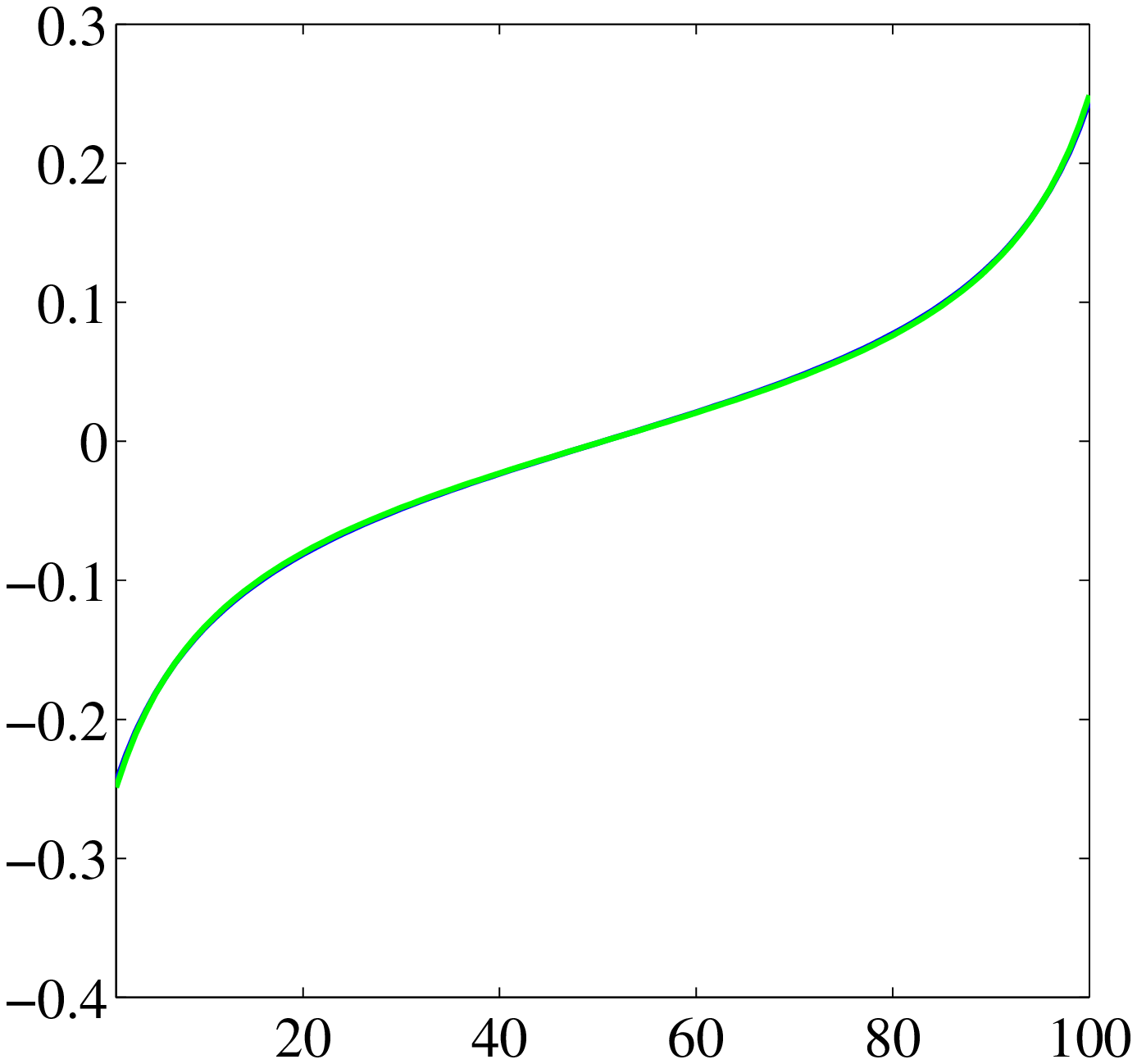}
\end{tabular}
\caption{Comparison between the asymptotic analytical expression of the Fiedler vector and the numeric values obtained from eigenvalue decomposition, for $n=10$ (\emph{left}) and $n=100$ (\emph{right}).
\label{fig:asFiedler}}
\end{center}
\end{figure}

\subsubsection{Bounding the eigengap}

We now give two simple propositions on the Fiedler value and the third eigenvalue of the Laplacian matrix, which enable us to bound the eigengap between the second and the third eigenvalues.

\begin{proposition}
\label{prop:FiedlerUppBound}
Given all comparisons indexed by their true ranking, let $\lambda_2$ be the Fiedler value of $S^{\mathrm{match}}$, we have
\[
\lambda_2\leq\frac{2}{5}(n^2+1).
\]
\end{proposition}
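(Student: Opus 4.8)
The plan is to bound the Fiedler value from above through the Rayleigh quotient characterization in Definition~\ref{def:Fiedler}. Since the Fiedler value of $S^\mathrm{match}$ is
\[
\lambda_2 = \min\{y^T L_S\, y : y\in\reals^n,\ y^T\ones = 0,\ \|y\|_2=1\},
\]
where $L_S = \diag(S^\mathrm{match}\ones) - S^\mathrm{match}$, every vector $y$ with $y^T\ones=0$ furnishes the upper bound $\lambda_2 \leq y^T L_S\, y / \|y\|_2^2$. The natural test vector is the linear one, $x_i = i - \tfrac{n+1}{2}$, which is exactly the Fiedler vector of the \emph{normalized} Laplacian found in Proposition~\ref{prop:FiedlerVecExpr}. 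It has mean zero, so $x^T\ones=0$ and $x$ is feasible.

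Next I would evaluate the two pieces of the quotient in closed form. For the numerator I use the standard identity $x^T L_S\, x = \tfrac12\sum_{i,j} S^\mathrm{match}_{i,j}(x_i-x_j)^2$, valid for any symmetric $S^\mathrm{match}$ and its Laplacian. Since all comparisons are observed and ordered by the true ranking, Proposition~\ref{prop:s-match-R} gives $S^\mathrm{match}_{i,j} = n - |i-j|$, while $x_i - x_j = i-j$. Grouping the pairs $(i,j)$ by the value $k=i-j$, of which there are $n-|k|$ for each $k$ (and the term $k=0$ vanishes), collapses the double sum to
\[
x^T L_S\, x = \sum_{k=1}^{n-1} (n-k)^2\, k^2 .
\]

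I would then expand $(n-k)^2 k^2 = n^2 k^2 - 2n k^3 + k^4$ and apply the power-sum formulas for $\sum k^2$, $\sum k^3$, $\sum k^4$ over $1\le k\le n-1$. After factoring out $\tfrac{n(n-1)}{30}$, the remaining bracket simplifies to $n^3+n^2+n+1 = (n+1)(n^2+1)$, giving the clean closed form
\[
x^T L_S\, x = \frac{n(n-1)(n+1)(n^2+1)}{30}.
\]
For the denominator I reuse the computation from Corollary~\ref{cor:boundMaxFiedler}, namely $\|x\|_2^2 = \sum_{i=1}^n\bigl(i - \tfrac{n+1}{2}\bigr)^2 = \tfrac{n^3-n}{12} = \tfrac{n(n-1)(n+1)}{12}$. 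Dividing, the common factor $n(n-1)(n+1)$ cancels and leaves
\[
\lambda_2 \;\leq\; \frac{12}{30}\,(n^2+1) \;=\; \frac{2}{5}(n^2+1),
\]
which is the claim.

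The only real obstacle is the exact evaluation of $\sum_{k=1}^{n-1}(n-k)^2 k^2$, which needs the fourth-power sum formula and a short but careful algebraic simplification. The pleasant feature is that the cubic bracket factors cleanly as $(n+1)(n^2+1)$, so the bound is attained \emph{exactly} by the linear test vector with no slack lost; this is precisely why the statement carries the sharp constant $2/5$ together with the additive $+1$, rather than a looser $O(n^2)$ estimate.
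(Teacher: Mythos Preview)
Your proposal is correct and follows exactly the approach of the paper: plug the mean-zero linear test vector into the Rayleigh quotient for $L_S$ and evaluate. The paper simply asserts ``simple calculations give $x^TLx=\tfrac{2}{5}(n^2+1)$'' for the normalized vector, while you supply the missing algebra --- the reduction of $\tfrac12\sum_{i,j}(n-|i-j|)(i-j)^2$ to $\sum_{k=1}^{n-1}(n-k)^2k^2$, its evaluation via power sums to $\tfrac{n(n-1)(n+1)(n^2+1)}{30}$, and the division by $\|x\|_2^2=\tfrac{n(n-1)(n+1)}{12}$ --- all of which is accurate.
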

\begin{proof}
Consider the vector $x$ whose elements are uniformly spaced and such that $x^T\ones=0$ and $\|x\|_2=1$. $x$ is a feasible solution to the Fiedler eigenvalue minimization problem. Therefore,
\[
 \lambda_2 \leq x^TLx.
\]
Simple calculations give $x^TLx=\frac{2}{5}(n^2+1)$.
\end{proof}

Numerically the bound is very close to the true Fiedler value: $\lambda_2/n^2 \approx 0.39$ and $2/5=0.4$.

\begin{proposition}
\label{prop:thirdEigenVec}
Given all comparisons indexed by their true ranking, the vector $v=[\alpha, -\beta, \ldots, -\beta, \alpha]^T$ where $\alpha$ and $\beta$ are such that $v^T\ones=0$ and $\|v\|_2=1$ is an eigenvector of the Laplacian matrix $L$ of $S^{\mathrm{match}}$ The corresponding eigenvalue is $\lambda =n(n+1)/2$.
\end{proposition}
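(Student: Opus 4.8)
The plan is to verify the eigenvalue equation $Lv=\lambda v$ coordinatewise, exploiting the explicit structure $S_{i,k}=n-|i-k|$ from Proposition~\ref{prop:s-match-R} together with the orthogonality $v^T\ones=0$. Writing $L=D-S$ with $D=\diag(S\ones)$, I would first observe that for every $i$,
\[
(Sv)_i = \sum_{k=1}^n (n-|i-k|)\, v_k = n\,(\ones^T v) - \sum_{k=1}^n |i-k|\, v_k = -\sum_{k=1}^n |i-k|\, v_k,
\]
since $\ones^T v = 0$. Hence $(Lv)_i = d_i v_i + \sum_{k=1}^n |i-k|\, v_k$, where $d_i = \frac{n(n-1)}{2} + i(n-i+1)$ is the degree already computed in the proof of Proposition~\ref{prop:FiedlerVecExpr}. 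The eigenvalue equation at coordinate $i$ therefore reduces to the scalar identity $\sum_{k=1}^n |i-k|\, v_k = (\lambda-d_i)\, v_i$, and a short calculation gives $\lambda - d_i = \frac{n(n+1)}{2} - d_i = n - i(n-i+1) = (i-1)(i-n)$. So it suffices to prove $\sum_{k=1}^n |i-k|\, v_k = (i-1)(i-n)\, v_i$ for every $i$.

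Next I would split into two cases according to whether $i$ is an endpoint or an interior index, using $v_1=v_n=\alpha$ and $v_k=-\beta$ for $2\le k\le n-1$, together with the relation $2\alpha=(n-2)\beta$ forced by $v^T\ones=0$. For $i=1$ (and symmetrically $i=n$) the right-hand side vanishes, and the left-hand side is the single arithmetic sum $\sum_{k=2}^{n}(k-1)v_k = (n-1)\alpha - \beta\,\frac{(n-2)(n-1)}{2} = (n-1)\bigl(\alpha - \tfrac{n-2}{2}\beta\bigr)$, which is zero by the constraint. For interior $2\le i\le n-1$ the endpoint terms $k\in\{1,n\}$ contribute $\alpha\bigl((i-1)+(n-i)\bigr)=(n-1)\alpha$, while the bulk indices $2\le k\le n-1$ contribute $-\beta\bigl(\frac{(i-2)(i-1)}{2}+\frac{(n-1-i)(n-i)}{2}\bigr)$ after evaluating the two truncated arithmetic series on either side of $k=i$. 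Substituting $\alpha=\frac{(n-2)\beta}{2}$ and simplifying, both sides collapse to $\beta\bigl(-i^2+(n+1)i-n\bigr)=(i-1)(i-n)v_i$, as required. This simultaneously pins down the eigenvalue as $\lambda=n(n+1)/2$, and the normalization $\|v\|_2=1$ merely fixes the scale, playing no role in the eigenvector property.

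The computation is entirely elementary, so there is no genuine conceptual obstacle; the only points demanding care are two bookkeeping steps. The first is the reduction $\lambda-d_i=(i-1)(i-n)$, where one must carry the degree formula correctly and recognize the factorization. The second is getting the summation limits right in the interior case, in particular separating the endpoint terms from the bulk and correctly evaluating the two one-sided sums $\frac{(i-2)(i-1)}{2}$ and $\frac{(n-1-i)(n-i)}{2}$, before invoking $2\alpha=(n-2)\beta$ to obtain the exact cancellation.
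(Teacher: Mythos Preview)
Your proof is correct and is precisely the verification the paper has in mind: the paper's own proof is the single line ``Check that $Lv=\lambda v$,'' and you have carried out this check in full, using the explicit form $S_{i,k}=n-|i-k|$, the degree formula $d_i=\tfrac{n(n-1)}{2}+i(n-i+1)$, the orthogonality $\ones^Tv=0$, and the constraint $2\alpha=(n-2)\beta$ to verify the eigenvalue equation coordinatewise at the endpoints and in the interior.
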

\begin{proof}
Check that $Lv=\lambda v$.
\end{proof}

\subsection{Other choices of similarities}

The results in this paper shows that forming a similarity matrix (R-matrix) from pairwise preferences will produce a valid ranking algorithm. In what follows, we detail a few options extending the results of Section~\ref{sec:simMatrix}.

\subsubsection{Cardinal comparisons}
When input comparisons take continuous values between -1 and 1, several choice of similarities can be made.
First possibility is to use $S^{\rm glm}$. An other option is to directly provide $1- \rm{abs} (C)$ as a similarity to \ref{alg:SerialRank}. This option has a much better computational cost.

\subsubsection{Adjusting contrast in $S^{\rm match}$}
Instead of providing $S^{\rm match}$ to \ref{alg:SerialRank}, we can change the ``contrast" of the similarity, i.e., take the similarity whose elements are powers of the elements of  $S^{\rm match}$.
\[
S^{\rm contrast}_{i,j}=(S^{\rm match}_{i,j})^{\alpha}.
\]
This construction gives slightly better results in terms of robustness to noise on synthetic datasets.

\subsection{Hierarchical Ranking}
In a large dataset, the goal may be to rank only a subset of top items. In this case, we can first perform spectral ranking, then refine the ranking of the top set of items using either the \ref{alg:SerialRank} algorithm on the top comparison submatrix, or another seriation algorithm such as the convex relaxation in \citep{Foge13}. This last method also allows us to solve semi-supervised ranking problems, given additional information on the structure of the solution.

\subsection*{Acknowledgements}
AA is at CNRS, at the D\'epartement d'Informatique at \'Ecole Normale Sup\'erieure in Paris, INRIA - Sierra team, PSL Research University. The authors would like to acknowledge support from a starting grant from the European Research Council (ERC project SIPA), the MSR-Inria Joint Centre, as well as support from the chaire {\em \'Economie des nouvelles donn\'ees}, the {\em data science} joint research initiative with the {\em fonds AXA pour la recherche} and a gift from Soci\'et\'e G\'en\'erale Cross Asset Quantitative Research.

\end{document}